\definecolor{cardinal}{rgb}{0.64, 0.0, 0.0}
\tikzset{
    -Latex,auto,node distance =1.2 cm and 1.2 cm,semithick,font=\scriptsize,
    state/.style ={circle, draw, minimum width = 0.6 cm},
    point/.style = {circle, draw, inner sep=0.04cm,fill,node contents={}},
    bidirected/.style={Latex-Latex,dashed},
    el/.style = {inner sep=2pt, align=left, sloped}
}
\newcommand{\dotnode}[4][]{
    \node[circle,draw, minimum width = 0.4 cm,label=#4] (#2) [#3] {$\ $};
    \node (b#2) [below = -2.7mm of #2] {\IfSubStr{#1}{lb}{\color{blue}{$\bullet$}}{\IfSubStr{#1}{rb}{\color{red}{$\bullet$}}{\IfSubStr{#1}{b}{$\bullet$}{}}}};
    \node (t#2) [above = -2.7mm of #2] {\IfSubStr{#1}{lt}{\color{blue}{$\bullet$}}{\IfSubStr{#1}{rt}{\color{red}{$\bullet$}}{\IfSubStr{#1}{t}{$\bullet$}{}}}};
}
\declaretheorem[name=Lemma,numberwithin=section]{lemma}
\declaretheorem[name=Theorem,numberwithin=section]{theorem}
\declaretheorem[name=Definition,numberwithin=section]{definition}
\newcommand{\cmark}{\ding{51}}%
\newcommand{\xmark}{\ding{55}}
\newcommand{\E}{{\rm I\kern-.3em E}}
\newcommand*{\inlineequation}[2][]{%
  \begingroup
    \refstepcounter{equation}%
    \ifx\\#1\\%
    \else
      \label{#1}%
    \fi
    \relpenalty=10000 %
    \binoppenalty=10000 %
    \ensuremath{%
      #2%
    }%
    ~\@eqnnum
  \endgroup
}
\title{Efficient Identification in Linear Structural\\Causal Models with Instrumental Cutsets}
\author{%
  Daniel Kumor\\
  Purdue University\\
  \texttt{dkumor@purdue.edu} \\
  \And
  Bryant Chen \\
  Brex Inc.\\
  \texttt{bryant@brex.com}\\
  \And 
  Elias Bareinboim \\
  Columbia University\\
  \texttt{eb@cs.columbia.edu}\\
}
\begin{document}

\maketitle

\begin{abstract}
One of the most common mistakes made when performing data analysis is attributing causal meaning to regression coefficients.  Formally, a causal effect can only be computed if it is identifiable from a combination of observational data and structural knowledge about the domain under investigation \cite[Ch.~5]{pearlCausalityModelsReasoning2000}. 
 Building on the literature of instrumental variables (IVs), a plethora of methods has been developed to identify causal effects in  linear systems. 
 Almost invariably, however, the most powerful such methods rely on exponential-time procedures. 
 In this paper, we investigate graphical conditions to allow efficient identification in arbitrary linear structural causal models (SCMs). In particular, we develop a method to efficiently
    find unconditioned \textit{instrumental subsets}, which are generalizations of IVs that can be used to tame the complexity of many canonical algorithms found in the literature. 
    Further, we prove that determining whether an effect can be identified with TSID \citep{weihsDeterminantalGeneralizationsInstrumental2017}, 
    a method more powerful than unconditioned instrumental sets and other efficient identification algorithms, is NP-Complete. 
    Finally, building on the idea of flow constraints, we introduce a new and efficient criterion called \textit{Instrumental Cutsets} (IC), which is able to solve for parameters missed by all other existing polynomial-time  algorithms.
\end{abstract}

\section{Introduction}

\begingroup
\setlength\intextsep{0pt}


Predicting the effects of interventions is one of the fundamental tasks in the empirical sciences. 
Controlled experimentation is considered the ``gold standard'' in which one physically intervenes in the system and learn about the corresponding effects. In practice, however, experimentation is not always possible due costs, ethical constraints,  or technical feasibility  --  e.g., a self-driving car should not need to crash to recognize that doing so has negative consequences. 
In such cases, the agent must uniquely determine the effect of an action using observational data and its structural knowledge of the environment. This leads to the problem of \emph{identification} \citep{pearlCausalityModelsReasoning2000,bar:pea16}.  

Structural knowledge is usually represented as a structural causal model (SCM)\footnote{Such models are also referred to as structural equation models, or SEM, in the literature.} \citep{pearlCausalityModelsReasoning2000},
which represents the set of observed and unobserved variables and their corresponding causal relations. 
We focus on the problem of generic identification in linear, acyclic SCMs. In such systems, the value of each observed variable is determined by a linear combination of the values of its direct causes along with a latent error term $\epsilon$. This leads to a system of equations $X = \Lambda^T X + \epsilon$, where $X$ is the vector of variables, $\Lambda^T$ is a lower triangular matrix whose $ij$th element $\lambda_{ij}$ -- called \textit{structural parameter} -- is 0 whenever $x_i$ is not a direct cause of $x_j$, and $\epsilon$ is a vector of latent variables. 

Methods for identification in linear SCMs generally assume that variables are normally distributed \citep{chenGraphicalToolsLinear2014},
meaning that the observational data can be summarized with a covariance matrix $\Sigma$. This covariance matrix can be linked to the underlying structural parameters through the system of equations \inlineequation[eqn:covariance]{\Sigma = XX^T = (I-\Lambda)^{-T}\Omega(I-\Lambda)^{-1}}, 
where $\Omega$'s elements, $\epsilon_{ij}$, represent $\sigma_{\epsilon_i\epsilon_j}$, and $I$ is the identity matrix \citep{foygelHalftrekCriterionGeneric2012}.
The task of causal effect identification in linear SCMs can, therefore, be seen as solving for the target structural parameter 
$\lambda_{ij}$ using \cref{eqn:covariance}. If the parameter can be expressed in terms of $\Sigma$ alone, then it is said to be \emph{generically identifiable}.

Such systems of polynomial equations can be approached directly through the application of Gr\"obner bases \citep{garcia-puenteIdentifyingCausalEffects2010}. In practice, 
however, these methods are doubly-exponential \citep{bardetComplexityGrObner2002} in the number of structural parameters,
and become computationally intractable very quickly, incapable of handling causal graphs with more than 4 or 5 nodes \citep{foygelHalftrekCriterionGeneric2012}.

Identification in linear SCMs has been a topic of great interest for nearly a century \citep{wrightCorrelationCausation1921}, 
including much of the early work in econometrics  \citep{wright1928tariff,fisher1966identification,bowdenInstrumentalVariables1984, bekkerIdentificationEquivalentModels1994}. The computational aspects of the problem, however, have only more recently received attention from computer scientists and statisticians \citep[Ch.~5]{pearlCausalityModelsReasoning2000}. 

\begin{wraptable}[27]{r}{0.4\textwidth}
    \begin{center}
        \begin{threeparttable}
        \begin{tabular}{l | c | c}
            \multicolumn{3}{c}{Identification Power \& Efficiency} \\
            \toprule
            Algorithm & Power & Eff.?\\
            \midrule
            IV\tnote{a} & low & \cmark\\
            cIV\tnote{b} & medium & \cmark\tnote{f}\\
            IS\tnote{c} & medium & \cmark\\
            scIS\tnote{g} & high & \textcolor{cardinal}{? $\rightarrow$ \xmark\tnote{*}}\\
            gIS\tnote{c} & high & ?\tnote{$\dagger$}\\
            HTC\tnote{e} & high & \cmark\\
            gHTC\tnote{h,j} & very high & \textcolor{cardinal}{? $\rightarrow$ \cmark\tnote{$\ddagger$}}\\
            cAV \& AVS\tnote{i} & very high & \textcolor{cardinal}{? $\rightarrow$ \cmark\tnote{$\ddagger$}}\\
            \textcolor{cardinal}{\textbf{Our Method}} & \textcolor{cardinal}{\textbf{very high}} & \textcolor{cardinal}{\cmark}\\
            gAVS\tnote{i} & very high & ?\tnote{$\dagger$}\\
            tsIV \& gHTC\tnote{j} & very high & \textcolor{cardinal}{? $\rightarrow$ \xmark\tnote{*}}\\
            Gr\"obner\tnote{d} & complete & \xmark\\
            \bottomrule
            \end{tabular}
            \begin{tablenotes}
                \tiny
                \textsuperscript{a}\citet{wright1928tariff};
                \textsuperscript{b}\citet{bowdenInstrumentalVariables1984};
                \textsuperscript{c}\citet{britoGeneralizedInstrumentalVariables2002};
                \textsuperscript{d}\citet{garcia-puenteIdentifyingCausalEffects2010};
                \textsuperscript{e}\citet{foygelHalftrekCriterionGeneric2012};
                \textsuperscript{f}\citet{vanderzanderEfficientlyFindingConditional2015};
                \textsuperscript{g}\citet{vanderzanderSearchingGeneralizedInstrumental2016};
                \textsuperscript{h}\citet{chenIdentificationOveridentificationLinear2016};
                \textsuperscript{i}\citet{chenIdentificationModelTesting2017};
                \textsuperscript{j}\citet{weihsDeterminantalGeneralizationsInstrumental2017}.
                \item[$\dagger$] Finding conditioning set for candidates shown to be NP-hard by (g), but complexity of search is open question.\\
                \item[$\ddagger$] Previous algorithms exponential without bounded input degree.\\
                \item[*] We proved NP-Completeness of this method.
            \end{tablenotes}
        \end{threeparttable}
    \end{center}
    \caption{Our contributions in relation to the literature are shown in \textcolor{cardinal}{\textbf{red}}; Ordered roughly by identification power. $?\rightarrow$ represents methods for which we determined complexity in this work.}
    \label{table:comparison}
\end{wraptable}

Since then, there has been a growing body of literature developing successively more sophisticated methods with increasingly stronger identification power i.e., capable of covering a larger spectrum of identifiable effects. 
Deciding whether a certain structural parameter can be identified in polynomial time is currently an open problem. 

The most popular identification method found 
 in the literature today is known as the \textit{instrumental variable} (IV) \citep{wright1928tariff}. A number of extensions of IVs have been proposed, including \emph{conditional IV} (cIV),  \citep{bowdenInstrumentalVariables1984,vanderzanderEfficientlyFindingConditional2015}, unconditioned \emph{instrumental sets} (IS) \citep{debritoGraphicalMethodsIdentification2004}, 
and the \emph{half-trek criterion} (HTC) \citep{foygelHalftrekCriterionGeneric2012}, all of which are accompanied with efficient, polynomial-time algorithms.

In contrast, generalized instrumental sets (gIS) \citep{britoGeneralizedInstrumentalVariables2002} were developed as a graphical criterion, 
without an efficient algorithm. 
\citet{vanderzanderSearchingGeneralizedInstrumental2016} proved that checking existence of conditioning sets that satisfy the gIS given a set of instruments is NP-Hard. They further proposed a simplified version of the criterion (scIS), for which finding a conditioning set can be done efficiently. It remains an open problem whether instruments satisfying the gIS can be found in polynomial time.

The generalized HTC (gHTC) \citep{chenIdentificationOveridentificationLinear2016,weihsDeterminantalGeneralizationsInstrumental2017}
and auxiliary variables (AVS) \citep{chenIncorporating2016,chenIdentificationModelTesting2017} were developed with algorithms that were polynomial, 
provided that the number of incoming edges to each node in the causal graph were bounded. The corresponding  algorithms are exponential without this restriction,
since they enumerate all subsets of each node's incident edges.

More recently, \citet{weihsDeterminantalGeneralizationsInstrumental2017} showed how constraints stemming from determinants of minors in the covariance matrix \citep{sullivantTrekSeparationGaussian2010}
can be exploited for identification (TSID). Still, the complexity of their method was left as an open problem. We use the term \textit{tsIV} to refer to the unnamed criterion underlying the TSID algorithm.
Against this background, our contributions are as follows: 

\begin{itemize}
    \item We develop an efficient algorithm for finding instrumental subsets, which overcomes the need for enumerating all subsets of incoming edges into each node. This leads to efficient identification algorithms exploiting the gHTC and AVS criteria.
    \item We prove NP-Completeness of finding tsIVs and scIS, which shows they are impractical for use in large graphs without constraining their search space.
    \item Finally, we introduce a new criterion, called \textit{Instrumental Cutsets}, and develop an associated polynomial-time identification algorithm. We show that ICs subsume both gHTC and AVS.
\end{itemize}

For the sake of clarity, a summary of our results in relation to existing literature is shown in  \Cref{table:comparison}.

\endgroup

\section{Preliminaries}

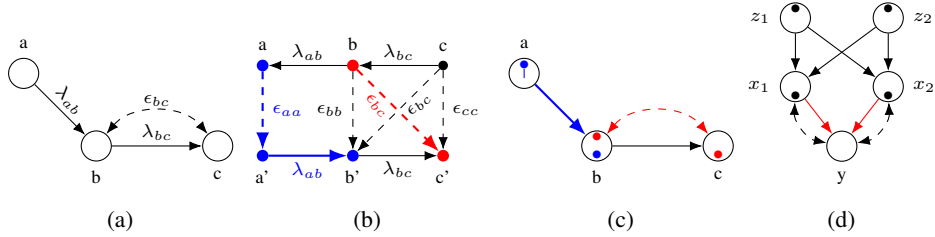
\begin{figure}
\center
    \begin{subfigure}[t]{0.22\linewidth}
        \center
		\begin{tikzpicture}
			\dotnode[]{1}{}{above:a};
        \dotnode[]{2}{below right=of 1,xshift=-1.5em,yshift=1.5em}{below:b};
        \dotnode[]{3}{right=of 2}{below:c};

        \path (1) edge node[el,above] {$\lambda_{ab}$} (2);
        \path (2) edge node[el,above] {$\lambda_{bc}$} (3);
        \path[bidirected] (2) edge[bend left=50] node[el,above] {$\epsilon_{bc}$}  (3);
        \end{tikzpicture}
        \caption{\label{fig:iv}}
    \end{subfigure}%
    \begin{subfigure}[t]{0.25\linewidth}
        \center
		\begin{tikzpicture}
			\node[red,very thick] (1) [label=above:b,point];
            \node (2) [right of = 1,point,label=above:c];
            \node[blue,very thick] (3) [ left of = 1,point,label=above:a];
            \node[blue,very thick] (4) [below of=1,point,label=below:b'];
            \node[red,very thick] (5) [right of = 4,point,label=below:c'];
            \node[blue,very thick] (6) [ left of = 4,point,label=below:a'];
			
            \path (2) edge node[el,above] {$\lambda_{bc}$} (1);
            \path (1) edge node[el,above] {$\lambda_{ab}$} (3);
            \path (4) edge node[el,below] {$\lambda_{bc}$} (5);
            \path[blue,thick] (6) edge node[el,below] {$\lambda_{ab}$} (4);

            \path[dashed] (1) edge node[left] {$\epsilon_{bb}$} (4);
            \path[dashed] (2) edge node[right] {$\epsilon_{cc}$} (5);
            \path[dashed,blue,thick] (3) edge node[right] {$\epsilon_{aa}$} (6);

            \path[dashed,red,thick] (1) edge node[el,below] {\hspace{-2em}$\epsilon_{bc}$} (5);
            \path[dashed] (2) edge node[el,below] {\hspace{2em}$\epsilon_{bc}$} (4);
        \end{tikzpicture}
        \caption{\label{fig:iv_flow}}
    \end{subfigure}
    \begin{subfigure}[t]{0.22\linewidth}
        \center
		\begin{tikzpicture}
			\dotnode[lt]{1}{}{above:a};
        \dotnode[rtlb]{2}{below right=of 1,xshift=-1.5em,yshift=1.5em}{below:b};
        \dotnode[rb]{3}{right=of 2}{below:c};

        \path[blue,thick] (1) edge (2);
        \path[blue,-] (t1) edge (b1);
        \path (2) edge (3);
        \path[bidirected,red] (2) edge[bend left=50] (3);
        \end{tikzpicture}
        \caption{\label{fig:iv_t2}}
    \end{subfigure}%
    \begin{subfigure}[t]{0.2\linewidth}
        \center
        \begin{tikzpicture}[node distance =.5 cm and .5 cm]
            \dotnode[]{y}{}{below:y};
            \dotnode[b]{x1}{above left=of y,xshift=0.5em}{left:$x_1$};   
            \dotnode[b]{x2}{above right=of y,xshift=-0.5em}{right:$x_2$};
            \dotnode[t]{z1}{above =of x1}{left:$z_1$};
            \dotnode[t]{z2}{above =of x2}{right:$z_2$};

            \path (z1) edge (x1);
            \path (z2) edge (x1);
            \path (z1) edge (x2);
            \path (z2) edge (x2);

            \path[red] (x1) edge (y);
            \path[red] (x2) edge (y);

            \path[bidirected] (x1) edge[bend right=40] (y);
            \path[bidirected] (x2) edge[bend left=40] (y);
            
        \end{tikzpicture}

        \caption{\label{fig:ivset}}
    \end{subfigure}%
    
    \caption{Conversion of an instrumental variable (a) into a trek-encoding flow graph shown in (b) if one ignores the colorings. From it, we can deduce that $\sigma_{bc}= \lambda_{ab}\epsilon_{aa}\lambda_{ab}\lambda_{bc} + \epsilon_{bb}\lambda_{bc}$. (c) shows another way of drawing the sets from (b), which facilitates interpretation in more complex settings. (d) $z_1,z_2$ can be used as an instrumental set to solve for $\lambda_{x_1y}$.}
    \label{fig:flowgraph}
\end{figure}

The causal graph of an SCM is defined as a triple $G=(V,D,B)$, where $V$ represents the nodes, $D$ the directed edges, and $B$ the bidirected ones. A linear SCM's graph has a node $v_i$ for each variable $x_i$, 
a directed edge between $v_i$ and $v_j$ for each non-zero $\lambda_{ij}$, and a bidirected edge between $v_i$ and $v_j$ for each non-zero $\epsilon_{ij}$ (\cref{fig:iv}). Each edge in the graph, therefore, corresponds
to an unknown coefficient, which we call a \emph{structural parameter}. When clear from the context, we will use $\lambda_{ij}$ and $\epsilon_{ij}$ to refer to the corresponding directed and bidirected edges in the graph. 
We define $Pa(x_i)$ as the set of parents of $x_i$, $An(x_i)$ as ancestors of $x_i$, $De(x_i)$ as descendants of $x_i$, 
and $Sib(x_i)$ as variables connected to $x_i$ with bidirected edges (i.e., variables with latent common causes).

We will refer to paths in the graph as ``unblocked" conditioned on a set $W$ (which may be empty), if they contain a collider ($a\rightarrow b\leftarrow c,a\leftrightarrow b \leftrightarrow c,a\rightarrow b \leftrightarrow c$) only when $b\in W\cup An(W)$, and if they do not otherwise contain vertices from $W$ (see d-separation, \cite{kollerProbabilisticGraphicalModels2009}). Unblocked paths without conditioning do not contain colliders, and are referred to as treks \citep{sullivantTrekSeparationGaussian2010}. The computable covariances of observable variables and the unknown structural parameters given in \cref{eqn:covariance} have a graphical interpretation in terms of a sum over all treks between nodes in the causal graph, namely $\sigma_{xy} = \sum \pi(x,y)$, where $\pi$ is the product of structural parameters along the trek.

Since unblocked paths in the causal graph have a non-trivial relation to arrow directions, we follow \cite{foygelHalftrekCriterionGeneric2012} in constructing an alternate DAG, called the ``flow graph" ($G_{flow}$), which encodes treks as directed paths between nodes (see \cref{fig:iv_flow}, where the blue path shows a trek between A and B in \cref{fig:iv}, meaning $\sigma_{ab}=\epsilon_{aa}\lambda_{ab}$). 
When referencing the flow graph, we call the ``top'' nodes (e.g., $a,b,c$ in \cref{fig:iv_flow}) the ``source nodes", and the ``bottom'' nodes ($a',b',c'$) the ``sink nodes".

Treks between two sets of nodes in $G$ are said to have ``no sided intersection" if they do not intersect in $G_{flow}$. The red and blue paths of Fig. \ref{fig:iv_flow} show such a set from $\{a,b\}$ to $\{b',c'\}$. Non-intersecting path sets are related to minors of the covariance matrix, denoted with $\Sigma_{(a,b),(b,c)}$\footnote{Refer to \cite{sullivantTrekSeparationGaussian2010} and the Gessel-Viennot-Lindstr\"om Lemma (\ref{lemma:gvl}) \citep{gesselDeterminantsPathsPlane1989}.}. We visually denote the source and sink sets in the original graph with a dot near the top of the node if it is a source, and a dot near the bottom if it is a sink. By these conventions, \cref{fig:iv_flow} can be represented by \cref{fig:iv_t2}.

For simplicity, we will demonstrate many of our contributions in the context of instrumental sets:
\begin{definition}
    \citep{britoGeneralizedInstrumentalVariables2002} A set $Z$ is called an \textbf{instrumental set (IS)} relative to $X\subseteq Pa(y)$ if
    (i) there exists an unblocked path set without sided intersection between $Z$ and $X$, and 
    (ii) there does not exist an unblocked path from any $z\in Z$ to $y$ in $G$ with edges $X\rightarrow y$ removed.
\end{definition}
In \cref{fig:ivset},  $\{z_1 , z_2\}$ is an instrumental set relative to $\{x_1,x_2\}$, leading to a system of equations 
solvable for $\lambda_{x_1y},\lambda_{x_2y}$. A conditioning set $W$ can be added to block paths from $Z$ to $y$, creating the \emph{simple conditional IS} (scIS).
If each $z_i$ has its own conditioning set, it is called a \emph{generalized IS} (gIS).

A set of identified structural parameters $\Lambda^*$ can be used to create ``auxiliary variables" \citep{chenIncorporating2016} which subtract out parents of variables whose effect is known: $x_i^* = x_i - \sum_{\lambda_{x_jx_i}\in \Lambda^*} \lambda_{x_jx_i} x_j$. Using these variables as instruments leads to AV sets (AVS), which are equivalent to the gHTC\footnote{Full definitions for the methods mentioned in this section are available in \cref{sec:prevdef}.}. 
Finally, we will build upon the tsIV, which exploits flow constraints in $G_{flow}$ to identify parameters:
\begin{definition}
\citep{weihsDeterminantalGeneralizationsInstrumental2017}
\label{def:tsiv}
Sets $S,T\subset V$, $|S|=|T|+1=k$ are a \textbf{tsIV} with respect to $\lambda_{xy}$ if (i) $De(y)\cap T=\emptyset$ (ii) The max flow between $S$ and $T'\cup\{x'\}$ in $G_{flow}$ is $k$, (iii) The max flow between $S$ and $T'\cup\{y'\}$ in $G_{flow}$ with $x'\rightarrow y'$,$w_i'\rightarrow y'$, $\lambda_{w_iy}\in \Lambda^*$ removed is less than $k$.
\end{definition}

\section{Efficiently Finding Instrumental Subsets}

\begin{figure}
   
    \begin{subfigure}[t]{0.35\linewidth}
        \center
        
    \begin{tikzpicture}[node distance =.6 cm and .6 cm]
        \dotnode[]{y}{}{below:y};
        \dotnode[]{x3}{above =of y}{right:$x_3$};
        \dotnode[b]{x2}{left=of x3}{left:$x_2$};
        \dotnode[b]{x1}{left=of x2}{below:$x_1$};   
        \dotnode[]{x4}{right=of x3}{right:$x_4$};
        \dotnode[b]{x5}{right=of x4}{below:$x_5$};
    
        \dotnode[]{z2}{above =of x3,xshift=-0.9cm}{above:$z_2$};
        \dotnode[t]{z1}{above =of x1}{above:$z_1$};
        \dotnode[t]{z3}{right =of z2}{above:$z_3$};
        \dotnode[t]{z4}{above =of x5}{above:$z_4$};
    
        \path[red] (x1) edge (y);
        \path[red] (x2) edge (y);
        \path (x3) edge (y);
        \path (x4) edge (y);
        \path[red] (x5) edge (y);
    
        \path[bidirected] (x1) edge[bend right=30] (y);
        \path[bidirected] (x2) edge[bend right=10] (y);
        \path[bidirected] (x3) edge[bend left=30] (y);
        \path[bidirected] (x4) edge[bend left=10] (y);
        \path[bidirected] (x5) edge[bend left=30] (y);
    
        \path (z1) edge (x1);
        \path (z1) edge (x5);
        \path (z2) edge (x2);
        \path (z2) edge (x5);
        \path (z2) edge (x4);
        \path (z2) edge (x3);
        \path (z3) edge (x2);
        \path (z3) edge (x5);
        \path (z4) edge (x1);
        \path (z4) edge (x2);
    \end{tikzpicture}
        \caption{\label{fig:ivsubset}}
    \end{subfigure}%
     \begin{subfigure}[t]{0.23\linewidth}
        \center
        \begin{tikzpicture}[node distance =0.5 cm and 0.5 cm]
            \dotnode[]{y}{}{below:y};
            \dotnode[b]{x2}{above=of y}{left:$x_2$};
            \dotnode[b]{x1}{left=of x2}{left:$x_1$};
            \dotnode[]{x3}{right=of x2}{above:$x_3$};
            \dotnode[]{w}{above=of x2}{left:$w$};

            \dotnode[t]{z1}{above left=of w}{left:$z_1$};
            \dotnode[t]{z2}{above right=of w}{left:$z_2$};

            \path (z1) edge (w);
            \path (z1) edge (x1);
            \path (z2) edge (w);
            \path (w) edge (x2);
            \path (w) edge (x3);
            \path[red] (x1) edge (y);
            \path (x2) edge (y);
            \path (x3) edge (y);
            \path[bidirected] (x1) edge[bend left=20] (y);
            \path[bidirected] (x2) edge[bend left=20] (y);
            \path[bidirected] (w) edge[bend left=25] (y);

            \path[bidirected] (x3) edge[bend left=20] (y);
        \end{tikzpicture}
        \caption{\label{fig:cutsetiv}}
    \end{subfigure}%
    \begin{subfigure}[t]{0.2\linewidth}
        \center
        \begin{tikzpicture}[node distance =.4 cm and .4 cm]
            \dotnode[]{y}{}{left:y};
            \dotnode[]{x}{above=of y}{left:x};
            \dotnode[]{b}{above=of x}{left:b};
            \dotnode[]{a}{above=of b}{left:a};
            \dotnode[]{c}{below=of y}{left:c};
            
            \path (a) edge (b);
            \path (b) edge (x);
            \path (y) edge (c);
            \path[red] (x) edge (y);
            \path[bidirected] (x) edge[bend left=40] (y);
            \path[bidirected] (a) edge[bend left=40] (b);
            \path[bidirected] (a) edge[bend left=50] (y);
            \path[bidirected] (a) edge[bend left=60] (c);
            
        \end{tikzpicture}

        \caption{\label{fig:nodiv}}
    \end{subfigure}
    \begin{subfigure}[t]{0.20\linewidth}
        \center
		\begin{tikzpicture}[node distance=0.6cm]
			\dotnode[]{y}{}{below:y};
			\dotnode[b]{x1}{above left=of y}{below:$x_1$};
			\dotnode[]{x2}{above right=of y}{above left:$x_2$};
			\dotnode[t]{z1}{above=of x1}{above:$z_1$};
			\dotnode[tb]{z2}{above=of x2}{above left:$z_2$};
			\dotnode[tb]{w}{above=of z2}{left:$w$};
			
			\path[red] (x1) edge (y);
			\path (x2) edge (y);
			\path (z1) edge (x1);
			\path (z2) edge (x2);
			\path (w) edge[bend right=10] (z1);
			\path (w) edge (z2);
			\path (z2) edge (z1);
			
			\path[bidirected] (x1) edge[bend left=20] (y);
			\path[bidirected] (x2) edge[bend right=20] (y);
			\path[bidirected] (w) edge[bend left=60] (y);
			\path[bidirected] (z2) edge[bend left=20] (z1);

        \end{tikzpicture}
        \caption{\label{fig:cAVonly}}
    \end{subfigure}
    
    \caption{(a): only a subset of edges can be identified with an instrumental set, 
    and finding the maximal subset in arbitrary graphs was exponential in previous algorithms. (b): $\lambda_{x_1y}$ could previously only be identified using tsIVs, which we show are NP-hard to find. 
    (c): $\lambda_{xy}$ cannot be identified using tsIVs, but is identified through iterative application of \cref{thm:unconditionedcutset} ($\lambda_{bx}$ using $a$, $\lambda_{yc}$ using $x^*$, and $\lambda_{xy}$ using $c^*$). (d): $\lambda_{x_1y}$ is identifiable with cAV but is not captured by IC.}
    \label{fig:idmodels}
\end{figure}
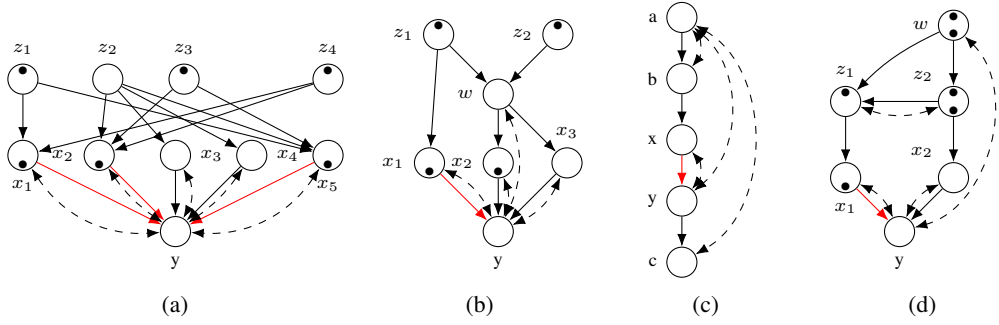
When identifying a causal effect, $\lambda_{x_1y}$, using instrumental sets, it is often the case that no instrument exists for $x_1$, but an instrumental set does exist for a subset of $y$'s parents that includes $x_1$. For example, in \cref{fig:ivset}, there does not exist any IV for $\{x_1\}$, but $\{z_1, z_2\}$ is an instrumental set for $\{x_1,x_2\}$, allowing identification of both $\lambda_{x_1y}$ and $\lambda_{x_2y}$.
Likewise, in \cref{fig:ivsubset}, $\{x_1,x_2,x_5\}$ is the \textit{only} subset of $Pa(y)$ which has a valid instrumental set ($\{z_1, z_3, z_4\}$).


One method for finding sets satisfying a criterion like IS would be to list all subsets of $y$'s incident edges, and for each subset, check if there exist corresponding variables $\{z_1,...,z_k\}$ satisfying all requirements. This is indeed the approach that algorithms developed for the gHTC \citep{chenIdentificationOveridentificationLinear2016,weihsDeterminantalGeneralizationsInstrumental2017} and AVS \citep{chenIncorporating2016} take. However, enumerating all subsets is clearly exponential in the number structural parameters / edges pointing to $y$. In this section, we show that finding this parameter subset can instead be performed in polynomial-time. 

First, we define the concept of ``match-blocking", which generalizes the above problem to arbitrary source and sink sets in a DAG, and can
be used to create algorithms for finding valid subsets applicable to IS, the gHTC, AVS, and our own identification criterion, instrumental cutsets (IC). 

\begin{restatable}{definition}{matchblock}
    \label{def:matchblock}
    Given a directed acyclic graph $G=(V,D)$, a set of source nodes $S$ and sink nodes $T$, the sets $S_f\subseteq S$ and $T_f\subseteq T$,  with $|S_f|=|T_f|=k$, are called \textbf{match-blocked} iff 
    for each $s_i\in S_f$, all elements of $T$ reachable from $s_i$ are in the set $T_f$, and the max flow between $S_f$ and $T_f$ is $k$ in $G$ where each vertex has capacity 1.
\end{restatable}



To efficiently find a match-block\footnote{While there exist methods for finding solvable subsystems of equations \citep{duffImplementationTarjanAlgorithm1978,sridharAlgorithmsStructuralDiagnosis1996,goncalvesNoteComplexityCausal2016}, 
they cannot be applied to our situation due to the requirement of nonintersecting paths in a full arbitrary DAG.}, we observe that if a max flow is done from a set of variables $S$ to $T$, then any element of $T$ that has 0 flow through it cannot be part of a match-block,
and therefore none of its ancestors in $S$ can be part of the match-block either:

\begin{restatable}{theorem}{subsets}
    \label{thm:matchblock}
    Given a directed acyclic graph $G=(V,D)$, a set of source nodes $S$, sink nodes $T$, and a max flow $\mathcal{F}$ from $S$ to $T$ in $G$ with vertex capacity 1, if a node $t_i\in T$ has 0 flow crossing it in $\mathcal{F}$, then there do not exist subsets $S_m\subseteq S,T_m\subseteq T$ where $S_m,T_m$ are match-blocked and $t_i \in T_m$. Furthermore, for any match-block $(S_m,T_m)$, we have $|S_m\cap An(t_i)|=0$.
\end{restatable}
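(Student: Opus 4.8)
The plan is to work in the standard vertex-capacity flow model: split each vertex into an in/out pair joined by a unit-capacity edge, attach a super-source to $S$ and a super-sink to $T$, so that the max flow $\mathcal{F}$ of value $F$ decomposes into $F$ vertex-disjoint directed $S$--$T$ paths, with each original vertex lying on at most one path. In this language ``$t_i$ has $0$ flow crossing it'' means that no path of $\mathcal{F}$ passes through $t_i$. I would first observe that the ``furthermore'' clause follows immediately from the first claim: if $s\in S_m\cap An(t_i)$ for some match-block $(S_m,T_m)$, then $t_i$ is reachable from $s\in S_m$, so the reachability requirement in \cref{def:matchblock} forces $t_i\in T_m$, contradicting the first claim. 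Hence everything reduces to showing that no match-block can contain $t_i$.

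For the main claim I would argue by contradiction: suppose $(S_m,T_m)$ is match-blocked with $t_i\in T_m$ and $|S_m|=|T_m|=k$. The key object is $R$, the set of all vertices reachable from $S_m$ in $G$. Two structural facts drive the argument. First, $R$ is closed under successors (any out-neighbour of a reachable vertex is reachable), so a directed path that ever enters $R$ never leaves it. Second, the closure clause of \cref{def:matchblock} says every sink reachable from $S_m$ lies in $T_m$, i.e. $R\cap T\subseteq T_m$; since the $k$ witnessing match paths reach every node of $T_m$ from $S_m$, in fact $R\cap T = T_m$ exactly, and those $k$ vertex-disjoint paths live entirely inside $R$.

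Combining these, every path of $\mathcal{F}$ that touches $R$ must terminate at a sink of $R\cap T=T_m$; equivalently, every $\mathcal{F}$-path whose endpoint is outside $T_m$ avoids $R$ altogether. Let $b$ be the number of $\mathcal{F}$-paths ending in $T_m$. Since $t_i\in T_m$ carries no flow, at most $k-1$ of the $k$ nodes of $T_m$ are saturated, so $b\le k-1$. Now the $F-b$ paths of $\mathcal{F}$ ending outside $T_m$ use only vertices of $V\setminus R$, whereas the $k$ match paths use only vertices of $R$; the two families are therefore vertex-disjoint, and their union is a collection of $(F-b)+k > F$ vertex-disjoint $S$--$T$ paths. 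This exceeds the max-flow value $F$, the desired contradiction.

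The main obstacle is exactly the potential vertex-sharing between the flow $\mathcal{F}$ and the match-block's witnessing paths: a priori one cannot simply splice the match paths into $\mathcal{F}$, since they may collide on internal vertices. The whole argument hinges on the observation that forward-closedness of $R$ together with the match-block closure property forces a clean partition of $V$ into the region $R$ (which contains all of $S_m$, $T_m$, and the match paths, and into which $\mathcal{F}$ only ever routes paths that terminate in $T_m$) and its complement (which carries the remaining $F-b$ flow paths); once this partition is in hand the count $(F-b)+k>F$ is immediate. I would double-check only the easy edge cases (e.g. nodes that are simultaneously sources and sinks, and the fact that in a DAG $\mathcal{F}$ may be taken acyclic so its decomposition is genuinely into paths), none of which affect the counting.
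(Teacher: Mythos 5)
Your proposal is correct and follows essentially the same route as the paper's proof: argue by contradiction, discard the at most $k-1$ flow paths of $\mathcal{F}$ that end in $T_m$, use the match-block closure condition ($De(S_m)\cap T\subseteq T_m$) to show the remaining flow paths cannot share a vertex with the $k$ witnessing match paths, and splice the two families together into a flow of size strictly greater than $|\mathcal{F}|$; the ``furthermore'' clause is derived from the first claim in the same way. Your explicit use of the forward-closed reachable set $R$ is just a cleaner packaging of the paper's intersection argument, and the count $(F-b)+k>F$ with $b\le k-1$ matches the paper's $|\mathcal{F}|-|T_m\cap T_f|+|T_m|>|\mathcal{F}|$ exactly.
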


This suggests an algorithm for finding the match-block: find a max flow from $S$ to $T$, then remove elements of $T$ that did not have
a flow through them, and all of their ancestors from $S$, and repeat until no new elements are removed.
The procedure, given in \cref{alg:matchblock}, runs in polynomial time. 

%
%

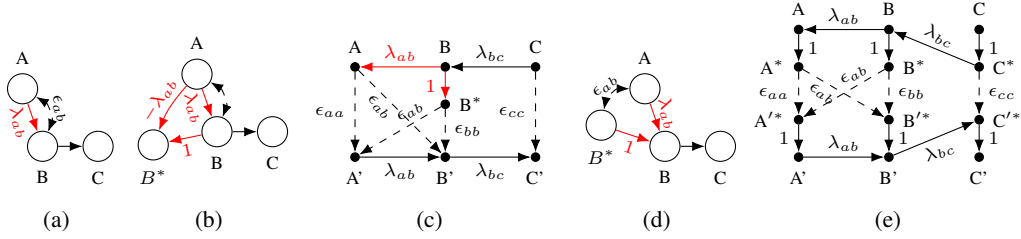
\begin{figure}
    \begin{subfigure}[t]{0.13\linewidth}
        \center
		\begin{tikzpicture}[node distance =1 cm and 0.5 cm]
			\dotnode[]{1}{}{above:A};
        \dotnode[]{2}{below right=of 1,xshift=-1.5em,yshift=1.5em}{below:B};
        \dotnode[]{3}{right=of 2,xshift=-0.5em}{below:C};

        \path[red] (1) edge node[el,below] {$\lambda_{ab}$} (2);
        \path (2) edge (3);
        \path[bidirected] (1) edge[bend left=40] node[el,above] {$\epsilon_{ab}$} (2);
        \end{tikzpicture}
        \caption{\label{fig:auxexample}}
    \end{subfigure}%
    \begin{subfigure}[t]{0.16\linewidth}
        \center
		\begin{tikzpicture}[node distance =1 cm and 0.5 cm]
			\dotnode[]{1}{}{above:A};
        \dotnode[]{2}{below right=of 1,xshift=-1.5em,yshift=1.5em}{below:B};
        \dotnode[]{3}{right=of 2,xshift=-0.5em}{below:C};
        \dotnode[]{4}{below left=of 1,yshift=1em,xshift=0.6em}{below:$B^*$};

        \path[red] (1) edge node[el,below] {$\lambda_{ab}$} (2);
        \path (2) edge (3);

        \path[red] (2) edge node[el,below] {$1$} (4);
        \path[red] (1) edge[bend right=10] node[el,above] {$-\lambda_{ab}$} (4);

        \path[bidirected] (1) edge[bend left=40] (2);
        \end{tikzpicture}
        \caption{\label{fig:auxexamplechen}}
    \end{subfigure}%
    \begin{subfigure}[t]{0.26\linewidth}
        \center
		\begin{tikzpicture}
			\node (1) [label=above:B,point];
            \node (2) [right of = 1,point,label=above:C];
            \node (3) [ left of = 1,point,label=above:A];
            \node (4) [below of=1,point,label=below:B'];
            \node (5) [right of = 4,point,label=below:C'];
            \node(6) [ left of = 4,point,label=below:A'];

            \path[red] (1) edge node[el,above] {$\lambda_{ab}$} (3);
            
            \path (6) edge node[el,below] {$\lambda_{ab}$} (4);
            \path (4) edge node[el,below] {$\lambda_{bc}$} (5);

            \node(bs) [below of=1,yshift=2em,point,label=right:B$^*$];
            \path[dashed] (bs) edge node[right] {$\epsilon_{bb}$} (4);
            \path[dashed] (bs) edge node[el,above] {\hspace{2em}$\epsilon_{ab}$} (6);
            \path[red] (1) edge node[left] {$1$} (bs);
            \path (2) edge node[el,above] {$\lambda_{bc}$} (1);

            \path[dashed] (2) edge node[left] {$\epsilon_{cc}$} (5);
            \path[dashed] (3) edge node[left] {$\epsilon_{aa}$} (6);

            \path[dashed] (3) edge node[el,below] {\hspace{-2em}$\epsilon_{ab}$} (4);
        \end{tikzpicture}
        \caption{\label{fig:auxflow}}
    \end{subfigure}
    \begin{subfigure}[t]{0.16\linewidth}
        \center
		\begin{tikzpicture}[node distance =1 cm and 0.5 cm]
			\dotnode[]{1}{}{above:A};
        \dotnode[]{2}{below right=of 1,xshift=-1.5em,yshift=1.5em}{below:B};
        \dotnode[]{3}{right=of 2,xshift=-.5em}{below:C};
        \dotnode[]{4}{below left=of 1,yshift=2.3em,xshift=0.6em}{below:$B^*$};

        \path[red] (1) edge node[el,above] {$\lambda_{ab}$} (2);
        \path (2) edge (3);
        \path[bidirected] (1) edge[bend right=40] node[el,above] {$\epsilon_{ab}$} (4);

        \path[red] (4) edge node[el,below] {$1$} (2);
        \end{tikzpicture}
        \caption{\label{fig:auxexampleproposed}}
    \end{subfigure}%
    \begin{subfigure}[t]{0.28\linewidth}
        \center
		\begin{tikzpicture}
			\node (1) [label=above:B,point];
            \node (2) [right of = 1,point,label=above:C];
            \node (3) [ left of = 1,point,label=above:A];
            \node (bps) [below of=1,point,label=right:B$'^*$];
            \node (cps) [right of = bps,point,label=right:C$'^*$];
            \node(aps) [ left of = bps,point,label=left:A$'^*$];

            \path (1) edge node[el,above] {$\lambda_{ab}$} (3);
            

            \node(bs) [below of=1,yshift=2em,point,label=right:B$^*$];
            \node(cs) [below of=2,yshift=2em,point,label=right:C$^*$];
            \node(as) [below of=3,yshift=2em,point,label=left:A$^*$];

            \node(bp) [below of=bps,yshift=2em,point,label=below:B'];
            \node(cp) [below of=cps,yshift=2em,point,label=below:C'];
            \node(ap) [below of=aps,yshift=2em,point,label=below:A'];
            
            \path (1) edge node[left] {$1$} (bs);
            \path (2) edge node[right] {$1$} (cs);
            \path (3) edge node[right] {$1$} (as);
            \path (aps) edge node[left] {$1$} (ap);
            \path (bps) edge node[right] {$1$} (bp);
            \path (cps) edge node[right] {$1$} (cp);

            \path[dashed] (bs) edge node[right] {$\epsilon_{bb}$} (bps);
            \path[dashed] (bs) edge node[el,above] {\hspace{2em}$\epsilon_{ab}$} (aps);
            
            \path (cs) edge node[el,above] {$\lambda_{bc}$} (1);
            \path (bp) edge node[el,below] {$\lambda_{bc}$} (cps);

            \path[dashed] (cs) edge node[right] {$\epsilon_{cc}$} (cps);
            \path[dashed] (as) edge node[left] {$\epsilon_{aa}$} (aps);
            \path (ap) edge node[el,above] {$\lambda_{ab}$} (bp);

            \path[dashed] (as) edge node[el,below] {\hspace{-2em}$\epsilon_{ab}$} (bps);
        \end{tikzpicture}
        \caption{\label{fig:auxflow2}}
    \end{subfigure}
    
    \caption{The graph in (a) with edge $\lambda_{ab}$ known has an auxiliary variable $B^*$ shown in (b). (c) shows a modified flow graph, which encodes the treks from $B^*$ excluding the known edge.
    This corresponds to a new encoding of the AV, shown in (d). Finally, (e) gives the auxiliary flow graph as described in \cref{def:auxflowgraph}}
    \label{fig:auxgraph}
\end{figure}

%
%

\begin{algorithm}
    \caption{Find Maximal Match-Block given DAG $G$, source nodes $S$ and target nodes $T$}
    \label{alg:matchblock}
    \begin{algorithmic}
        \Function{MaxMatchBlock}{G,S,T}
        \Do
        \State $\mathcal{F} \leftarrow \textsc{MaxFlow}(G,S,T)$
        \State $T' \leftarrow \{ t_i | \mathcal{F}\text{ has 0 flow through $t_i\in T$}\}$
        \State $T \leftarrow T\setminus T'$
        \State $S \leftarrow S \setminus An(T')$
        \DoWhile {at least one element of $T$ was removed in this iteration}
        \State\Return $(S,T)$
        \EndFunction
        \end{algorithmic}
\end{algorithm}
The match-block can be exploited to find instrumental subsets by using $G_{flow}$ with ancestors of $y$ that don't have back-paths to siblings of $y$ as $S$ and $Pa(y)'$ as $T$. 
This procedure is shown to find an IS if one exists in \cref{cor:ivsubsets}, and is implemented in \cref{alg:IS} of the appendix.
%

\subsection{Extending IS to AVS with the Auxiliary Flow Graph}

A match-block operates upon a directed graph. When using instrumental sets, one can convert the SCM to the flow graph $G_{flow}$, but the AVS algorithm exploits auxiliary variables, 
which are not encoded in this graph. The covariance of an auxiliary variable with another variable $y$ can be written:
$$
\sigma_{b^*y} = \E[b^*y] = \E\left[\left(b - \sum_{\lambda_{x_ib}\in \Lambda^*} \lambda_{x_ib} x_i\right)y\right] = \sigma_{by} - \sum_{\lambda_{x_ib}\in \Lambda^*} \lambda_{x_ib} \sigma_{x_iy}
$$

This quantity behaves like $\sigma_{by}$ with treks from $b$ starting with the known $\lambda_{x_ib}$ removed. We can therefore construct a flow graph which encodes this intuition explicitly using \cref{def:auxflowgraph}. 
An example of an auxiliary flow graph can be seen in \cref{fig:auxgraph}, which uses $b^*=b-\lambda_{ab}a$. In \cref{fig:auxflow}, $B^*$ no longer has the edge $\lambda_{ab}$ to $A$, but it still has all other edges, giving it all of the same treks
as $B$, except the ones subtracted out in the AV. The original variable, $B$, has an edge with weight $1$ to $B^*$, making its treks identical to the standard flow graph.
With this new graph, and \textsc{MaxMatchBlock}, the algorithm for instrumental sets can easily be extended to find AVS (\cref{alg:AVS} in appendix), which in turn is equivalent to the gHTC. We have therefore shown that both of these methods can be efficiently applied for identification, without a restriction on number of edges incoming into a node.
\begin{restatable}{definition}{auxflowgraph} \textbf{(Auxiliary Flow Graph)}
    \label{def:auxflowgraph}
Given a linear SCM $(\Lambda,\Omega)$ with causal graph $G=(V,D,B)$, and set of known structural parameters $\Lambda^*$, the auxiliary flow graph is a weighted DAG with vertices $V\cup V^*\cup V'\cup V'^*$ and edges
\begin{enumerate}
\item $j\rightarrow i$ and $i'\rightarrow j'$ both with weight $\lambda_{ij}$ if $(i\rightarrow j)\in D$, and $\lambda_{ij}\in \Lambda^*$
\item $j^*\rightarrow i$ and $i'\rightarrow j'^*$ both with weight $\lambda_{ij}$ if $(i\rightarrow j)\in D$, and $\lambda_{ij}\notin \Lambda^*$
\item $i\rightarrow i^*$ and $i'^*\rightarrow i'$ with weight 1, and $i^*\rightarrow i'$ with weight $\epsilon_{ii}$ for $i\in V$
\item $i^*\rightarrow j'^*$ with weight $\epsilon_{ij}$ if $(i\leftrightarrow j)\in B$
\end{enumerate}
This graph is referred to as $G_{aux}$. The nodes without $'$ are called ``source'', or ``top'' nodes, and the nodes with $'$ are called ``sink'' or ``bottom'' nodes. The nodes with $*$ are called ``auxiliary" nodes.
\end{restatable}

\section{On Searching for tsIVs \citep{weihsDeterminantalGeneralizationsInstrumental2017}}
\label{sec:nphard}

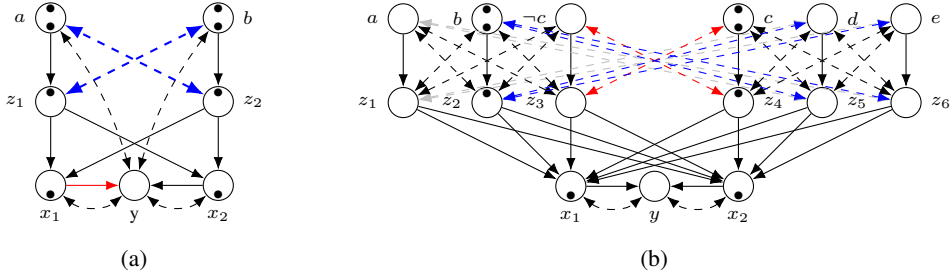
\begin{figure}

    \begin{subfigure}[t]{0.35\linewidth}
        \center
        \begin{tikzpicture}[node distance =.7 cm and .7 cm]
            \dotnode[]{y}{}{below:y};
            \dotnode[b]{x1}{left=of y}{below:$x_1$};
            \dotnode[b]{x2}{right=of y}{below:$x_2$};   
            \dotnode[t]{z1}{above=of x1}{left:$z_1$};
            \dotnode[tb]{a}{above=of z1}{left:$a$};
            \dotnode[t]{z2}{above=of x2}{right:$z_2$};
            \dotnode[tb]{b}{above=of z2}{right:$b$};

            \path[red] (x1) edge (y);
            \path (x2) edge (y);
            \path (z1) edge (x1);
            \path (z1) edge (x2);
            \path (a) edge (z1);
            \path (z2) edge (x1);
            \path (z2) edge (x2);
            \path (b) edge (z2);

            \path[bidirected] (x1) edge[bend right=40] (y);
            \path[bidirected] (x2) edge[bend left=40] (y);
            \path[bidirected] (a) edge[bend left=10] (y);
            \path[bidirected] (b) edge[bend right=10] (y);

            \path[bidirected,thick,blue] (z1) edge (b);
            \path[bidirected,thick,blue] (z2) edge (a);

        \end{tikzpicture}

        \caption{\label{fig:basediv_clause}}
    \end{subfigure}%
    \begin{subfigure}[t]{0.64\linewidth}
        \center
        \begin{tikzpicture}[node distance =.7 cm and .7 cm]
            \dotnode[]{y}{}{below:$y$};
            \dotnode[b]{x2}{right=of y}{below:$x_2$};   
            \dotnode[t]{z4}{above=of x2}{right:$z_4$};
            \dotnode[tb]{w4}{above=of z4}{right:$c$};
            \dotnode[]{z5}{right=of z4}{right:$z_5$};
            \dotnode[]{w5}{above=of z5}{right:$d$};
            \dotnode[]{z6}{right=of z5}{right:$z_6$};
            \dotnode[]{w6}{above=of z6}{right:$e$};

            \dotnode[b]{x1}{left=of y}{below:$x_1$};   
            \dotnode[]{z3}{above=of x1}{left:$z_3$};
            \dotnode[t]{z2}{left=of z3}{left:$z_2$};
            \dotnode[]{z1}{left=of z2}{left:$z_1$};
            \dotnode[]{w1}{above=of z1}{left:$a$};
            
            \dotnode[tb]{w2}{above=of z2}{left:$b$};
            
            \dotnode[]{w3}{above=of z3}{left:$\lnot c$};

            \path (x2) edge (y);
            \path (z1) edge (x2);
            \path (z1) edge (x1);
            \path (w1) edge (z1);
            \path (z2) edge (x2);
            \path (z2) edge (x1);
            \path (w2) edge (z2);
            \path (z3) edge (x2);
            \path (z3) edge (x1);
            \path (w3) edge (z3);

            \path[bidirected] (x2) edge[bend left=40] (y);
            \path[bidirected] (x1) edge[bend right=40] (y);

            \path (x1) edge (y);
            \path (z4) edge (x2);
            \path (z4) edge (x1);
            \path (w4) edge (z4);
            \path (z5) edge (x2);
            \path (z5) edge (x1);
            \path (w5) edge (z5);
            \path (z6) edge (x2);
            \path (z6) edge (x1);
            \path (w6) edge (z6);

            \path[bidirected,lightgray] (z1) edge (w5);
            \path[bidirected,lightgray] (w1) edge (z5);
            \path[bidirected,lightgray] (z1) edge (w6);
            \path[bidirected,lightgray] (w1) edge (z6);

            \path[bidirected] (z1) edge (w2);
            \path[bidirected] (z1) edge (w3);

            \path[bidirected] (z2) edge (w1);
            \path[bidirected] (z2) edge (w3);

            \path[bidirected] (z3) edge (w2);
            \path[bidirected] (z3) edge (w1);

            \path[bidirected] (z4) edge (w5);
            \path[bidirected] (z4) edge (w6);

            \path[bidirected] (z5) edge (w4);
            \path[bidirected] (z5) edge (w6);

            \path[bidirected] (z6) edge (w5);
            \path[bidirected] (z6) edge (w4);

            \path[bidirected,red] (z3) edge (w4);
            \path[bidirected,red] (z4) edge (w3);

            \path[bidirected,blue] (z2) edge (w5);
            \path[bidirected,blue] (w2) edge (z5);
            \path[bidirected,blue] (z2) edge (w6);
            \path[bidirected,blue] (w2) edge (z6);

        \end{tikzpicture}

        \caption{\label{fig:fulldivsat}}
    \end{subfigure}%

    \caption{The model in (a) encodes the basic structure we exploit in our proof. The set $\{z_1,z_2\}$ is a simple conditional instrumental set only if $z_1\leftrightarrow b$ and $z_2\leftrightarrow a$ do not exist,
    since it opens the collider from $z_1$ through $b$ to $y$.
    (b) shows the full encoding of a 1-in-3SAT for $(a\vee b \vee \lnot c)\wedge(c\vee d\vee e)$. \textbf{We have removed bidirected edges from all top nodes to $y$ in (b) for clarity}.}
    \label{fig:givsat}
\end{figure}

There exist structural parameters that can be identified using tsIVs, which cannot be found with gHTC, AVS, nor any other efficient method. However, there currently does not exist an efficient algorithm for finding tsIVs in arbitrary DAGs.
This section can be summarized with \cref{cor:divnp}:

\begin{restatable}{corollary}{divnp}
    \label{cor:divnp}
Given an SCM and target structural parameter $\lambda_{xy}$, determining whether there exists a tsIV which can be used to solve for $\lambda_{xy}$ in $G$ is an NP-Complete problem.
\end{restatable}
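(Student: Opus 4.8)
The plan is to prove the two directions of NP-completeness separately, treating membership as the easy half and hardness as the substance. For membership in \textbf{NP}, a certificate is simply a candidate pair of vertex sets $(S,T)$ with $|S|=|T|+1=k$; verifying \cref{def:tsiv} is polynomial, since condition (i) is a descendants/reachability query against $De(y)$, and conditions (ii) and (iii) each amount to a single vertex-capacitated max-flow computation in $G_{flow}$ (respectively in the edge-deleted copy), which is polynomial via the standard node-splitting reduction to edge capacities. Hence the decision problem lies in \textbf{NP}, and everything hinges on the hardness reduction.

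For hardness I would reduce from \textsc{1-in-3 Sat}: given a 3-CNF formula $\phi$, decide whether some assignment makes \emph{exactly} one literal true in every clause (NP-complete by Schaefer's dichotomy). From $\phi$ I construct an SCM whose causal graph is assembled from copies of the gadget in \cref{fig:basediv_clause} and glued together as in \cref{fig:fulldivsat}, with target parameter $\lambda_{x_1 y}$ and $\Lambda^{*}=\emptyset$. Each clause contributes three ``literal'' source nodes $z_i$, each pointing into the shared pair $x_1,x_2\to y$ and each carrying a private ``latch'' $w_i$ above it; a bidirected edge $z_i\leftrightarrow w_j$ between distinct literals plays the role flagged in the caption of \cref{fig:givsat}, namely it \emph{opens a collider} through $w_j$ to $y$ whenever both $z_i$ and the relevant latch are used. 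Placing $z_i$ into the source set $S$ encodes ``this literal is the chosen true one of its clause,'' and the cross-clause bidirected edges (the red and blue families in \cref{fig:fulldivsat}) connect latches of complementary literals, so that choosing $c$ true in one clause is incompatible with choosing $\lnot c$ true in another. The construction is arranged so that a tsIV of the correct size $k$ exists precisely when each clause selects exactly one literal — yielding enough treks without sided intersection to realize max flow $k$ in condition (ii) — while no collider is opened, keeping the residual flow in condition (iii) strictly below $k$, which is exactly what licenses solving for $\lambda_{x_1 y}$.

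The forward direction should then be routine: from a 1-in-3 satisfying assignment I read off which literal node to place in $S$ per clause, take $T$ to be the matching sink nodes, and verify (ii)--(iii) by exhibiting an explicit saturating flow and checking that deleting $x'\to y'$ strictly drops it. The backward direction is the crux and the expected main obstacle. I must argue that \emph{every} pair $(S,T)$ satisfying \cref{def:tsiv} induces a consistent exactly-one assignment, i.e., that there are no ``spurious'' solutions: selecting zero or two literals in a clause, or selecting complementary literals across clauses, must necessarily either starve condition (ii) of flow or open a collider/back-path that revives the flow of condition (iii), in both cases destroying the tsIV. The delicate step is a case analysis over how flows can route through the latches $w_i$ in $G_{flow}$, verifying that the gray, red, and blue bidirected-edge families behave exactly as intended and that no unintended combination of treks can simultaneously meet (ii) and evade (iii). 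Making that routing argument airtight — so that the flow conditions encode the \emph{exactly}-one semantics and the cross-clause consistency with no loopholes — is where the proof must do its real work. Combining both directions with membership in \textbf{NP} yields \cref{cor:divnp}.
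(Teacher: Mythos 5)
Your overall strategy is the paper's: membership via the polynomial verifiability of the three tsIV conditions (\cref{thm:div}), and hardness by reducing 1-in-3SAT through exactly the clause/literal gadgets of \cref{fig:basediv_clause,fig:fulldivsat}, which is what \cref{thm:divsat} formalizes. Two substantive differences are worth flagging, one of which is a real gap.

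First, the forward direction. You propose to exhibit an explicit saturating flow for condition (ii) and then ``check that deleting $x'\to y'$ strictly drops it.'' But condition (iii) is a universally quantified statement over \emph{all} flows from $S$ to $T'\cup\{y'\}$ in the modified graph, so it cannot be certified by inspecting the one flow you constructed; you must rule out every rerouting through the latches and bidirected edges. The paper avoids doing this by hand: it builds a \emph{simple conditional instrumental set} from the satisfying assignment and invokes \cref{thm:scIS2div}, whose inductive matching argument (any source matched to $y'$ forces an infinite descent through the conditioning set $W$) is precisely the missing piece. Your direct route is viable in principle, but you would have to reprove the content of that lemma; as written, this step is asserted rather than argued. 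A side benefit of the paper's detour is that it yields NP-completeness of scIS search (\cref{cor:scivnp}) essentially for free.

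Second, and more seriously, the backward direction. You correctly identify it as the crux, but the proposal contains only a statement of what must be shown, not an argument. The paper's proof of this direction is the bulk of \cref{thm:divsat}: it successively forces the structure of any candidate $(S,T)$ --- every $x_i$ with $i\neq 1$ must lie in $T$ and be matched by a $z$-node (the $X$ and $W$ nodes are disqualified by their bidirected edges to $y$ via \cref{lem:nobidirected}); each such $z_i$ must carry its latch $w_i$ in $S$; the matchings can be normalized into ``active literals'' of the form $z_i\to z_i'\to x_j'$ with $w_i\to w_i'$ via an exchange argument that terminates because the formula is finite; and finally the absence of a 1-in-3 assignment forces two active literals with crossing bidirected edges $w_i\leftrightarrow z_j$, $w_j\leftrightarrow z_i$, which reopens a full flow to $y'$ and kills condition (iii). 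None of this case analysis is present in the proposal, and it is not routine --- the exchange/normalization step in particular is the idea that makes the ``no spurious solutions'' claim go through. Until that argument is supplied, the reduction establishes only one direction of the equivalence, and the corollary does not follow.
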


We show this by encoding 1-in-3SAT, which is NP-Complete \citep{schaeferComplexitySatisfiabilityProblems1978}, into a graph, 
such that finding a tsIV is equivalent to solving for a satisfying boolean assignment.

Since tsIVs can be difficult to intuitively visualize, we will illustrate the ideas behind the proof with simple conditional instrumental sets (scIS), which we also show are NP-Complete to find (\cref{cor:scivnp}).
We observe a property of the graph in \cref{fig:basediv_clause}: with $a\leftrightarrow z_2$ and $z_1 \leftrightarrow b$ removed (blue), $\{z_1,z_2\}$ can be used as scIS for $\lambda_{x_1y},\lambda_{x_2y}$, since their back-paths to $y$ ($z_1\leftarrow a \leftrightarrow y$ and $z_2\leftarrow b \leftrightarrow y$) can be blocked by conditioning on $W=\{a,b\}$.
However, if the bidirected edges are not removed, conditioning on $a$ or $b$ opens a path to $y$ using them as a collider ($z_1\leftrightarrow b\leftrightarrow y$ and $z_2 \leftrightarrow a \leftrightarrow y$). 
A simple conditional instrumental set for $\lambda_{x_1y}$ exists in this graph if and only if none of the instruments has bidirected edges to another instrument's required conditioning variable.

We exploit this property to construct the graph in \cref{fig:fulldivsat}, which repeats the structure from \cref{fig:basediv_clause} for each literal $l_i$ in the 1-in-3SAT formula $(a\vee b \vee \lnot c)\wedge(c\vee d\vee e)$. 
Each clause is designed so that usage of any potential instrument, $z_i$, precludes usage of the other two potential instruments in the clause. For example, the bidirected edges in \cref{fig:fulldivsat} from $z_3$ to $a$ and $b$ disallow usage of $z_1$ and $z_2$ as instruments once $z_3$ is used.
Likewise, there are \textbf{\color{red}bidirected edges} between the $c$ and $\lnot c$ structures, since if $c$ is $true$, $\lnot c$ cannot be $true$. Similarly, $b$ has \textbf{\color{blue}bidirected edges} to $z_5$ and $z_6$, since if $a$ is $true$, then $\lnot c$ is false and $c$ is true, so $d$ and $e$ must be disabled.
Finally, $y$ has 2 parents, $x_1,x_2$, corresponding to the two clauses. Each element of $Z$ has edges to all parents of $y$, meaning that any scIS existing in the graph must have as many instruments as there are clauses. 
Thus, finding an scIS for $y$ in this graph corresponds to finding a satisfying assignment for the formula. 
The full procedure for generating the graph, which is the same for both scIS and tsIV, is given in \cref{thm:divsat}.

\begin{restatable}{theorem}{divsat}
    \label{thm:divsat}
    Given a boolean formula $F$ in conjunctive normal form, if a graph $G$ is constructed as follows, starting from a target node $y$:
    \begin{enumerate}
    \item For each clause $c_i \in F$, a node $x_i$ is added with edges $x_i\rightarrow y$ and $x_i\leftrightarrow y$
    \item For $c_i \in F$, take each literal $l_j\in c_i$, and add nodes $z_{ij},w_{ij}$, 
    with edges $w_{ij} \rightarrow z_{ij}$, $w_{ij}\leftrightarrow y$, and $z_{ij}\rightarrow x_k$ $\forall x_k$
    \item For $c_i\in F$, $l_j,l_k\in c_i$ where $j\ne k$ add bidirected edge $z_{ij}\leftrightarrow w_{ik}$
    \item $\forall c_i,c_m \in F, l_j \in c_i, l_n\in c_m$ with $i\ne m$, add a bidirected edge $z_{ij} \leftrightarrow w_{mn}$ if 
        \begin{enumerate}
            \item $l_j = \lnot l_n$, or
            \item $\exists l_q \in c_m$ with $q\ne n$ and $l_j=l_q$, or
            \item $\exists l_p \in c_i, l_q\in c_m$ with $p\ne j$ and $q\ne n$ where $l_p=\lnot l_q$
        \end{enumerate}
    \end{enumerate}
    Then a tsIV exists for  $\lambda_{x_1y}$ in $G$ if and only if there is a truth assignment to the variables of $F$ such that there is exactly one true literal in each clause of $F$.
\end{restatable}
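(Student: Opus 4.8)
The plan is to establish the biconditional as a polynomial-time reduction-correctness proof, building on the scIS intuition already sketched for \cref{fig:basediv_clause}--\cref{fig:fulldivsat} and then transporting it into the determinantal/flow language of \cref{def:tsiv}. The organizing idea is that choosing a tsIV amounts to choosing, for each clause $c_i$, which instrument $z_{ij}$ to ``activate'' (equivalently, which literal of $c_i$ to set true), subject to the constraint that no two activated instruments are joined by one of the engineered bidirected edges. The first technical step is to restate conditions (ii) and (iii) combinatorially: by the Gessel--Viennot--Lindström lemma, a max flow of value $k$ from sources to sinks in $G_{flow}$ is exactly a system of $k$ vertex-disjoint treks, so (ii) asserts the existence of a non-intersecting trek system from the chosen sources onto all parent sinks, while (iii) asserts that, once the direct channel $x_1'\to y'$ is deleted, no such full system can reach the $y$-augmented sink set. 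Under this dictionary the scIS phenomenon ``conditioning on $w_{ij}$ opens the collider $z_{i'j'}\leftrightarrow w_{ij}\leftrightarrow y$'' becomes the statement ``the back-path trek through $w_{ij}$ can be rerouted to $y'$ precisely when the conflicting instrument is also present,'' which is what couples the flow value to the satisfiability of $F$.

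For the forward direction I would, given a $1$-in-$3$ satisfying assignment, activate $S_0=\{z_{ij}:l_j\text{ is the unique true literal of }c_i\}$ and assemble the tsIV witness $(S,T)$ around it --- with the role played by the scIS conditioning set $\{w_{ij}\}$ folded into $S$ and $T$ so that the minor defining $\lambda_{x_1y}$ is the one certified by the flow. Condition (i) is immediate because every $x_i$ is a non-descendant of $y$; condition (ii) follows from the fact that each $z_{ij}$ points to every $x_k$, giving a near-complete bipartite gadget in $G_{flow}$ in which a perfect non-intersecting system always exists. The substantive obligation is to exhibit, for condition (iii), an explicit cut of size $k-1$ in $G_{flow}$ with $x_1'\to y'$ deleted; the consistency of the assignment (no two activated $z$-nodes share a step-3 or step-4 bidirected edge) is exactly what prevents the back-path treks through the $w$-nodes from being rerouted onto distinct sinks, capping the flow below $k$.

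The reverse direction is where I expect the main difficulty. Starting from an arbitrary valid tsIV, I would first prove a canonical-form lemma: any witness satisfying (i)--(iii) can be massaged into one that activates exactly one $z$-node per clause, ruling out ``spurious'' tsIVs that draw sources from the $x$-, $w$-, or $y$-gadgets in unintended ways. This step is delicate because tsIV is a purely flow-based (rather than $d$-separation-based) criterion, so more configurations are a priori admissible than in the scIS case, and the argument must lean on the completeness of the $z\to x$ connections and on the acyclic layering of the gadget. Given the canonical form, I would argue contrapositively: if the induced per-clause selection is not a valid $1$-in-$3$ assignment, then some forbidden pair $z_{ij}\leftrightarrow w_{mn}$ --- arising from step 3 (two literals in one clause), or from the three sub-cases of step 4 (a negation conflict $l_j=\lnot l_n$, a repeated literal $l_j=l_q$, or the double-falsification case $l_p=\lnot l_q$) --- is simultaneously active, and I would show that this edge supplies the extra disjoint route to $y'$ that restores a full flow of $k$, contradicting (iii). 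The heart of the proof is thus the uniform verification that the engineered bidirected edges reinstate the $y'$-routing in \emph{precisely} the conflicting configurations and in no others, across all of steps 3 and 4(a)--4(c). Finally, since $G$ is constructed in time polynomial in $|F|$ and a tsIV is a polynomially checkable certificate (three max-flow computations), the equivalence delivers the NP-completeness stated in \cref{cor:divnp}.
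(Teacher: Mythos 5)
Your overall strategy matches the paper's: the forward direction builds the per-clause instrument set $Z=\{z_{is}\}$ with conditioning nodes $\{w_{is}\}$ from a 1-in-3 satisfying assignment and lifts it to a tsIV (the paper does this by constructing a simple conditional instrumental set and invoking \cref{thm:scIS2div}, which sets $S=Z\cup W$, $T=W\cup X\setminus\{x\}$ --- exactly your ``folding'' of the conditioning set into $S$ and $T$), and the reverse direction canonicalizes an arbitrary tsIV witness into a per-clause selection and then derives a conflict edge that reroutes a full flow onto $y'$.

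The genuine gap is that your ``canonical-form lemma'' --- which you correctly identify as the delicate step and then leave as an obligation --- is not a side lemma but essentially the entire content of the paper's proof. To establish it one must: (1) rule out any source in $X\cup W$ being matched to $x_1'$, because a bidirected edge $s_i\leftrightarrow y$ lets the path to $x_1'$ be swapped for a path to $y'$ (\cref{lem:nobidirected}); (2) force every $x_i'$, $i\neq 1$, into $T$, since the chosen $z$ reaches all of them; (3) force each selected $z_{ij}$ to bring its own $w_{ij}$ into $S$ to block the back-path $z_{ij}\leftarrow w_{ij}\leftrightarrow y$; (4) rule out $w_{ij}$ being matched to $z_{ij}'$ by a flip argument; and (5) run an exchange/augmentation argument (the paper's ``active literals'' induction) showing that a flow maximizing the number of substructures $z_i\rightarrow z_i'\rightarrow x_j'$ together with $w_i\rightarrow w_i'$ must in fact contain $k$ of them, by following chains of bidirected matchings until a reorderable $w_n\rightarrow w_n'$ is found. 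Only after this normalization can your contrapositive step (a forbidden pair $z_{ij}\leftrightarrow w_{mn}$ together with $z_{mn}\leftrightarrow w_{ij}$ supplies the extra disjoint route to $y'$) be applied; without it, nothing prevents a priori a tsIV whose sources are scattered across the gadgets in a way that encodes no assignment at all. A secondary, smaller gap: in the forward direction you propose to exhibit an explicit cut of size $k-1$ certifying condition (iii), but you do not construct it; the paper sidesteps this by proving the general implication scIS~$\Rightarrow$~tsIV via an inductive matching argument in \cref{thm:scIS2div}, and your cut-based route, while plausible, would need its own verification that consistency of the assignment actually yields the cut.
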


\section{The Instrumental Cutset Identification Criterion}
\label{sec:criterion}

While finding a tsIV is NP-hard, it is possible to create a new criterion, which both includes constraints from the tsIV that can be efficiently found, 
and exploits knowledge of previously identified edges similarly to AVS. This criterion is described in the following theorem.

\begin{restatable}{theorem}{unconditionedcutset}
    \label{thm:unconditionedcutset}
    \textbf{(Instrumental Cutset)} Let $M=(\Lambda,\Omega)$ be a linear SCM with associated causal graph $G=(V,D,E)$, a set of identified structural parameters $\Lambda^*$, and a target structural parameter $\lambda_{xy}$.
    Define $G_{aux}$ as the auxiliary flow graph for $(G,\Lambda^*)$.
    Suppose that there exist subsets $S\subset V\cup V^*$, with $V^*$ representing the set of AVs, and $T\subseteq Pa(y^*)\setminus \{x\}$ with $|S|=|T|-1=k$ such that
    \begin{enumerate}
    
\item There exists a flow of size $k$ in $G_{aux}$ from $S$ to $T\cup\{x\}$
\item There does not exist a flow of size $k$ from $S$ to $T\cup\{y\}$ in $G_{aux}$ with $x'\rightarrow y'^*$ removed
\item No element of $\{y\}\cup Sib(y)$ has a directed path to $s_i \in S$ in $G$

    \end{enumerate}
    then $\lambda_{xy}$ is generically identifiable by the equation:
    $$
    \lambda_{xy} = \frac{\det \Sigma_{S, T\cup\{y^*\}}}{\det \Sigma_{S, T\cup\{x\}}}
    $$
\end{restatable}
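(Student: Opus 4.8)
The plan is to establish the exact polynomial identity
\[
\det \Sigma_{S,\,T\cup\{y^*\}} \;=\; \lambda_{xy}\,\det \Sigma_{S,\,T\cup\{x\}}
\]
in the entries of $(\Lambda,\Omega)$, and then to argue that the denominator has generically full rank, so the displayed ratio is well defined and equals $\lambda_{xy}$ for generic parameters. Note first that $T\cup\{x\}$ and $T\cup\{y^*\}$ each have the same cardinality as $S$, so both are square $|S|\times|S|$ minors, differing only by replacing the single column indexed by $x$ with the column indexed by $y^*$.

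First I would expand both minors with the Gessel--Viennot--Lindstr\"om lemma (\cref{lemma:gvl}) inside the auxiliary flow graph $G_{aux}$. Because $G_{aux}$ (\cref{def:auxflowgraph}) encodes $\sigma_{st}$ and $\sigma_{sy^*}$ exactly as sums of weighted directed paths to the sinks $t'$ and $y'^*$, each determinant becomes a signed sum over systems of vertex-disjoint directed paths: $\det\Sigma_{S,T\cup\{x\}}=\sum_\Pi \operatorname{sgn}(\Pi)\,w(\Pi)$ over disjoint systems $\Pi\colon S\to T'\cup\{x'\}$, and $\det\Sigma_{S,T\cup\{y^*\}}=\sum_\Psi \operatorname{sgn}(\Psi)\,w(\Psi)$ over disjoint systems $\Psi\colon S\to T'\cup\{y'^*\}$. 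In each term every sink is met exactly once, so exactly one path of $\Psi$ ends at $y'^*$ and exactly one path of $\Pi$ ends at $x'$; since the two sink sets share the same index structure, a term-by-term correspondence will preserve $\operatorname{sgn}$.

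The core is a weight-multiplying bijection $\Phi$ between the two families. Given $\Psi$, I claim the last edge of its $y'^*$-terminal path must be $x'\to y'^*$ (weight $\lambda_{xy}$, by item~2 of \cref{def:auxflowgraph}); deleting that edge then yields a disjoint system $\Phi(\Psi)\colon S\to T'\cup\{x'\}$ with the same sign and $w(\Psi)=\lambda_{xy}\,w(\Phi(\Psi))$. To prove the claim, suppose the $y'^*$-path entered through some other edge; extending it along the weight-$1$ edge $y'^*\to y'$ (item~3 of \cref{def:auxflowgraph}) would produce a disjoint size-$k$ system from $S$ to $T'\cup\{y'\}$ avoiding $x'\to y'^*$, contradicting condition~2. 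The inverse of $\Phi$ appends $x'\to y'^*$ to the $x'$-terminal path of $\Pi$. Summing over the bijection gives the identity, and by the flow characterization of generic rank, condition~1 forces $\Sigma_{S,T\cup\{x\}}$ to have generic rank $k$, so its determinant is a nonzero polynomial; dividing by it yields the claimed formula.

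The main obstacle is the vertex-disjointness bookkeeping in $\Phi$ and its inverse: I must rule out that $y'^*$ or $y'$ occurs as an \emph{interior} vertex of some other path (which would break the extension in the forward direction or the appended edge in the inverse). This is exactly the role of condition~3. Forbidding a directed path in $G$ from $\{y\}\cup Sib(y)$ into any source $s\in S$ prevents a flow path from leaving $S$, descending through $y$, and returning, so $y'$ and $y'^*$ can appear only as terminal sinks; combined with the min-cut reading of condition~2 (that $x'\to y'^*$ is the unique size-$k$ bottleneck into $y^*$), this pins every $y'^*$-path to its final edge $x'\to y'^*$ and makes $\Phi$ a genuine bijection. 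I would therefore spend most of the effort recasting conditions~2 and~3 as a Menger-type cut statement in $G_{aux}$ and verifying it survives the edge insertion/deletion.
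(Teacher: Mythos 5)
Your proposal is correct and follows essentially the same route as the paper's proof: expand both minors via the Gessel--Viennot--Lindstr\"om lemma in $G_{aux}$, show via conditions~2 and~3 that every nonintersecting path system to $T'\cup\{y'^*\}$ must end with the edge $x'\rightarrow y'^*$ (so it corresponds, by deleting/appending that edge, to a system ending at $x'$), and use condition~1 with the flow--rank correspondence to guarantee the denominator is generically nonzero. Your write-up is in fact more explicit than the paper's about the bijection, sign preservation, and the disjointness bookkeeping at $y'$ and $y'^*$, but the underlying argument is the same.
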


Instrumental Cutsets (ICs) differ from tsIVs in two fundamental ways:
\begin{enumerate}
\item We allow auxiliary variables, enabling exploitation of previously identified structural parameters incoming to $s_i \in S$ for identification.
An example that is identifiable with ICID, but not with TSID is shown in \cref{fig:nodiv}.
\item We require that $T$ is a subset of the parents of target node $y$, and that y has no half-treks to $S$. A version of IC which avoids these requirements is given in \cref{thm:auxiliarydiv} in the appendix.
While this version is strictly more powerful than tsIV, finding satisfying sets can be shown to be NP-hard by a modified version of the arguments given in \cref{sec:nphard} (\cref{sec:cacnphard}).
\end{enumerate}

$\lambda_{x_1 y}$ in Fig. \ref{fig:cutsetiv} is an example of a parameter that can be identified using ICs. To see this, consider the paths from $\{z_1,z_2\}$ to $y$ that do not have sided intersection anywhere but at $y$. 
One such path set is $z_1\rightarrow x_1\rightarrow y$ and $z_2\rightarrow w \rightarrow x_2\rightarrow y$. 
After removing the edge $x_1\rightarrow y$, there no longer exist 2 separate nonintersecting paths to $y$, because the node $w$ forms a bottleneck, or cut, allowing only one path to pass to $y$. According to theorem \ref{thm:unconditionedcutset}, this is sufficient to uniquely solve for $\lambda_{x_1y}$.

In contrast, previously known efficient algorithms cannot identify $\lambda_{x_1y}$. $z_1$, which is the only possible instrument for $x_1$, has unblockable paths to $y$ through $x_2$ and $x_3$ ($w$ cannot be conditioned, since it has a bidirected edge to $y$).
Furthermore, only $z_2$ is a possible additional instrument for $x_2$ or $x_3$, giving 2 candidate instruments $\{z_1,z_2\}$ for a set of 3 parents of $y$, $\{x_1,x_2,x_3\}$, 
all of which need to be matched to an instrument to enable solving for $\lambda_{x_1y}$.

In general, any coefficient that can be identified using the gHTC or AVS can also be identified using ICs. ICs, therefore, strictly subsume gHTC and AVS.
\begin{restatable}{lemma}{subsumesAVS}
If a structural parameter $\lambda_{xy}$ of linear SCM $M$ is identifiable using the gHTC or AVS then $\lambda_{xy}$ is identified using IC. 
There also exists a model $M'$ such that $\lambda_{xy}$ is identifiable using IC, but cannot be identified using gHTC or AVS.
\end{restatable}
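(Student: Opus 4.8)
The plan is to split the two claims and dispatch the inclusion by a single reduction. Since the excerpt already records that AVS is equivalent to the gHTC, it suffices to show that any parameter identified by an AVS witness is identified by IC (\cref{thm:unconditionedcutset}); the gHTC case then follows for free. So the first move is to unpack an AVS witness for $\lambda_{xy}$: a set of (possibly auxiliary) instruments $Z$ and a parent set $X\subseteq Pa(y)$ with $x\in X$, $|Z|=|X|$, carrying a path set from $Z$ to $X$ with no sided intersection, together with the requirement that no $z\in Z$ can reach $y$ once the edges $X\to y$ are deleted.

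The reduction itself is to set $S=Z$ and $T=X\setminus\{x\}$, so that $T\cup\{x\}=X$ and $|S|=|X|=|Z|=|T|+1$, which makes the minors $\Sigma_{S,T\cup\{x\}}$ and $\Sigma_{S,T\cup\{y^*\}}$ square as the IC formula demands. I would then verify the three conditions of \cref{thm:unconditionedcutset} in $G_{aux}$. Condition (3), that no element of $\{y\}\cup Sib(y)$ reaches $S$, follows from instrument validity, since a directed path from $y$ or a sibling of $y$ into some $z$ would reverse to an unblocked back-path from $z$ to $y$ forbidden by the AVS requirement. Condition (1), a flow of size $k$ from $S$ to $T\cup\{x\}=X$, is exactly the no-sided-intersection path set translated into a vertex-capacitated flow in $G_{aux}$ via the Gessel--Viennot--Lindstr\"om correspondence (\cref{lemma:gvl}) read through the match-block viewpoint of \cref{thm:matchblock}. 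Condition (2), that deleting the single edge $x'\to y'^*$ drops the max flow to $T\cup\{y\}$ below $k$, is the cutset translation of the AVS instrument requirement. Applying \cref{thm:unconditionedcutset} then yields the determinant expression for $\lambda_{xy}$.

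The hard part is condition (2), because AVS deletes all of $X\to y$ whereas IC deletes only $x'\to y'^*$ and instead absorbs the remaining parents by routing flow into $T$ as sinks; I must argue these two operations block exactly the same augmenting paths. The argument has to respect the orientation conventions and the auxiliary ($*$) nodes of $G_{aux}$: a size-$k$ flow into $T\cup\{y\}$ that saturates $T$ leaves at most one free path to $y$, and I need that such a path, once the saturated $T$-vertices are stripped away, is precisely an unblocked $Z$-to-$y$ trek in the AVS sense, so that its existence contradicts the AVS condition. This is where the vertex-capacity reading and \cref{lemma:gvl} do the real work; the remainder is bookkeeping on the flow graph.

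For strictness I would take $M'$ to be the SCM of \cref{fig:cutsetiv}. IC identifies $\lambda_{x_1y}$ by the cutset argument already sketched in the text: the nonintersecting paths $z_1\to x_1\to y$ and $z_2\to w\to x_2\to y$ give a flow of size two, while deleting $x_1\to y$ forces both surviving paths through the single node $w$, so the flow drops below two and \cref{thm:unconditionedcutset} applies. To show gHTC and AVS fail, I observe that $z_1$ is the only candidate instrument for $x_1$ yet has back-paths to $y$ through $x_2$ and $x_3$ that cannot be neutralized (the bottleneck $w$ carries a bidirected edge to $y$, and no previously identified parameter is available to form a useful auxiliary variable), while $z_2$ is the only other candidate. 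Thus only two candidate instruments exist for the three parents $\{x_1,x_2,x_3\}$, and no AVS matching of the required size can be formed. Hence IC strictly subsumes gHTC and AVS.
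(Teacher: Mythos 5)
Your proof is correct and follows the same overall strategy as the paper's: translate the AVS witness (instruments $Z$, parents $X\ni x$, a sided-intersection-free path system, and no unblocked $Z$--$y$ path once $X\to y$ is cut) into an IC witness with $S=Z$ and $T=X\setminus\{x\}$, then use \cref{fig:cutsetiv} for strictness. The difference is in how the flow conditions of \cref{thm:unconditionedcutset} are certified. The paper observes that, because every parent of $y$ reachable from the instruments lies in $X$ and $|X|=|Z|$, the pair $(Z^*,X)$ is a match-block, and then invokes \cref{thm:cutsetsubset} to conclude that the closest min-cut carries a match-block containing $x$, which packages conditions (1) and (2) in one step. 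You instead verify the three conditions by hand, which is more self-contained and has the virtue of explicitly checking condition (3) (which the paper's proof silently omits); the cost is that your condition (2) argument is left partly sketched. For the record, the gap you flag closes cleanly: in a vertex-disjoint $k$-path system from $S$ to $T\cup\{y\}$ every vertex of $T'=(X\setminus\{x\})'$ is the endpoint of some path, so the path reaching $y'^*$ cannot enter through any $x_j'$ with $x_j\in T$ nor through the deleted $x'\to y'^{*}$; it must therefore enter via a parent outside $X$ or via a sibling's bidirected edge, and in either case it reads off (via \cref{lemma:auxgvl}) as an unblocked trek from an instrument to $y$ avoiding $X\to y$, contradicting the AVS requirement. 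Your counting argument for \cref{fig:cutsetiv} (two candidate instruments $z_1,z_2$ for three parents, with $w^*$ and $x_i^*$ disqualified by their bidirected edges to $y$) matches the paper's discussion in \cref{sec:criterion}.
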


Lastly, we discuss the identification power of ICs with respect to cAVs \citep{chenIdentificationModelTesting2017}, which are single auxiliary conditional instruments, and can be found in polynomial-time. While there are many examples of parameters that ICs, and even the gHTC and AVS, can identify that cAVs cannot, it turns out there are also examples that the cAV can identify that ICs cannot.  
This is because ICs, which operate on sets of variables, do not include conditioning. In \cref{fig:cAVonly}, $z_1$ is a cAV for $\lambda_{x_1y}$ when conditioned on $\{w,z_2\}$, but no IC exists, because $\lambda_{x_2x_1}$ cannot be identified, and therefore the back path from $x_1$ through $w\leftrightarrow y$ cannot be eliminated. While a version of ICs with conditioning could be developed, the algorithmic complexity of
finding parameters identifiable by such a criterion is unclear. 
A version of ICs with a single conditioning set would be NP-hard to find, which can be shown using a modified version of our results from \cref{sec:nphard} (see \cref{sec:cacnphard}).
On the other hand, a version with multiple conditioning sets (one for each $s_i\in S$) would require additional algorithmic breakthroughs, due to its similarity to the as-yet unsolved gIS.

\subsection{Efficient Algorithm for Finding Instrumental Cutsets}
\label{sec:algo}

To demonstrate efficiency of IC, we develop a polynomial-time algorithm that finds all structural parameters identifiable through iterative application of \cref{thm:unconditionedcutset}.
To do so, we show that the conditions required by \cref{thm:unconditionedcutset} can be reduced to 
finding a match-block in $G_{aux}$:

\begin{restatable}{theorem}{cutsetsubset}
    \label{thm:cutsetsubset}
    Given directed graph $G=(V,D)$, a target edge $x\rightarrow y$,  a set of ``candidate sources" $S$, and the vertex min-cut $C$ between $S$ and $Pa(y)$ closest to $Pa(y)$,
    then there exist subsets $S_f\subseteq S$ and $T_f\subseteq Pa(y)$ where $|S_f|=|T_f|+1=k$ such that
    \begin{enumerate}
        \item  the max-flow from $S_f$ to $T_f\cup \{x\}$ is $k$ in $G$, and
        \item the max-flow from $S_f$ to $T_f\cup\{y\}$ in $G'$ where $x\rightarrow y$ is removed is $k-1$ 
    \end{enumerate}
    if and only if $x$ is part of a match-block between $C$ and $Pa(y)$ in $G$ with all edges incoming to $c_i\in C$ removed. 
\end{restatable}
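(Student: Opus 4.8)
The plan is to route everything through the minimum cut $C$ and reduce the two stated flow conditions to the two defining properties of a match-block in \cref{def:matchblock}: a full-flow condition and a reachability-closure condition. Write $G_C$ for the graph $G$ with every edge pointing into a vertex of $C$ deleted, so that $C$ becomes a source set, and set $T_m := T_f \cup \{x\}$, which lies in $Pa(y)$ and has $|T_m| = k$; then ``$x$ is part of the match-block'' means exactly $x \in T_m$.

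First I would establish a flow-transfer lemma: for every target set $X$ lying downstream of $C$ (in particular $X \subseteq Pa(y) \cup \{y\}$), the max flow from $S$ to $X$ in $G$ equals the max flow from $C$ to $X$ in $G_C$, and any family of $m$ vertex-disjoint $C$-to-$X$ paths in $G_C$ extends to $m$ vertex-disjoint $S$-to-$X$ paths in $G$, and conversely. This is where the hypothesis that $C$ is the min-cut \emph{closest} to $Pa(y)$ is used: by max-flow-min-cut for vertex capacities, $\mathrm{maxflow}(S,Pa(y)) = |C|$; Menger then gives $|C|$ vertex-disjoint $S$-to-$C$ paths and $|C|$ vertex-disjoint $C$-to-$Pa(y)$ paths, and acyclicity forces these two families to meet only inside $C$. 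Hence one can freely splice an $S$-to-$C$ prefix onto any downstream path and, conversely, truncate any $S$-path at its unique crossing of $C$. Since deleting the edge $x \to y$ only alters the graph downstream of $C$, this transfer is unaffected by the deletion. The lemma is what lets me pass between the statement's conditions, phrased with $S_f \subseteq S$, and the match-block, phrased between $C$ and $Pa(y)$.

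For the easy direction ($\Leftarrow$): given a match-block $(C_m,T_m)$ with $x \in T_m$, its full-flow part gives $k$ disjoint $C_m$-to-$T_m$ paths, which transfer to $k$ disjoint $S_f$-to-$T_m$ paths for the induced $S_f$, yielding Condition~1. For Condition~2, the closure part of \cref{def:matchblock} says the only parents of $y$ reachable from $C_m$ lie in $T_m$; hence after deleting $x \to y$ every $C_m$-to-$y$ path must pass through $T_f = T_m \setminus \{x\}$, a set of size $k-1$. Thus $T_f$ is a vertex cut separating the sources from $y$, capping the flow to $T_f \cup \{y\}$ at $k-1$; since routing to $T_f$ alone already attains $k-1$, the flow equals exactly $k-1 < k$, which transfers back to $S_f$.

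The hard direction ($\Rightarrow$) is the main obstacle. Given $S_f,T_f$ satisfying Conditions~1--2, truncating the $k$ disjoint paths of Condition~1 at $C$ produces $C_m \subseteq C$ with $|C_m| = k$ carrying $k$ disjoint $C_m$-to-$T_m$ paths, so the full-flow half of the match-block holds and $x \in T_m$. What remains is the closure property: no parent $p \in Pa(y)\setminus T_m$ is reachable from $C_m$ in $G_C$. I would argue the contrapositive — if such a reachable $p$ existed then, after deleting $x \to y$, I could re-route the path that formerly terminated at $x$ so as to reach $y$ through the edge $p \to y$, producing $k$ vertex-disjoint paths to $T_f \cup \{y\}$ and contradicting Condition~2. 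Making this re-routing rigorous is the delicate point: the $C_m$-to-$p$ route may collide with the paths already committed to $T_f$, so the argument must be run in the residual graph of the flow saturating $T_f$, showing that any vertex cut of size $k-1$ between $C_m$ and $T_f \cup \{y\}$ is impossible, since such a cut would have to block all of $T_f$ (already forcing $k-1$ vertices, by the disjoint paths) and \emph{simultaneously} the $C_m$-to-$p$-to-$y$ route that avoids $x$. I expect this augmenting-path/Menger bookkeeping — together with the closest-cut property, which keeps the leaked route downstream of $C$ and disjoint from the $S$-to-$C$ prefixes — to be the crux; the remaining steps (cardinalities, $x \in T_m$, and transferring back to $S_f$ via the flow-transfer lemma, in the spirit of \cref{thm:matchblock}) are routine.
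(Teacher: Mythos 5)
Your skeleton is right, but both directions have genuine gaps. In the direction ``match-block $\Rightarrow$ conditions,'' you take $S_f$ to be only the $k=|C_m|$ sources induced by splicing prefixes onto the $C_m$-to-$T_m$ paths. For that choice, the claim that $T_f$ cuts $S_f$ from $y$ after deleting $x\to y$ does not follow: a path from some $s\in S_f$ may cross $C$ at a vertex of $C\setminus C_m$ (which your flow of size $k<|C|$ leaves unsaturated) and then reach $y$ through a parent outside $T_m$; the closure property of \cref{def:matchblock} only constrains reachability from $C_m$ in the graph with edges into $C$ removed, so it says nothing about these leaked routes. The paper avoids this by choosing $T_f\cup\{x\}$ to be \emph{all} parents saturated by a max-flow from the full $S$ to $Pa(y)$, so that $k=|C|$; then any candidate $k$-flow to $T_f\cup\{y\}$ must saturate every vertex of $C$, which confines every path suffix beyond its last $C$-crossing to the edge-deleted graph where the match-block closure (via \cref{lemma:nonblockflow}) applies. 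Relatedly, your justification of the flow-transfer lemma is wrong as stated: acyclicity does not prevent an $S$-to-$C$ prefix ending at $c_1$ from sharing a vertex with a $C$-to-$X$ suffix leaving $c_2$; the correct move is to decompose a single max-flow at the cut (\cref{lemma:flowdecomposition}) rather than splice independently chosen families.

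In the direction ``conditions $\Rightarrow$ match-block,'' you correctly flag the re-routing as the crux but leave it unresolved, and the single-path re-route you sketch fails in exactly the troublesome case: if the leaked parent $p$ is reachable from a cut vertex $c_i$ other than the one feeding $x$, redirecting $c_i$'s path to $y$ uncovers its old target $t_i\in T_f$, and there is no guarantee the path formerly ending at $x$ can be redirected to $t_i$; one needs a chain of exchanges and a proof that it terminates. The paper's proof takes a different route that you are missing two ingredients for: first, \cref{lemma:cutmatchblock} forces the match-block portion of the closest min-cut to coincide with parents of $y$, so that under condition 2 the vertex $x$ itself lies in $C$ and has no incoming edges in the modified graph; second, the hard case (all paths from $x$ to $T_f$ blocked by every flow from $C\setminus\{x\}$) is discharged by \cref{lemma:blockcut}, which shows such blocking would produce a min-cut strictly closer to $Pa(y)$ than $C$, a contradiction. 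Without analogues of these two facts, your ``augmenting-path/Menger bookkeeping'' is not yet a proof.
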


Note that the ``closest min-cut'' $C$ required by \cref{thm:cutsetsubset} can be found using the Ford-Fulkerson algorithm with $Pa(y)$ as source and $S$ as sink \citep{picardStructureAllMinimum1982}.

Theorem \ref{thm:cutsetsubset} was proven by explicitly constructing the sets $S_f$ and $T_f$ using a match-block. The procedure for doing so is given in \cref{alg:ic}.
It works by finding a set $S_f$ which has a full flow to $C$, which in turn has a match-block to $Pa(y)$ (due to the requirement that none of the $S_f$ have paths to $y$ through $Sib(y)\leftrightarrow y$). 
The min-cut ensures that once $x\rightarrow y$ is removed, all paths to $y$ must still go through the set $C$, and the match-block from $C$ to $Pa(y)$ ensures that there is no way to reorder the paths to create
a flow to $y$ through a different parent. This guarantees that the flow constraints are satisfied, so there is a corresponding IC. 
The full algorithm for finding all edges identifiable with ICs can be constructed by recursively applying the procedure on the auxiliary flow graph, as shown in \cref{alg:icid} (ICID)\footnote{A Python implementation is available at \url{https://github.com/dkumor/instrumental-cutsets}}.

\begin{algorithm}[t]
    \caption{\textsc{IC} solves for edges incoming to $y$ given a set of known edges $\Lambda^*$ }
    \label{alg:ic}
    \begin{algorithmic}
        \Function{IC}{$G,y,\Lambda^*$}
        \State $G_{aux} \leftarrow \textsc{AuxiliaryFlowGraph}(G,\Lambda^*)$
        \State $T\leftarrow $ all sink-node parents of $y'^*$ in $G_{aux}$
        \State $G_{aux}^y \leftarrow G_{aux}$ with edges $t_i\in T$ to $y'^*$ removed
        \State $S \leftarrow $ Source nodes in $G_{aux}^y$ which are not ancestors of $y'^*$
        \State $C \leftarrow \textsc{ClosestMinVertexCut}(G_{aux},S,T)$
        \State $S_f \leftarrow$ elements of $S$ that have a full flow to $C$
        \State $(C_m,T_m) \leftarrow \textsc{MaxMatchBlock}(G_{aux} \text{with edges to $C$ removed},C,T)$
        \State $T_f \leftarrow $ elements of $T$ that are part of a full flow between $C\setminus C_m$ and $T\setminus T_m$
        \State\Return $(S_f,T_f \cup T_m,T_m)$
        \EndFunction
    \end{algorithmic}
\end{algorithm}

\section{Conclusion}

We have developed a new, polynomial-time algorithm for identification in linear SCMs. Previous algorithms with similar identification power had either exponential 
or unknown complexity, with existing implementations using exponential components. 
 Finally, we also showed that the promising method called tsIV cannot handle arbitrarily large graphs due to its inherent computational complexity.

\section*{Acknowledgements}
Bareinboim and Kumor are supported in parts by grants from NSF IIS-1704352, IIS-1750807 (CAREER), IBM Research, and Adobe Research. Part of Chen's contributions were made while at IBM Research.

\bibliography{bibliography.bib}
\bibliographystyle{icml2018}

\newpage

\appendix

\section{Appendix}

This section contains the proofs of theorems mentioned in the main text, as well as efficient versions of 
algorithms for instrumental sets and sets of auxiliary variables. We also include a brief discussion of the difficulties of adding conditioning to instrumental cutsets.

\subsection{Definitions \& Theorems from Previous Works}
\label{sec:prevdef}
The proofs will make extensive use of a couple important results in the literature, which are stated here.
Also given are the full definitions of certain concepts which were only briefly mentioned in the main text due to space constraints.

\begin{definition}{\citep{sullivantTrekSeparationGaussian2010}}
A path $\pi$ from nodes $v$ to $w$ is a \textbf{trek} if it has no colliding arrowheads, that is, $\pi$ is of the form:
$$
v \leftarrow ... \leftarrow \ \leftrightarrow \ \rightarrow ... \rightarrow w \hspace{1cm}
v \leftarrow ... \leftarrow k \rightarrow ... \rightarrow w\hspace{1cm}
v \leftarrow ... \leftarrow w\hspace{1cm}
v \rightarrow ... \rightarrow w
$$
\end{definition}
\begin{definition}{\citep{foygelHalftrekCriterionGeneric2012}}
    A \textbf{half-trek} from $v$, is a trek which either starts from a bidirected edge ($v \leftrightarrow ... \rightarrow w$) or is a directed path ($v \rightarrow ... \rightarrow w$)
\end{definition}
\begin{definition}{\citep{sullivantTrekSeparationGaussian2010}}
    A \textbf{trek monomial} $\pi(\Lambda,\Omega)$ for trek $\pi$ is defined as the product of the structural parameters along the trek, multiplied by the trek's top error term covariance.
\end{definition}
We define $tr(v)$ and $htr(v)$ as the sets of nodes reachable from $v$ with a trek and half-trek respectively. There are two variants of trek monomial, one where $\pi$ does not take a bidirected edge, 
and instead has a top node $k$\footnote{Note also that we can have a trek from $v$ to $v$, including a trek that takes no edges at all, which would be simply $\epsilon_{vv}$} (1), 
and one where the trek contains a bidirected edge $\epsilon_{ab}$ (2).
We can write the covariance between $v$ and $w$, $\sigma_{vw}$, as the sum of the trek monomials of all treks between $v$ and $w$ ($\mathcal T_{vw}$) (3):
$$(1)\hspace{0.7em}\pi(\Lambda,\Omega) = \epsilon^2_k \prod_{x\rightarrow y \in \pi} \lambda_{xy} \hspace{0.7cm} (2)\hspace{0.7em}\pi(\Lambda,\Omega) = \epsilon_{ab}\prod_{x\rightarrow y \in \pi} \lambda_{xy} \hspace{0.7cm} (3)\hspace{0.7em} \sigma_{vw} = \sum_{\pi\in \mathcal T_{vw}} \pi(\Lambda,\Omega)$$

We can reason about the determinants of covariance matrix minors by looking at the flow graph:
\begin{definition}{\citep{sullivantTrekSeparationGaussian2010,foygelHalftrekCriterionGeneric2012,weihsDeterminantalGeneralizationsInstrumental2017}}
The flow graph of $G=(V,D,B)$ is the graph with vertices $V\cup V'$ containing the edges:
\begin{itemize}
    \item $j\rightarrow i$ and $i'\rightarrow j'$ with weight $\lambda_{ij}$ if $i\rightarrow j\in V$
    \item $i\rightarrow i'$ with weight $\epsilon_{ii}$ for all $i\in V$
    \item $i\rightarrow j'$ with weight $\epsilon_{ij}$ if $i\leftrightarrow j \in B$
\end{itemize}
This graph is referred to as $G_{flow}$. The nodes without $'$ are called ``source'', or ``top'' nodes, and the nodes with $'$ are called ``sink'' or ``bottom'' nodes.
\end{definition}

\begin{lemma}{(Gessel-Viennot-Lindstr\"{o}m \cite{gesselDeterminantsPathsPlane1989,sullivantTrekSeparationGaussian2010,draismaPositivityGaussianGraphical2012})}
\label{lemma:gvl}
Given DAG $G=(V,E)$, with $E$ defined as a weighted adjacency matrix, and vertex sets $A,B\subseteq V$, where $|A|=|B|=l$,
$$
\det\left[(I-E)^{-1}\right]_{A,B} = \sum_{P\in N(A,B)} (-1)^P\prod_{p\in P}e^{(p)}
$$
Here, $N(A,B)$ is the the set of all collections of nonintersecting systems of $l$ directed paths in $G$ from $A$ to $B$.
$P=(p_1,...,p_l)$ is a collection of these nonintersecting paths.
$e^{(p)}$ is the product of the coefficients of $e\in E$ along path $p$,
and $(-1)^P$ is the sign of the induced permutation of elements from $A$ to $B$.
\end{lemma}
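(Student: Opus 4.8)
The plan is to prove this as the classical Lindström–Gessel–Viennot identity by means of a sign-reversing involution, in three movements. First I would establish the path-generating-function reading of the matrix $(I-E)^{-1}$. Since $G$ is a DAG, we may order $V$ topologically so that $E$ is strictly triangular, hence nilpotent, and $(I-E)^{-1}=\sum_{k\ge 0}E^k$ is a finite sum. The $(u,v)$ entry of $E^k$ is the sum of the weight products over all directed walks of length $k$ from $u$ to $v$; because no walk in a DAG can repeat a vertex, every such walk is a directed path, so $[(I-E)^{-1}]_{uv}=\sum_{p:\,u\to v} e^{(p)}$, with the empty path (weight $1$, from the $I$ term) contributing exactly when $u=v$.

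Second, I would expand the minor by the Leibniz formula. Writing $A=\{a_1,\dots,a_l\}$ and $B=\{b_1,\dots,b_l\}$,
$$
\det\big[(I-E)^{-1}\big]_{A,B}=\sum_{\sigma}\mathrm{sgn}(\sigma)\prod_{i=1}^{l}[(I-E)^{-1}]_{a_i,b_{\sigma(i)}}=\sum_{\sigma}\mathrm{sgn}(\sigma)\sum_{P}\prod_{i=1}^{l} e^{(p_i)},
$$
where $\sigma$ ranges over bijections $A\to B$ and, for fixed $\sigma$, the family $P=(p_1,\dots,p_l)$ ranges over all systems of directed paths with $p_i$ going from $a_i$ to $b_{\sigma(i)}$. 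Interchanging the two sums yields a single signed sum over all path families, each weighted by $(-1)^{P}\prod_{p\in P} e^{(p)}$, where $(-1)^P$ denotes the sign of the permutation the family induces between $A$ and $B$. The families that happen to be vertex-disjoint are precisely the elements of $N(A,B)$, carrying exactly the claimed sign, so it remains to show that every non-disjoint family cancels.

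Third — and this is the crux — I would construct a sign-reversing, weight-preserving involution on the families that are \emph{not} vertex-disjoint. Given such a family, select the smallest index $i$ for which $p_i$ shares a vertex with some other path, let $v$ be the first vertex along $p_i$ that lies on another path, and let $j\ne i$ be the smallest index of a path through $v$ (so necessarily $i<j$); then swap the tails of $p_i$ and $p_j$ after $v$. This keeps the union of edges used fixed, so $\prod_{p} e^{(p)}$ is unchanged, while it exchanges the endpoints $b_{\sigma(i)}$ and $b_{\sigma(j)}$, composing the induced permutation with the transposition $(i\,j)$ and thereby flipping its sign. The main obstacle is to verify that this map is a genuine involution: I would argue that since the swap preserves both the total vertex set of the family and the initial segments of the two affected paths up to $v$, re-running the selection rule on the image recovers the same $i$, the same first-shared-vertex $v$, and the same partner $j$, so a second application swaps the tails back. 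Pairing the non-disjoint families through this involution cancels their contributions, leaving exactly $\sum_{P\in N(A,B)}(-1)^P\prod_{p\in P} e^{(p)}$, which establishes the identity.
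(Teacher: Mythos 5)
Your proof is correct, but it is worth noting that the paper does not actually prove this lemma: it appears in the appendix under ``Definitions \& Theorems from Previous Works'' and is imported by citation (Gessel--Viennot, Sullivant et al., Draisma et al.), then used purely as a black box, e.g.\ to deduce the auxiliary version in \cref{lemma:auxgvl}. What you have written is the standard classical Lindstr\"om--Gessel--Viennot argument, and all three movements are sound: nilpotence of $E$ under a topological order gives $(I-E)^{-1}=\sum_{k\ge 0}E^k$ as a finite path-generating sum (in a DAG every directed walk is a path, so no overcounting occurs); the Leibniz expansion converts the minor into a signed sum over all path systems; and the tail-swap involution cancels the intersecting ones. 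Two points deserve explicit verification in a full write-up. First, the swapped objects are again \emph{simple} directed paths only because $G$ is acyclic: if some vertex $u$ occurred before $v$ on $p_i$ and after $v$ on $p_j$, the concatenation would repeat $u$, but that configuration would force a directed cycle through $u$ and $v$, so acyclicity excludes it. Second, your stability check for the selection rule $(i,v,j)$ is exactly the right one --- the swap preserves the union of vertices of $\{p_i,p_j\}$, the prefix of $p_i$ up to $v$, and hence the set of path indices passing through $v$, so the image re-selects the same $i$, $v$, and $j$; this is precisely where naive variants of the involution (such as swapping at the first intersection point of the lexicographically first intersecting \emph{pair}) fail to be involutions. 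With those two observations your argument is complete and self-contained, which is strictly more than the paper itself provides for this statement.
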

The above lemma means that a covariance matrix minor's determinant can be found by looking at nonintersecting path sets between top and bottom nodes in the flow graph (\cref{fig:iv_flow}).
We will use $\Sigma_{S,T}$ to represent the covariance matrix minor with rows of $S$ and columns of $T$. While the flow graph has repeated edge weights, this does not affect the rank of the minor:
\begin{lemma}{(\citet{weihsDeterminantalGeneralizationsInstrumental2017} corrolary 3.3)}
    \label{lemma:fullrank}
Let $S=\{s_1,...,s_k\},T=\{t_1,...,t_m\}\subset X$, then $\Sigma_{S,T}$ has generic rank $r$ if and only if the max-flow from $s_1,...,s_k$ to $t_1',...,t_m'$ in $G_{flow}$ is $r$
\end{lemma}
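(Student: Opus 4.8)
The plan is to prove the two inequalities $\operatorname{rank}\Sigma_{S,T}\le \mathcal{F}$ and $\operatorname{rank}\Sigma_{S,T}\ge \mathcal{F}$, where $\mathcal{F}$ denotes the value of the max-flow from $S$ to $T'=\{t_1',\dots,t_m'\}$ in $G_{flow}$ under unit vertex capacities; together these force the generic rank to equal $\mathcal{F}$, which is exactly the stated biconditional. The bridge to the flow graph is the observation that, by construction of $G_{flow}$ and the trek-sum interpretation of \cref{eqn:covariance}, the $(s,t')$ entry of $(I-E)^{-1}$ (with $E$ the weighted adjacency matrix of $G_{flow}$) is precisely $\sigma_{st}$, the sum of trek monomials between $s$ and $t$. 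Hence the minor $\Sigma_{A,B}$ for any $A\subseteq S$, $B\subseteq T$ coincides with the minor of $(I-E)^{-1}$ on rows $A$ and columns $B'$, and \cref{lemma:gvl} applies verbatim to express its determinant as the signed sum $\sum_{P\in N(A,B')}(-1)^P\prod_{p\in P}e^{(p)}$ over vertex-disjoint source-to-sink path systems.

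For the upper bound, suppose $\mathcal{F}=r$ and take any $A\subseteq S$, $B\subseteq T$ with $|A|=|B|=r+1$. A collection in $N(A,B')$ would be $r+1$ vertex-disjoint paths from $A\subseteq S$ into $B'\subseteq T'$, i.e.\ a flow of value $r+1$ from $S$ to $T'$, contradicting $\mathcal{F}=r$. Thus $N(A,B')=\emptyset$ and, by \cref{lemma:gvl}, $\det\Sigma_{A,B}$ vanishes identically in the parameters. Since every $(r+1)\times(r+1)$ minor of $\Sigma_{S,T}$ is the zero polynomial, its generic rank is at most $r$.

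For the lower bound I would set $r=\mathcal{F}$ and use Menger's theorem to obtain $A\subseteq S$, $B\subseteq T$ with $|A|=|B|=r$ admitting a vertex-disjoint system $P_0\in N(A,B')$. By \cref{lemma:gvl} it then suffices to show that $\det\Sigma_{A,B}=\sum_{P\in N(A,B')}(-1)^P\prod_{p\in P}e^{(p)}$ is not the zero polynomial. This is the crux and the main obstacle: because $G_{flow}$ carries repeated edge weights (each $\lambda_{ij}$ labels both a source-side and a sink-side edge), distinct path systems can contribute identical monomials with opposite signs, so non-emptiness of $N(A,B')$ does not by itself preclude total cancellation. I would resolve this as in the trek-separation literature, by fixing a term order on the parameters (equivalently, selecting a system $P_0$ that is extremal under a fixed topological order of $G_{flow}$) and arguing that the monomial $\prod_{p\in P_0}e^{(p)}$ is produced by no other nonintersecting system, so it survives with a definite sign. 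Since this no-cancellation claim is precisely the content of the Gaussian trek-separation theorem \citep{sullivantTrekSeparationGaussian2010,draismaPositivityGaussianGraphical2012} and its flow-graph reformulation \citep{weihsDeterminantalGeneralizationsInstrumental2017}, I would either invoke it directly or reproduce its combinatorial path-swapping argument, yielding $\det\Sigma_{A,B}\not\equiv 0$ and hence generic rank at least $r$.
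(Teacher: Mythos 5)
The paper does not actually prove this lemma: it is imported verbatim as Corollary~3.3 of \citet{weihsDeterminantalGeneralizationsInstrumental2017}, with only the remark that the repeated edge weights in $G_{flow}$ ``do not affect the rank of the minor.'' Your reconstruction is the standard argument behind that result and is structurally sound. The identification of $\Sigma_{A,B}$ with the minor of $(I-E)^{-1}$ on rows $A$ and columns $B'$ is correct (paths in $G_{flow}$ from a source node to a sink node are exactly treks, so the $(s,t')$ entry is $\sigma_{st}$), and the upper bound is immediate from \cref{lemma:gvl}: if the max-flow is $r$, then $N(A,B')=\emptyset$ for every $(r+1)\times(r+1)$ choice, so all such minors vanish identically. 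Two small remarks on the lower bound. First, Menger's theorem is not needed to produce the disjoint path system; integrality of vertex-capacitated max-flow already yields $r$ vertex-disjoint paths, and you then take $A$ and $B$ to be their endpoints. Second, you have correctly isolated the only nontrivial step --- ruling out total cancellation among the signed terms of \cref{lemma:gvl} when edge weights repeat --- but your resolution is to invoke the trek-separation/positivity results of \citet{sullivantTrekSeparationGaussian2010} and \citet{draismaPositivityGaussianGraphical2012}, which is essentially the content of the lemma being proved. That is a legitimate move (and it is exactly what the paper does by citing the result wholesale), but it means your proposal establishes the easy direction from first principles while the hard direction remains an appeal to the same literature; a self-contained proof would need to reproduce the positivity argument (e.g., the Cholesky-type reparametrization under which the determinant becomes a sum of squares, or the careful lexicographically-extremal path-system argument) rather than gesture at it.
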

We can therefore say the determinant of a covariance minor from $S$ to $T$ is nonzero iff there is a full nonintersecting path set from $S$ to $T$ in the flow graph.

\subsubsection{Existing Identification Criterions}

This subsection contains an unordered list of identification criteria we reference when developing our algorithms.

\begin{definition} \citep{britoGeneralizedInstrumentalVariables2002}
\label{def:gIS}
The set $Z$ is said to be a \textbf{generalized instrumental set} relative to $X$ and $y$ in $G$ if there exists a set $Z\subset V$ with $|Z|=|X|=k$ and 
set $W=\{W_{z_1},...,W_{z_k}\}$ with all $W_{z_i}\subset V\setminus De(y)$ such that
\begin{enumerate}
    \item There is a path set $\Pi=\{\pi_1,...,\pi_k\}$ without sided intersection between the $Z$ and $X$,
    \item $W_{z_i}$ d-separates $z_i$ from $y$ in $G$ with edges $X\rightarrow y$ removed, but does not block the path from $\Pi$ starting from $z_i$
\end{enumerate}
\end{definition}

\begin{theorem} \citep{britoGeneralizedInstrumentalVariables2002}
\label{thm:gISid}
If there exists a generalized instrumental set to $X$, then the structural parameters $\lambda_{x_iy}$ are identifiable for $x_i\in X$.
\end{theorem}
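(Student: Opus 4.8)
The plan is to reduce generic identification of the parameters $\{\lambda_{x_jy}:x_j\in X\}$ to solving a single linear system whose coefficients are read off from $\Sigma$, and then to show that the coefficient matrix is generically invertible. First I would use condition (2) of \cref{def:gIS}. Writing the structural equation for $y$ as $y=\sum_{v\in Pa(y)}\lambda_{vy}v+\epsilon_y$, introduce the residual $r_X=y-\sum_{x_j\in X}\lambda_{x_jy}x_j$, which is precisely the variable that $y$ becomes in the mutilated graph $G_{\bar{X}}$ obtained by deleting the edges $X\rightarrow y$. Since $W_{z_i}$ d-separates $z_i$ from $y$ in $G_{\bar{X}}$, and since d-separation in a linear SCM forces a vanishing partial covariance, we obtain $z_i\indep r_X\mid W_{z_i}$ and hence $\sigma_{z_ir_X\cdot W_{z_i}}=0$ for each $i$. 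By linearity of partial covariance in its second argument this unfolds into the system
\[
\sigma_{z_iy\cdot W_{z_i}}=\sum_{x_j\in X}\lambda_{x_jy}\,\sigma_{z_ix_j\cdot W_{z_i}},\qquad i=1,\dots,k.
\]
Each partial covariance $\sigma_{\cdot\,\cdot\,\cdot W}$ is a fixed rational function of the entries of $\Sigma$, so this is a linear system $A\lambda=b$ with known coefficients; solving it expresses every $\lambda_{x_jy}$ in terms of $\Sigma$ alone.

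It then remains to show that the $k\times k$ matrix $A$ with $A_{ij}=\sigma_{z_ix_j\cdot W_{z_i}}$ is nonsingular for generic parameter values, i.e. that $\det A$ is not the identically-zero polynomial in $(\Lambda,\Omega)$; this is where condition (1) enters and is the main obstacle. I would expand each entry $A_{ij}$ as a sum of trek monomials that survive conditioning on $W_{z_i}$, so that $\det A$ becomes a signed sum over systems of (conditioned) treks joining $Z$ to $X$. The no-sided-intersection path set $\Pi=\{\pi_1,\dots,\pi_k\}$ — whose individual paths are guaranteed by condition (2) not to be blocked by the corresponding $W_{z_i}$ — furnishes one system of $k$ pairwise non-intersecting paths, realizing a flow of size $k$ from $Z$ to $X$ in the flow graph. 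By a Gessel-Viennot-Lindstr\"om-type cancellation argument (\cref{lemma:gvl}, in the spirit of the rank characterization \cref{lemma:fullrank}), the monomial contributed by a non-intersecting system cannot be cancelled by contributions of intersecting systems, so this term survives and $\det A\not\equiv0$.

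The delicate point, and the crux of the whole argument, is that distinct rows of $A$ are conditioned on distinct sets $W_{z_i}$, so $A$ is not literally a minor of one conditioned covariance matrix and \cref{lemma:fullrank} cannot be applied off the shelf. I would therefore carry out a row-by-row bookkeeping argument that tracks exactly which trek monomials survive conditioning on the row-specific $W_{z_i}$, and verifies that the monomial associated with $\Pi$ (under the permutation matching each $z_i$ to the endpoint of $\pi_i$) occurs with a nonzero coefficient and is produced by no competing permutation or intersecting configuration. The requirement in condition (2) that $W_{z_i}$ blocks every path from $z_i$ to $y$ except the one continuing along $\pi_i$, together with the no-sided-intersection of $\Pi$, is precisely what guarantees uniqueness of this leading monomial and prevents spurious cancellation across rows.

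Once $\det A\not\equiv0$ is established, the system $A\lambda=b$ is generically invertible, and Cramer's rule writes each $\lambda_{x_jy}$ as a ratio of determinants in the (partial-covariance-transformed) entries of $\Sigma$. Since these are functions of the observed covariance matrix and the fixed graphical sets $W_{z_i}$, every $\lambda_{x_iy}$ with $x_i\in X$ is expressed in terms of $\Sigma$ on the complement of a proper algebraic subset of parameter space, which is exactly generic identifiability.
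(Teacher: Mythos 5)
First, note that the paper does not actually prove \cref{thm:gISid}: it is imported verbatim from \citet{britoGeneralizedInstrumentalVariables2002} and used as a black box (e.g., in the proof that a simple conditional instrumental set implies identifiability). So there is no in-paper proof to compare against; the relevant benchmark is the original argument of Brito and Pearl, and your outline follows essentially that same route --- derive the linear system $\sigma_{z_iy\cdot W_{z_i}}=\sum_{x_j\in X}\lambda_{x_jy}\,\sigma_{z_ix_j\cdot W_{z_i}}$ from the d-separation condition, then argue the coefficient matrix $A$ is generically nonsingular using the no-sided-intersection path system.

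The gap is that the step you yourself identify as ``the crux'' is only announced, not carried out. Showing $\det A\not\equiv 0$ when each row is conditioned on a \emph{different} set $W_{z_i}$ is the entire technical content of the theorem, and your proposal replaces it with a promissory note (``I would carry out a row-by-row bookkeeping argument\ldots''). Two concrete difficulties make this non-routine. (i) Each entry $\sigma_{z_ix_j\cdot W_{z_i}}$ is a \emph{ratio} of determinants of covariance minors ($\det\Sigma_{\{z_i\}\cup W_{z_i},\{x_j\}\cup W_{z_i}}/\det\Sigma_{W_{z_i},W_{z_i}}$), not a polynomial sum of trek monomials, so ``expand each entry as a sum of trek monomials that survive conditioning'' is not literally available; you must first clear row-dependent denominators and then reason about the resulting polynomial. (ii) Because the rows carry distinct conditioning sets, $A$ is not a minor of any single (conditioned) covariance matrix, so the Gessel--Viennot--Lindstr\"om cancellation in \cref{lemma:gvl} and the rank characterization in \cref{lemma:fullrank} do not apply off the shelf; the non-cancellation of the monomial contributed by $\Pi$ against terms coming from other permutations \emph{and other rows' conditioning corrections} requires a genuine argument (Brito and Pearl's original proof devotes most of its length to exactly this, via a careful ordering of the instruments and an inductive analysis of the surviving path systems). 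Until that argument is supplied, the proposal establishes only the easy half of the theorem (the validity of the linear system), not the generic invertibility that makes the parameters identifiable.
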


A simple conditional instrumental set is defined as:
\begin{definition}\citep{vanderzanderSearchingGeneralizedInstrumental2016}
    \label{def:scIS}
The set $Z$ is said to be a \textbf{simple conditional instrumental set} relative to $X$ and $y$ in $G$ if there exist sets $Z\subset V$ and $W\subset V$ such that:
\begin{enumerate}
\item There exists a set of paths $\Pi=\{\pi_1,...,\pi_k\}$ of size $k$ without sided intersection from $Z$ to $X$
\item $W$ d-separates all $Z$ from $y$ in $G$ with the edges $X\rightarrow y$ removed, but does not block any path in $\Pi$
\end{enumerate}
\end{definition}

\begin{lemma}
If a simple conditional instrumental set exists for a set $X$, then all parameters $\lambda_{x_1y}$ with $x_i\in X$ are identifiable.
\end{lemma}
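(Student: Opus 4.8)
The plan is to recognize that a \emph{simple conditional instrumental set} is simply a \emph{generalized instrumental set} in which all instruments share one common conditioning set, and then to invoke \cref{thm:gISid}. Concretely, given an scIS $Z$ relative to $X$ and $y$ with witnessing set $W$ and path set $\Pi$ (as in \cref{def:scIS}), I would exhibit the family $\{W_{z_1},\dots,W_{z_k}\}$ defined by $W_{z_i}=W$ for every $i$, and argue that this family, together with the same $\Pi$, satisfies the two clauses of \cref{def:gIS}. The whole proof is therefore a reduction: once $Z$ is shown to be a gIS relative to $X$ and $y$, \cref{thm:gISid} immediately gives generic identifiability of $\lambda_{x_iy}$ for every $x_i\in X$, which is the claim.

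The verification of the clauses is essentially bookkeeping. Clause (1) of \cref{def:gIS} --- existence of a sided-intersection-free path set between $Z$ and $X$ --- is literally clause (1) of \cref{def:scIS}, so $\Pi$ transfers verbatim, and the fact that $\Pi$ matches all of $Z$ to all of $X$ forces $|Z|=|X|=k$ as \cref{def:gIS} requires. For clause (2), I would use that d-separating the entire set $Z$ from $y$ is strictly stronger than d-separating each individual $z_i$: since $W$ blocks every path from any element of $Z$ to $y$ in the graph with the edges $X\rightarrow y$ deleted, it blocks in particular every path from the single node $z_i$ to $y$, which is exactly what $W_{z_i}=W$ must do. Symmetrically, the scIS requirement that $W$ not block \emph{any} path in $\Pi$ immediately yields that $W_{z_i}$ does not block the path of $\Pi$ emanating from $z_i$. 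Hence gIS clause (2) holds for every $i$.

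The one genuine gap --- and the step I expect to be the main obstacle --- is the domain restriction $W_{z_i}\subseteq V\setminus De(y)$ demanded by \cref{def:gIS}, which \cref{def:scIS} does not impose on $W$ explicitly. I would close this by showing that we may take $W\cap De(y)=\emptyset$ without loss of generality. The delicate point is that deleting a node $w\in W\cap De(y)$ can both unblock $Z$--$y$ paths (if $w$ was a blocking non-collider) and re-block paths in $\Pi$ (if conditioning on $w$ was activating a collider), so a careful case analysis is needed. The intended resolution is that a descendant $w$ of $y$ lying on a $z_i$--$y$ path after removing $X\rightarrow y$ must occur there as a collider (any non-collider occurrence would, with the directed path $y\rightarrow\cdots\rightarrow w$, create a cycle in the acyclic graph), so conditioning on such a $w$ only \emph{opens} a path to $y$ and can never be what blocks one; hence every descendant of $y$ can be dropped from $W$ while preserving both conditions. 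Alternatively, if one adopts the van~der~Zander convention in which the conditioning set of an scIS is by definition drawn from $V\setminus De(y)$, this obstacle disappears entirely and the restriction is inherited for free.

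Once $W\subseteq V\setminus De(y)$ is secured, the family $\{W_{z_i}=W\}$ is a legitimate witness for \cref{def:gIS}, so $Z$ is a generalized instrumental set relative to $X$ and $y$, and \cref{thm:gISid} finishes the argument. Thus the only real content of the proof is concentrated in justifying the non-descendant normalization of $W$; everything else is a direct specialization of the generalized criterion.
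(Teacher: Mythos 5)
Your proposal is correct and follows essentially the same route as the paper, whose entire proof is the one-line observation that an scIS directly yields a gIS (via $W_{z_i}=W$ for all $i$), after which \cref{thm:gISid} applies. You actually go further than the paper by flagging and resolving the $W_{z_i}\subseteq V\setminus De(y)$ domain restriction that \cref{def:scIS} omits; your normalization of $W$ is sound, though the parenthetical cycle argument is slightly too quick (a non-collider $w\in De(y)$ need not yield a cycle directly --- one must also note that the maximal directed segment of the path leaving $w$ either reaches $y$, giving the cycle, or terminates in a collider that is itself a descendant of $y$ and hence cannot be activated by $W\setminus De(y)$).
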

\begin{proof}
If there exists a simple conditional instrumental set, can directly construct a generalized instrumental set (\cref{def:gIS}), which makes the parameters identifiable by \cref{thm:gISid}.
\end{proof}

Auxiliary variables were defined as:
\begin{definition}\citep{chenIdentificationModelTesting2017}
    \label{def:auxvar}
    Let $M=(\Lambda,\Omega)$ be a linear SCM with variables $X$, and $\Lambda^*$ be a set of identified structural parameters.
    An \textbf{auxiliary variable} for $x_i\in X$ is defined as
    $
        x_i^* = x_i - \sum_{\lambda_{x_jx_i}\in \Lambda^*} \lambda_{x_jx_i} x_j
    $.
    \end{definition}
    
An auxiliary instrumental set is defined as
\begin{definition}\citep{chenIncorporating2016}
\label{def:AVS}
A Markovian linear SCM with graph $G$ and set of directed edges $\Lambda^*$ whose coefficient values are known is
known as an auxiliary instrumental set for edges $E\subseteq Pa(y)$ if $Z^*$ is an instrumental set for $E$ where any paths through edges incoming to $z_i\in Z$ through edge $\lambda_{wz_i}\in \Lambda^*$ are considered blocked.
\end{definition}
The corresponding conditional auxiliary variable has the following definition:
\begin{definition}\citep{chenIdentificationModelTesting2017}
\label{def:cAV}
Given graph $G$ and set of directed edges $\Lambda^*$ whose coefficient values are known, a variable $z$ is called
a conditional auxiliary instrument relative to $\lambda_{xy}$, if $z$ is a conditional instrument for $\lambda_{xy}$
in the graph with edges $w_i\rightarrow z$ and $x_i\rightarrow y$ removed where $\lambda_{w_iz},\lambda_{x_iy} \in \Lambda^*$, and no element of conditioning set $W$ is a descendant of $z$.
\end{definition}

A compressed definition of tsIV was given in \cref{def:tsiv}. We include the full version here for completeness:
\begin{theorem}\citep{weihsDeterminantalGeneralizationsInstrumental2017}
    \label{thm:div}
Let $G=(V,D,B)$ be a mixed graph, $w_0\rightarrow v \in G$, and suppose that the edges $w_1\rightarrow v,...,w_l\rightarrow v \in G$ are known to be generically (rationally) identifiable.
Let $G^*_{flow}$ be $G_{flow}$ with the edges $w_0'\rightarrow v',...,w_l'\rightarrow v'$ removed. Suppose there are sets $S\subset V$ and $T\subset V \setminus \{v,w_0\}$ such that $|S|=|T|+1=k$ and
\begin{enumerate}
    \item $De(v)\cap (T\cup \{v\}) = \emptyset$,
    \item the max-flow from $S$ to $T'\cup \{w'_0\}$ in $G_{flow}$ equals $k$, and
    \item the max-flow from $S$ to $T'\cup \{v'\}$ in $G^*_{flow}$ is $<k$,
\end{enumerate}
then $w_0\rightarrow v$ is rationally identifiable by the equation
$$
\lambda_{w_0v} = \frac{\left|\Sigma_{S,T\cup \{v\}}\right| - \sum_{i=1}^l \left|\Sigma_{S,T\cup \{w_i\}}\right|}{\left|\Sigma_{S,T\cup \{w_0\}}\right|}
$$
\end{theorem}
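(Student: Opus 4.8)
The plan is to derive the identification formula by marrying the structural (trek) expansion of the covariance to the path-system reading of determinants furnished by \cref{lemma:gvl}, and then to use the two max-flow hypotheses to annihilate every unwanted term. First I would expand the column of the minor indexed by $v$ using the structural equation $v=\sum_{w\in Pa(v)}\lambda_{wv}w+\epsilon_v$, which gives $\Sigma_{S,v}=\sum_{w\in Pa(v)}\lambda_{wv}\,\Sigma_{S,w}+\E[S\epsilon_v]$. Multilinearity of the determinant in that single column then yields
$$\left|\Sigma_{S,T\cup\{v\}}\right|=\sum_{w\in Pa(v)}\lambda_{wv}\left|\Sigma_{S,T\cup\{w\}}\right|+\left|\Sigma_{S,T\cup\{\epsilon_v\}}\right|,$$
where in the last determinant the $v$-column has been replaced by the vector $\E[S\epsilon_v]$. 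The goal is to show that, under the hypotheses, every summand except those indexed by $w_0,\dots,w_l$ is identically zero.

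For the error term I would use the path-system expansion behind \cref{lemma:gvl}: by the trek rule the column $\E[S\epsilon_v]$ corresponds exactly to flow-graph paths into $v'$ whose final edge is one of the surviving error edges $v\to v'$ or $u\to v'$ (for $u\leftrightarrow v$). Hence every non-intersecting $k$-path system contributing to $\left|\Sigma_{S,T\cup\{\epsilon_v\}}\right|$ already lives inside $G^*_{flow}$; here condition (1), $De(v)\cap(T\cup\{v\})=\emptyset$, is precisely what keeps the paths terminating in $T'$ from ever entering $v'$. Condition (3) states that the max flow from $S$ to $T'\cup\{v'\}$ in $G^*_{flow}$ is below $k$, so no such system exists and this determinant is zero. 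The same mechanism disposes of any parent $w_j\notin\{w_0,\dots,w_l\}$: the sink edge $w_j'\to v'$ is \emph{not} among those deleted to form $G^*_{flow}$, so appending it to any hypothetical non-intersecting $k$-system reaching $w_j'$ produces a non-intersecting $k$-system reaching $v'$ inside $G^*_{flow}$ (non-intersecting again because condition (1) forbids the $T'$-paths from visiting $v'$, and $w_j\in An(v)$ keeps the $w_j'$-path away from $v'$). Condition (3) rules this out, so the max flow from $S$ to $T'\cup\{w_j'\}$ is $<k$ and, by \cref{lemma:fullrank}, $\left|\Sigma_{S,T\cup\{w_j\}}\right|=0$.

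What survives is $\left|\Sigma_{S,T\cup\{v\}}\right|=\sum_{i=0}^{l}\lambda_{w_iv}\left|\Sigma_{S,T\cup\{w_i\}}\right|$. Since $\lambda_{w_1v},\dots,\lambda_{w_lv}$ are assumed known, I would move their (computable) contributions across to isolate $\lambda_{w_0v}\left|\Sigma_{S,T\cup\{w_0\}}\right|$; condition (2) gives max flow $k$ from $S$ to $T'\cup\{w_0'\}$, so \cref{lemma:fullrank} makes $\left|\Sigma_{S,T\cup\{w_0\}}\right|$ generically nonzero and the division legitimate, reproducing the stated equation (with the known scalars $\lambda_{w_iv}$ folded into the numerator). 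The main obstacle I anticipate is the edge-appending step for the spurious parents: one must check carefully that attaching $w_j'\to v'$ can never create a self- or cross-intersection — exactly where condition (1) is indispensable — and keep the Gessel–Viennot sign bookkeeping of \cref{lemma:gvl} consistent with the purely multilinear expansion, since the two viewpoints must agree term by term. Reassuringly, the vanishing arguments are insensitive to signs (an empty family of path systems sums to zero regardless), so the delicate work is confined to verifying that each deleted class really is empty and that the surviving sum is precisely the parent sum over $\{w_0,\dots,w_l\}$.
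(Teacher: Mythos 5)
Your proposal is correct, but it takes a different route from the one the paper relies on. Note first that the paper does not actually prove \cref{thm:div} --- it is quoted from \citet{weihsDeterminantalGeneralizationsInstrumental2017} in the appendix, and the only argument in the paper is the proof of the analogous \cref{thm:unconditionedcutset}, which ``mirrors'' that work: there, conditions (2) and (3) are used to argue that \emph{every} non-intersecting $k$-path system from $S$ to $T'\cup\{v'\}$ must traverse the edge $w_0'\rightarrow v'$ (or a known edge $w_i'\rightarrow v'$), so that appending/stripping that final edge gives a weight-preserving correspondence of path systems and hence $\det\Sigma_{S,T\cup\{v\}}=\sum_i\lambda_{w_iv}\det\Sigma_{S,T\cup\{w_i\}}$ directly from \cref{lemma:gvl}. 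You instead expand the $v$-column via the structural equation, use multilinearity to write $\lvert\Sigma_{S,T\cup\{v\}}\rvert$ as a sum over \emph{all} parents plus an error-column determinant, and then kill the spurious terms: the error term because its path systems live in $G^*_{flow}$ and condition (3) forbids a $k$-flow there, and each unknown parent $w_j$ because appending the surviving edge $w_j'\rightarrow v'$ would manufacture a forbidden $k$-system (with condition (1) correctly invoked to keep the $T'$-paths and the $w_j'$-path away from $v'$, so the appended system is genuinely non-intersecting and genuinely inside $G^*_{flow}$). Both arguments are sound; yours trades the sign/bijection bookkeeping of the appending argument for a one-line multilinearity step plus a (routine but worth stating) check that the Gessel--Viennot cancellation still applies to the matrix whose $v$-column is $\E[S\epsilon_v]$, since the tail-swapping involution preserves the ``ends in an error edge'' class. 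One further point in your favor: your derivation yields the subtracted terms with coefficients $\lambda_{w_iv}\lvert\Sigma_{S,T\cup\{w_i\}}\rvert$, which is the correct formula from the original reference; the paper's displayed equation omits the factors $\lambda_{w_iv}$, an apparent transcription slip that you rightly flag rather than reproduce.
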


\subsection{Match-Blocking}

The matchblock definition is restated here for convenience:

\matchblock*

The algorithm for finding a match-block (\cref{alg:matchblock}) relies on the following theorem:

\subsets*
\begin{proof}
Suppose not. That is, suppose that $\exists t_i\in T$ such that a max-flow $\mathcal{F}$ gives 0 flow through it, 
yet there exist subsets $S_m\subseteq S,T_m\subseteq T$ with  $t_i\in T_m$, such that $De(S_m)\cap T \subseteq T_m$, and there is a full flow $F_m$ from $S_m$ to $T_m$ in the graph where each vertex has capacity 1 ($|S_m|=|T_m|=|\mathcal{F}_m|$).

We reason about the intersection of the two flows $\mathcal{F}$ and $\mathcal{F}_m$. Our goal is to show that we can modify $\mathcal{F}$ to include the paths of $\mathcal{F}_m$, 
without intersecting paths in $\mathcal{F}$ to $T\setminus T_m$,
thus creating a new flow larger than $\mathcal{F}$ - which is a contradiction, since $\mathcal{F}$ is a max-flow.

Suppose that $\mathcal{F}$ has non-zero flow from $S_f\subseteq S$ to $T_f\subset T$. For any $t_j \in T_m\cap T_f$ we can simply remove the original path in $\mathcal{F}$ to $t_j$, 
resulting in a new flow $\mathcal{F}'$, which is of size $|\mathcal{F}|-|T_m\cap T_f|$. 

Next, we show that the flow paths in $\mathcal{F}_m$ cannot intersect with any flow paths from $\mathcal{F}'$. This is because $De(S_m)\cap T \subseteq T_m$ requires that all matching descendants of elements in $S_m$ are in $T_m$,
meaning that any flow-path $p_k\in \mathcal{F}'$ intersecting flow-path $p_l' \in F_m$ must have an endpoint on an element in $T_m$ - but all such paths were removed in $\mathcal{F}'$.

We combine the paths of $\mathcal{F}'$ and $\mathcal{F}_m$, giving a new flow $\mathcal{F}''$ from $S$ to $T$. Finally, since $t_i \notin T_f$, $|T_m\cap T_f| <|T_m|$, so:

$$
\begin{aligned}
|\mathcal{F}|-|T_m\cap T_f| + |T_m| &= |\mathcal{F}''|\\
|\mathcal{F}|&< |\mathcal{F}''|\\
\end{aligned}
$$

A contradiction ($\mathcal{F}$ is a max-flow). There cannot be any satisfied subset $T_m$ containing $t_i$.

Finally, no ancestor of $t_i$ can be part of a satisfied subset, since they all have paths to $t_i$, which completes the proof.
\end{proof}

\begin{restatable}{corollary}{polymatchblock}
    \label{cor:polymatchblock}
    Given directed acyclic graph $G=(V,D)$ and sets of source nodes $S$,$T$, their maximal match-blocked subsets $S_m$,$T_m$ can be found in polynomial time.
\end{restatable}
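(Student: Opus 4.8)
The plan is to show that Algorithm~\ref{alg:matchblock} returns the maximal match-block and runs in polynomial time; the corollary then follows immediately, with one final max-flow recovering the source side $S_m$ from the returned pair in the case $|S|>|T|$. I would organize the argument into a runtime bound, a \emph{soundness} invariant, and a \emph{completeness} step at termination.

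For the runtime, each pass through the loop of \cref{alg:matchblock} computes a single max-flow with unit vertex capacities---handled by the standard vertex-splitting reduction to an edge-capacitated flow---together with the reachability computations needed for $T'$ and $An(T')$, all polynomial in $|V|+|D|$. Since every iteration except the last deletes at least one node from $T$, and $|T|\le|V|$, the loop runs at most $|V|+1$ times, so the whole procedure is polynomial.

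The heart of the proof is the invariant that each iteration preserves the collection of match-blocks exactly. Writing $T'$ for the zero-flow sinks of the current max-flow $\mathcal F$ and denoting the update by $(S,T)\mapsto(S\setminus An(T'),\,T\setminus T')$, I would establish both inclusions. For the forward direction, \cref{thm:matchblock} gives, for any match-block $(S_m,T_m)$ of $(S,T)$, both $T_m\cap T'=\emptyset$ and $S_m\cap An(T')=\emptyset$, so $(S_m,T_m)$ survives into the reduced sets; condition~(ii) of \cref{def:matchblock} is untouched because the max-flow is measured in $G$, and condition~(i) still holds because shrinking $T$ can only shrink the reachable-in-$T$ set of each $s_i\in S_m$. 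For the converse, if $(S_m,T_m)$ is a match-block of the reduced pair, then no $s_i\in S_m\subseteq S\setminus An(T')$ reaches any node of $T'$ (else $s_i\in An(T')$), so the reachable-in-$T$ set of each $s_i$ is unchanged when $T'$ is reinstated, and condition~(i) continues to hold against the larger $T$, with condition~(ii) again unaffected. Iterating, the match-blocks of the input coincide with those of the final reduced pair.

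Finally I would settle completeness at termination. When the loop stops, the last max-flow $\mathcal F$ sends positive flow through every $t\in T$; with unit vertex capacities $\mathcal F$ decomposes into vertex-disjoint paths, so it saturates $T$, has value $|T|$, and is realized by $|T|$ distinct sources $S_f\subseteq S$ with $|S_f|=|T|$. Then $(S_f,T)$ is a match-block: its max-flow equals $|T|=|S_f|$, and condition~(i) holds vacuously since $T_f=T$ contains everything reachable in $T$. Combined with the invariant, every match-block $(S_m,T_m)$ of the original input satisfies $T_m\subseteq T$, while $(S_f,T)$ is itself a match-block, so $T$ is the unique maximal sink side and $S_f$---extracted from $\mathcal F$ in polynomial time---is a witnessing maximal source side. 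I expect the main obstacle to be the converse direction of the invariant, namely verifying that reinstating $T'$ cannot activate a previously-absent reachability that would violate condition~(i); this is exactly where the ``$|S_m\cap An(t_i)|=0$'' clause of \cref{thm:matchblock} is essential.
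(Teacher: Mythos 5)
Your proof is correct and follows the paper's route: the paper's own proof of \cref{cor:polymatchblock} is exactly your runtime paragraph (each iteration of \cref{alg:matchblock} removes at least one node, so at most $n$ polynomial max-flow calls), with the justification for removals delegated to \cref{thm:matchblock}. The additional material you supply --- the two-directional invariant and the termination argument showing the surviving pair really is a match-block --- corresponds to, and in fact fills in more carefully than, the paper's separate \cref{cor:maximalmatchblock}, which only argues that no element of an existing match-block is ever removed and never explicitly verifies that the final $(S,T)$ yields a match-block.
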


\begin{proof}
See \cref{alg:matchblock}. At each iteration, at least one node is eliminated from the feasibility set, meaning that given $n$ nodes, the algorithm runs at most $n$ max-flow queries, each of which is computable in polynomial time.
\end{proof}

\begin{restatable}{corollary}{maximalmatchblock}
\label{cor:maximalmatchblock}
Suppose that $S_m$,$T_m$ constitute a match-block. The match-block $S'_m,T'_m$ found using \cref{alg:matchblock} is such that $S_m \subseteq S'_m$ and $T_m\subseteq T'_m$.
\end{restatable}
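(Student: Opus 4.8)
The plan is to prove the statement by induction on the removal rounds of \cref{alg:matchblock}, maintaining as an invariant that no element of the given match-block is ever discarded. Write $(S^{(0)},T^{(0)})=(S,T)$ for the inputs and $(S^{(j)},T^{(j)})$ for the source/sink sets at the start of iteration $j$, so that the algorithm returns $(S'_m,T'_m)=(S^{(N)},T^{(N)})$ at the terminating iteration $N$. I would establish the invariant $S_m\subseteq S^{(j)}$ and $T_m\subseteq T^{(j)}$ for every $j$, from which the conclusion follows immediately by taking $j=N$.

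The base case $j=0$ holds by definition, since $S_m\subseteq S$ and $T_m\subseteq T$. For the inductive step I first need the auxiliary fact that $(S_m,T_m)$ is still a match-block relative to the reduced sets $(S^{(j)},T^{(j)})$, and not merely relative to the original $(S,T)$. This is where I check the two clauses of \cref{def:matchblock}: condition (ii), that the max flow between $S_m$ and $T_m$ equals $k$ in $G$, is intrinsic to $S_m$, $T_m$, and the fixed graph $G$, and is therefore unaffected by which nodes are currently tagged as sources or sinks; condition (i), that every element of the current sink set reachable from some $s_i\in S_m$ lies in $T_m$, can only become easier as the sink set shrinks, because $T^{(j)}\subseteq T$ implies the $T^{(j)}$-nodes reachable from $s_i$ form a subset of the $T$-nodes reachable from $s_i$, which already lie in $T_m$. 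Combined with the inductive hypothesis $S_m\subseteq S^{(j)}$, $T_m\subseteq T^{(j)}$, this shows $(S_m,T_m)$ is a match-block of $(S^{(j)},T^{(j)})$.

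With this in hand I would apply \cref{thm:matchblock} to the max flow $\mathcal{F}$ that the algorithm computes from $S^{(j)}$ to $T^{(j)}$. For each node $t_i$ removed in this iteration (i.e.\ carrying $0$ flow in $\mathcal{F}$), the theorem's first conclusion states that no match-block of $(S^{(j)},T^{(j)})$ contains $t_i$, hence $t_i\notin T_m$; so the removed set $T'$ is disjoint from $T_m$ and $T_m\subseteq T^{(j+1)}$. The theorem's second conclusion gives $|S_m\cap An(t_i)|=0$ for each such $t_i$, so $S_m$ is disjoint from $An(T')$, the set of source nodes the algorithm deletes; hence $S_m\subseteq S^{(j+1)}$. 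This closes the induction.

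The step I expect to be the main obstacle is the auxiliary fact of the second paragraph, namely keeping \cref{thm:matchblock} applicable after each round by verifying that $(S_m,T_m)$ persists as a match-block with respect to the progressively smaller source and sink sets. Once this monotonicity is in place, the remainder is a direct contrapositive reading of \cref{thm:matchblock}: the algorithm only ever discards nodes that provably cannot belong to any match-block, so it cannot discard anything in $(S_m,T_m)$, and the fixed point it returns therefore contains $(S_m,T_m)$.
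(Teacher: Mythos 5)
Your proof is correct and follows essentially the same route as the paper's: the algorithm only discards sinks with zero flow and their source-side ancestors, which \cref{thm:matchblock} shows cannot belong to any match-block, so nothing in $(S_m,T_m)$ is ever removed. Your inductive formulation is in fact more careful than the paper's two-line argument, since you explicitly verify that $(S_m,T_m)$ persists as a match-block relative to the shrunken sets $(S^{(j)},T^{(j)})$ at each round --- a point the paper leaves implicit (and the paper's proof miscites \cref{thm:cutsetsubset} where \cref{thm:matchblock} is clearly intended).
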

\begin{proof}
At each step of the algorithm, only nodes $t_i,...t_j$ that have 0 flow crossing through them are removed from $T$. These nodes cannot be part of $T_m$ by \cref{thm:cutsetsubset}. Similarly, only ancestors to $t_i,...,t_j$ are removed from $S$, which likewise cannot be part of $S_m$. Therefore, no elements of $S_m$ nor $T_m$ were removed by the algorithm, meaning that $S_m \subseteq S'_m$ and $T_m\subseteq T'_m$.
\end{proof}

\begin{restatable}{corollary}{ivsubsets}
    \label{cor:ivsubsets}
    Given mixed graph $G=(V,D,B)$, there exists a valid instrumental subset $S_m \subseteq V \setminus De(Sib(y))$ to $T_m\subseteq Pa(y)$ if and only if $S_m,T_m$ is a match-block 
    between $V \setminus De(Sib(y))$ and $Pa(y)$ in $G_{flow}$. 
\end{restatable}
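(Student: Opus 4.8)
The plan is to prove the biconditional by reducing each of the two defining conditions of an instrumental set to one of the two conjuncts of the match-block definition (\cref{def:matchblock}), using the correspondence between treks and directed paths in $G_{flow}$ together with the Gessel--Viennot--Lindstr\"om machinery (\cref{lemma:gvl} and \cref{lemma:fullrank}). Throughout I write $X = T_m$, $Z = S_m$, and $k = |S_m| = |T_m|$, and I identify a parent $p \in Pa(y)$ with its sink copy $p'$ in $G_{flow}$.

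First I would dispatch condition (i) of the IS, the existence of an unblocked path set without sided intersection between $S_m$ and $T_m$. By construction of $G_{flow}$ a trek from $s$ to a parent $p$ is exactly a directed path from the source node $s$ to the sink node $p'$, and ``no sided intersection'' is precisely vertex-disjointness in $G_{flow}$. Hence a family of $k$ nonintersecting treks from $S_m$ to $T_m$ is the same object as a max flow of value $k$ from $S_m$ to $T_m'$ under unit vertex capacities, which is exactly the flow conjunct of the match-block; by \cref{lemma:fullrank} this is also equivalent to $\det\Sigma_{S_m,T_m}\neq 0$ generically. This equivalence runs in both directions with no extra work.

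The substantive step is to show that condition (ii) of the IS corresponds to the descendant-closure conjunct of the match-block, namely that every element of $Pa(y)$ reachable in $G_{flow}$ from some $s\in S_m$ lies in $T_m$. The key observation is that an unblocked path from $s$ to $y$ is a trek, hence a directed $s$-to-$y'$ path in $G_{flow}$, and that such a path can enter $y'$ only through (a) an edge $p'\to y'$ from a parent $p\in Pa(y)$, (b) an edge $s_0\to y'$ from a sibling $s_0\in Sib(y)$, or (c) the self-edge $y\to y'$. Deleting the edges $X\to y$ in $G$ deletes exactly the sink edges $p'\to y'$ for $p\in T_m$. Routes (b) and (c) require reaching a source node in $Sib(y)$, respectively $y$ itself, along the source side, i.e.\ require $s\in De(Sib(y))$, respectively $s\in De(y)$; the hypothesis $S_m\subseteq V\setminus De(Sib(y))$ eliminates (b), and (c) is excluded since such an $s$ would admit a directed trek from $y$ and could never satisfy (ii). Thus, after removal, $s$ reaches $y'$ iff $s$ reaches some $p'$ with $p\in Pa(y)\setminus T_m$, so (ii) holds for all $s\in S_m$ exactly when no $s\in S_m$ reaches a parent outside $T_m$, which is the descendant-closure requirement $De(S_m)\cap Pa(y)\subseteq T_m$ in $G_{flow}$.

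Combining the two equivalences yields the biconditional: $S_m,T_m$ is a valid instrumental subset iff it satisfies both the flow conjunct and the descendant-closure conjunct, i.e.\ iff it is a match-block between $V\setminus De(Sib(y))$ and $Pa(y)$ in $G_{flow}$. I expect the main obstacle to be the case analysis in the third paragraph: proving that \emph{every} entry into $y'$ in $G_{flow}$ falls into cases (a)--(c), and that (b)--(c) are traversable from $s$ only when $s$ is a descendant of a sibling of $y$ (or of $y$), so that reachability of $y'$ after deleting $X\to y$ collapses cleanly to reachability of a non-$X$ parent's sink node. This is precisely where the $De(Sib(y))$ hypothesis does its work, and getting that reduction exactly right is the crux of the argument.
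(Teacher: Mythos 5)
Your proposal is correct and follows essentially the same route as the paper's proof: condition (i) of the IS is matched to the flow conjunct of the match-block via the trek/vertex-disjoint-path correspondence in $G_{flow}$, and condition (ii) is matched to the descendant-closure conjunct, with the hypothesis $S_m\subseteq V\setminus De(Sib(y))$ ruling out back-paths to $y$ through bidirected edges so that every unblocked path to $y$ must enter through a parent's sink node. Your version is simply more explicit than the paper's about the case analysis of edges into $y'$ in $G_{flow}$, but the underlying argument is the same.
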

\begin{proof}
    $\Rightarrow$: Given a valid instrumental subset, it is also a match-block, because there are an equal number of instruments and parents of $y$, and there exists a system of nonintersecting paths from $S_m$ to $T_m$. Furthermore, if any of the $S_m$ has paths to $Pa(y)\setminus T_m$,
    then they are dependent on $y$ in the graph with the edges $T_m\rightarrow y$ removed, which means that they are not an IS. Finally, the IS can't have elements of $De(Sib(y))$, since it would mean that $y$ is not-d-separated from the instruments.
    As such, the sets satisfy the requirements of a match-block.

    $\Leftarrow$: Suppose there is a valid match-block, then there is a valid instrumental set. The match-block guarantees the requirements of an IS directly. The restriction of $S$ to elements without back-paths to $y$ forces all paths to $y$ to go through $Pa(y)$.

The corresponding algorithm is given in \cref{alg:IS}
\end{proof}

\begin{algorithm}
    \caption{Find Maximal Instrumental Subsets given graph $G$, target variable $y$}
    \label{alg:IS}
    \begin{algorithmic}
        \Function{MaxIS}{G,y}
        \State $G_{flow} \leftarrow \textsc{FlowGraph}(G)$
        \State $Z\leftarrow (An(y,G_{flow})\cap \textsc{SourceNodes}(G_{flow})) \setminus \textsc{SourceNodesOf}(De(Sib(y)))$
        \State $(Z_f,X_f) \leftarrow \textsc{MaxMatchBlock}(G_{flow},Z,Pa(y)')$
        \State\Return $(Z_f,X_f)$
        \EndFunction
        \end{algorithmic}
\end{algorithm}

\subsubsection{Auxiliary Flow Graph}

This section develops results that show the auxiliary flow graph can be used instead of $G_{flow}$, and that it encodes treks through auxiliary variables. For reference, $G_{aux}$ is defined as:

\auxflowgraph*

\begin{lemma}
    \label{lemma:auxflowcov} 
Given a linear SCM $(\Lambda,\Omega)$ with causal graph $G=(V,D,B)$, a set of known structural parameters $\Lambda^*$, and defining $V^*=\{v_1^*,...,v_k^*\}$ as
$$
v_i^* = v_i - \sum_{\lambda_{v_jv_i}\in \Lambda^*} \lambda_{v_jv_i} v_j
$$
the sum over each path of products of weights along the path from $s\in V\cup V^*$ to $t\in V'$ in the auxiliary flow graph $G_{aux}$ encodes the covariance $\sigma_{st}$.
\end{lemma}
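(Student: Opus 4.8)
The plan is to prove the statement in two stages: first for ordinary sources $s=v\in V$, and then to bootstrap the auxiliary sources $s=v^*\in V^*$ from that case. For the ordinary case I would reduce $G_{aux}$ to the plain flow graph $G_{flow}$, for which the desired identity ``the path-sum from $v$ to $t'$ equals $\sigma_{vt}$'' is already available: it is the single-source/single-sink instance of the trek-sum rule $\sigma_{vw}=\sum_{\pi\in\mathcal T_{vw}}\pi(\Lambda,\Omega)$ recorded in the preliminaries, i.e.\ the $l=1$ case of \cref{lemma:gvl}. The reduction is by contracting the weight-$1$ edges of \cref{def:auxflowgraph}, namely $i\to i^*$ and $i'^*\to i'$; since these carry weight $1$, contracting them leaves every path-product unchanged, so it suffices to verify that the contracted graph is exactly $G_{flow}$.

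Carrying out that verification is the technical heart of the argument. After identifying $i$ with $i^*$ and $i'$ with $i'^*$, the ``known'' source edge $j\to i$ of rule 1 and the ``unknown'' source edge $j^*\to i$ of rule 2 both collapse to a single edge $j\to i$ of weight $\lambda_{ij}$ — precisely the up-edge of $G_{flow}$; symmetrically the two sink edges $i'\to j'$ and $i'\to j'^*$ collapse to $i'\to j'$; and the crossover edges $i^*\to i'$ (weight $\epsilon_{ii}$) and $i^*\to j'^*$ (weight $\epsilon_{ij}$) become $i\to i'$ and $i\to j'$, matching the $\Omega$-edges of $G_{flow}$. I would also record the two structural facts that make paths behave like treks: every edge of $G_{aux}$ either stays among the source nodes $V\cup V^*$, stays among the sink nodes $V'\cup V'^*$, or crosses from a starred source to a sink, and there is no edge from a sink back to a source; hence each $v\to t'$ path crosses over exactly once, mirroring the unique top of a trek. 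This establishes the identity for all $v\in V$.

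For an auxiliary source $v^*$ I would argue directly from the out-edges of $v$ rather than re-deriving a bijection. In $G_{aux}$ the only edges leaving a non-starred source $v$ are $v\to v^*$ of weight $1$ (rule 3) and the known up-edges $v\to a$ of weight $\lambda_{av}$ for each parent $a$ of $v$ with $\lambda_{av}\in\Lambda^*$ (rule 1), so every path from $v$ begins with one of these. Splitting the sum over paths from $v$ according to this first edge, and using that (since $G_{aux}$ is a DAG) the tail after $v\to v^*$ is exactly a path from $v^*$ while the tail after $v\to a$ is a path from $a$, gives
$$
\sigma_{vt} = \bigl(\text{path-sum from }v^*\text{ to }t'\bigr) + \sum_{\lambda_{av}\in\Lambda^*}\lambda_{av}\,\sigma_{at}.
$$
Solving for the path-sum from $v^*$ and comparing with the definition $v^*=v-\sum_{\lambda_{av}\in\Lambda^*}\lambda_{av}\,a$ together with linearity of covariance, $\sigma_{v^*t}=\sigma_{vt}-\sum_{\lambda_{av}\in\Lambda^*}\lambda_{av}\sigma_{at}$, shows that the path-sum from $v^*$ equals $\sigma_{v^*t}$, completing the proof.

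I expect the main obstacle to be the contraction bookkeeping in the second paragraph: one must check that the known-versus-unknown distinction forces the star detour in exactly the right places (an unknown up-move or any crossover is reachable only after a $\cdot\to\cdot^*$ step, whereas a known up-move is direct), so that the contracted graph is literally $G_{flow}$ with no spurious $v\to t'$ path appearing and no trek being double-counted. Everything else — the weight-$1$ invariance of contraction, the single-crossover property, and the first-edge decomposition — is routine once this correspondence is pinned down.
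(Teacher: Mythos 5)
Your proposal is correct and follows essentially the same route as the paper's proof: reduce the $s\in V$ case to the trek encoding of $G_{flow}$ (the paper asserts the correspondence directly where you make it precise via contracting the weight-$1$ star edges), and then obtain the $s^*\in V^*$ case by splitting the paths/treks from $s$ according to whether they start with a known edge and invoking linearity of covariance, $\sigma_{s^*t}=\sigma_{st}-\sum_{\lambda_{as}\in\Lambda^*}\lambda_{as}\sigma_{at}$. Your write-up is somewhat more careful than the paper's about the bijection between $G_{aux}$ paths and $G_{flow}$ paths, but the underlying argument is the same.
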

\begin{proof}
We already know that $G_{flow}$ encodes treks in the graph \citep{sullivantTrekSeparationGaussian2010}. If $s\in V$, we notice that the sum of paths can be constructed 
by combining the paths from $s$ that are not passing $s^*$, and the paths from $s^*$ multiplied by 1 - which results in the treks, identically to $G_{flow}$.

If $s^*\in V^*$, we notice that the set of treks from $s^*$ to a variable $y$ can be seen as a subset of the treks from $s$
$$
\begin{aligned}
\sigma_{sy} &= \left(\text{treks not starting from any $\lambda_{a_js}\in \Lambda^*$}\right) + \left(\text{treks starting from one of the $\lambda_{a_js}\in \Lambda^*$}\right) \\
\sigma_{sy} &= \left(\text{treks not starting from any $\lambda_{a_js}\in \Lambda^*$}\right)  + \sum_j \lambda_{a_js}\sigma_{a_jy}\\
\sigma_{sy} -  \sum_j \lambda_{a_js_i}\sigma_{a_jy} &= \left(\text{treks not starting from any $\lambda_{a_js}\in \Lambda^*$}\right) \\
\sigma_{s^*y} &= \left(\text{treks not starting from any $\lambda_{a_js}\in \Lambda^*$}\right) \\
\end{aligned}
$$

Using this result, we can conclude that the covariance of $s^*_i$ with any variable behaves as if the edges from $s$ to $a_j$ in $G_{flow}$ did not exist, but all other paths were identical to $G_{flow}$.
This is exactly the construction given in $G_{aux}$.
\end{proof}

Next, we show directly that the Gessel-Viennot-Lindrstr\"om Lemma still holds for the auxiliary graph.
While the statement is lengthy, it simply states that we can just use the nonintersecting path sets in the new graph to determine values 
of determinants of minors of the covariance matrix where each variable also has an ``auxiliary'' version of itself, where known effects are removed.
\begin{lemma}{(Auxiliary Gessel-Viennot-Lindstr\"{o}m)}
    \label{lemma:auxgvl}
    Given a linear SCM $(\Lambda,\Omega)$ with causal graph $G=(V,D,B)$, a set of known structural parameters $\Lambda^*$, and defining $V^*=\{v_1^*,...,v_k^*\}$ as
    $
    v_i^* = v_i - \sum_{\lambda_{v_jv_i}\in \Lambda^*} \lambda_{v_jv_i} v_j
    $, for any vertex sets $A\subseteq V\cup V^*$ and $B\subseteq V$, where $|A|=|B|=l$,
    $$
    \det\Sigma_{A,B} = \sum_{P\in N(A,B)} (-1)^P\prod_{p\in P}e^{(p)}
    $$
    Here, $N(A,B)$ is the the set of all collections of nonintersecting systems of $l$ directed paths in the auxiliary flow graph $G_{aux}$ from $A$ to $B$.
    $P=(p_1,...,p_l)$ is a collection of these nonintersecting paths.
    $e^{(p)}$ is the product of the weights along path $p$,
    and $(-1)^P$ is the sign of the induced permutation of elements from $A$ to $B$.
\end{lemma}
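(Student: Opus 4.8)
The plan is to reduce this statement to the ordinary Gessel-Viennot-Lindström Lemma (\cref{lemma:gvl}), which already holds for an arbitrary weighted DAG, by applying it directly to the auxiliary flow graph $G_{aux}$ and then identifying the resulting minor of the path-inverse matrix with $\det\Sigma_{A,B}$. The point is that $G_{aux}$ is, by \cref{def:auxflowgraph}, just another weighted DAG, so nothing about the combinatorial core of GVL needs to be re-derived; all the work lies in matching up entries of two matrices.

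First I would observe that $G_{aux}$ has a weighted adjacency matrix, call it $E_{aux}$, whose source nodes are $V\cup V^*$ and whose sink nodes are $V'\cup V'^*$. Since \cref{lemma:gvl} is stated for any DAG, I may apply it verbatim to $G_{aux}$ with the source set $A\subseteq V\cup V^*$ and the sink set obtained by identifying each $b\in B\subseteq V$ with its sink copy $b'\in V'$. This immediately yields
$$\det\left[(I-E_{aux})^{-1}\right]_{A,B} = \sum_{P\in N(A,B)}(-1)^P\prod_{p\in P}e^{(p)},$$
where $N(A,B)$ is exactly the set of nonintersecting path systems in $G_{aux}$ from $A$ to $B$. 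This is precisely the right-hand side of the claim, so the entire problem reduces to showing that the left-hand side equals $\det\Sigma_{A,B}$.

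Second I would establish entrywise equality of the two minors. Because $G_{aux}$ is acyclic, $E_{aux}$ is nilpotent and $(I-E_{aux})^{-1}=\sum_{m\ge 0}E_{aux}^m$, so its $(a,b')$ entry is the sum over all directed paths from source $a$ to sink $b'$ of the product of the edge weights along the path. By \cref{lemma:auxflowcov}, this path-sum equals the covariance $\sigma_{ab}$ (with the auxiliary interpretation whenever $a\in V^*$). Hence $\left[(I-E_{aux})^{-1}\right]_{A,B}=\Sigma_{A,B}$ as matrices, entry for entry, and since a determinant depends only on the entries of the matrix, $\det\left[(I-E_{aux})^{-1}\right]_{A,B}=\det\Sigma_{A,B}$. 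Combining this with the previous display completes the argument.

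The main obstacle is less a deep difficulty than a bookkeeping hazard: I must make the source/sink correspondence airtight, verifying that the rows of $\Sigma_{A,B}$ indexed by $A\subseteq V\cup V^*$ genuinely correspond to source vertices of $G_{aux}$, that the columns indexed by $B\subseteq V$ correspond to the primed sink vertices $V'$, and that no spurious path ever revisits a source or crosses into a starred sink (which cannot occur, since $G_{aux}$ is a DAG and its source and sink classes are disjoint). I would also confirm that \cref{lemma:auxflowcov}, which is phrased for a single source-sink pair, applies uniformly to every entry of the minor — including the auxiliary rows coming from $V^*$ — so that the entrywise identification holds without exception.
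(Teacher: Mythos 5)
Your proposal is correct and matches the paper's own argument, which simply states that the lemma is a direct consequence of \cref{lemma:gvl} applied to $G_{aux}$ together with \cref{lemma:auxflowcov}; you have merely spelled out the entrywise identification of $\left[(I-E_{aux})^{-1}\right]_{A,B}$ with $\Sigma_{A,B}$ that the paper leaves implicit.
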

\begin{proof}
This is a direct consequence of \cref{lemma:gvl} and \cref{lemma:auxflowcov}.
\end{proof}

\begin{restatable}{corollary}{avspolynomial}
Auxiliary Instrumental Sets can be found in polynomial time
\end{restatable}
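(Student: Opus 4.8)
The plan is to mirror the argument used for ordinary instrumental sets in \cref{cor:ivsubsets} and \cref{alg:IS}, replacing the flow graph $G_{flow}$ by the auxiliary flow graph $G_{aux}$. The key observation is that an auxiliary instrumental set (\cref{def:AVS}) is exactly an instrumental set in which each candidate instrument $z_i$ is replaced by its auxiliary variable $z_i^*$, so that treks leaving $z_i$ through a known edge $\lambda_{wz_i}\in\Lambda^*$ are treated as blocked. By \cref{lemma:auxflowcov}, the paths out of $z_i^*$ in $G_{aux}$ encode precisely the covariance $\sigma_{z_i^* y}$, i.e. the treks from $z_i$ with the known incoming edges removed. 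Thus the ``blocking through $\Lambda^*$'' requirement of \cref{def:AVS} is built directly into the structure of $G_{aux}$, and no separate bookkeeping is needed.

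First I would prove the analog of \cref{cor:ivsubsets}: there exists a valid auxiliary instrumental subset $S_m$ to $T_m\subseteq Pa(y)$ if and only if the auxiliary sources corresponding to $S_m$ and the sink copies $T_m'$ form a match-block in $G_{aux}$ between the auxiliary ancestors of $y$ that have no directed back-path to $Sib(y)$ and the sink copies $Pa(y)'$. The forward direction follows because a valid AVS supplies an equal number of auxiliary instruments and parents together with a path system having no sided intersection, which is precisely a full flow in $G_{aux}$; and if some auxiliary instrument reached a parent outside $T_m$, then by \cref{lemma:auxgvl} it would remain dependent on $y$ after deleting the edges $T_m\rightarrow y$, contradicting the instrumental-set condition. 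The converse uses the match-block guarantee together with the source restriction to nodes of $V\cup V^*$ without back-paths to $Sib(y)$, which forces every path to $y$ to pass through $Pa(y)$, so the match-block yields both conditions of an instrumental set directly in $G_{aux}$.

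Given this correspondence, the algorithm is immediate: construct $G_{aux}$, take $S$ to be the auxiliary source nodes that are ancestors of $y$ in $G_{aux}$ but have no directed back-path to $Sib(y)$, take $T$ to be the sink copies $Pa(y)'$, and call \textsc{MaxMatchBlock}$(G_{aux},S,T)$. By \cref{lemma:auxgvl}, the non-intersecting path system returned certifies the required full-rank (determinantal) conditions, so the output sets constitute a valid AVS. Since $G_{aux}$ has $4|V|$ vertices and $O(|D|+|B|)$ edges, it is of polynomial size, and by \cref{cor:polymatchblock} the match-block is computed with at most $|V|$ max-flow calls; hence the whole procedure runs in polynomial time.

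The main obstacle I expect is the careful verification of the correspondence in the second step---specifically, confirming that the match-block condition ``all sinks reachable from $s_i\in S_f$ lie in $T_f$'' coincides with the requirement that no auxiliary instrument has an unblocked path to $y$ once the edges $X\rightarrow y$ are removed, and that this equivalence survives the passage from $G_{flow}$ to $G_{aux}$. This hinges entirely on \cref{lemma:auxgvl}, which guarantees that non-intersecting path systems in $G_{aux}$ compute the determinants of the relevant covariance minors over variables and their auxiliary versions; once that lemma is in hand, the remaining reasoning is a direct transcription of the proof of \cref{cor:ivsubsets}.
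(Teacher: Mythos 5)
Your proposal is correct and follows essentially the same route as the paper, which simply states that the proof is identical to \cref{cor:polymatchblock} and \cref{cor:ivsubsets} with $G_{aux}$ substituted for $G_{flow}$ (the algorithm being \cref{alg:AVS}). You have merely spelled out in more detail the correspondence that the paper leaves implicit, relying on the same key ingredients (\cref{lemma:auxflowcov}, \cref{lemma:auxgvl}, and the polynomial-time match-block computation).
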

\begin{proof}
The proof is identical to \cref{cor:polymatchblock,cor:ivsubsets}, with the only difference being that the auxiliary flow graph is
used in place of $G_{flow}$. The full algorithm is shown in \cref{alg:AVS}.
\end{proof}

\begin{algorithm}
    \caption{Finds all edges identifiable using AVS in polynomial time}
    \label{alg:AVS}
    \begin{algorithmic}
        \Function{AVS}{G,y,$\Lambda^*$}
        \State $G_{aux}\leftarrow \textsc{AuxiliaryFlowGraph}(G,\Lambda^*)$
        \State $T\leftarrow $ all sink-node parents of $y'^*$ in $G_{aux}$
        \State $G_{aux}^y \leftarrow G_{aux}$ with edges $t_i\in T$ to $y'^*$ removed
        \State $S \leftarrow $ Source nodes in $G_{aux}^y$ which are not ancestors of $y'^*$
        \State \Return $\textsc{MaxMatchBlock}(G_{aux},S,T)$
        \EndFunction
    \end{algorithmic}
    \begin{algorithmic}
        \Function{AVSID}{G}
        \State $\Lambda^* \leftarrow \emptyset$
        \Do
        \ForAll{$y\in G$}
        \State $(\_,T_m) \leftarrow \textsc{AVS}(G,y,\Lambda^*)$
        \State $\Lambda^* \leftarrow \Lambda^*\cup \{\lambda_{ty} | t\in T_m\}$
        \EndFor
        \DoWhile {at least one parameter was identified in this iteration}
        \State\Return $\Lambda^*$
        \EndFunction
        \end{algorithmic}
\end{algorithm}

\newpage
\subsection{NP-Hardness of tsIV and scIV}

We first relate scIV to tsIV, then we construct a supporting lemma used within our main NP-hardness proof,
and finally, we prove that tsIVs and scIVs are NP-Hard, and therefore NP-Complete.

\begin{theorem}
    \label{thm:scIS2div}
    If there exists a simple conditional instrumental set usable to solve for $\lambda_{xy}$, then there exists a tsIV that can be used to solve for $\lambda_{xy}$
\end{theorem}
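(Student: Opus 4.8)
The plan is to realize the conditioning set $W$ of the scIS as extra rows and columns shared by the source and sink sets of a tsIV, turning ``conditioning on $W$'' into ``routing $W$ trivially through the flow graph.'' Concretely, given an scIS $(Z,W)$ for $X\subseteq Pa(y)$ with target parent $x\in X$ (\cref{def:scIS}), I would set $S=Z\cup W$ and $T=(X\setminus\{x\})\cup W$ and verify the hypotheses of \cref{def:tsiv} (equivalently \cref{thm:div} with no previously identified edges, $l=0$). First, $|S|=|Z|+|W|=|X|+|W|$ and $|T|=|X|-1+|W|$, so $|S|=|T|+1=k$. Condition (i), $De(y)\cap T=\emptyset$, is immediate: $X\subseteq Pa(y)$ contains no descendant of $y$ in an acyclic graph, and $W\subseteq V\setminus De(y)$, which is the restriction needed for the scIS to yield a valid gIS (\cref{def:gIS}); note also $y\notin T$.

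For the flow condition (ii) I would exhibit a nonintersecting path system of size $k$ from $S$ to $T'\cup\{x'\}=(X\cup W)'$ and invoke \cref{lemma:fullrank}. Lift the scIS path set $\Pi$, which has no sided intersection from $Z$ to $X$, to $k-|W|$ vertex-disjoint paths $Z\to X'$ in $G_{flow}$, and adjoin the trivial paths $w\to w'$ for each $w\in W$. Because $W$ blocks no path of $\Pi$ and the $\pi_i$ are treks (no colliders), no vertex of $W$ lies on any $\pi_i$, so in $G_{flow}$ the lifted paths avoid both the source-copies $W$ and the sink-copies $W'$; hence all $k$ paths are pairwise nonintersecting and reach $(X\cup W)'$. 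By \cref{lemma:fullrank} the max-flow equals $k$.

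The crux is condition (iii): in $G^*_{flow}$, i.e.\ $G_{flow}$ with only $x'\to y'$ deleted, the max-flow from $S$ to $T'\cup\{y'\}$ must be $<k$. I would translate the scIS separation into a flow obstruction. Writing $\tilde G$ for $G$ with all edges $X\to y$ removed, the scIS requires $W$ to d-separate $Z$ from $y$ in $\tilde G$; since treks carry no colliders, every trek from $Z$ to $y$ in $\tilde G$ contains a vertex of $W$, i.e.\ every directed $Z\to y'$ path in $\tilde G_{flow}$ meets $W\cup W'$. Now suppose for contradiction a nonintersecting system of size $k$ reaches $T'\cup\{y'\}$ in $G^*_{flow}$. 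Its unique path to $y'$ cannot enter $y'$ through $x'$ (that edge is gone), nor through any $x_j'$ with $x_j\in X\setminus\{x\}$ (those sink-copies are occupied as endpoints of the system); since a trek meets $y$ only at its terminus, the corresponding trek then uses no $X\to y$ edge at all, so this path already lives in $\tilde G_{flow}$. By the separation it must pass through $W\cup W'$, yet every vertex of $W\cup W'$ is already occupied as a source in $S$ or a sink in $T'$, contradicting vertex-disjointness. Hence the max-flow is $<k$.

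The main obstacle is exactly this last step, because \cref{thm:div} removes only the single edge $x'\to y'$, whereas the scIS separation is stated for $\tilde G$, where the whole bundle $X\to y$ is absent. The bridge is the ``occupied-target'' observation above: keeping $X\setminus\{x\}$ inside $T$ forces their sink-copies to be consumed as endpoints, which is what promotes a $y'$-bound path in $G^*_{flow}$ to one in $\tilde G_{flow}$ so the separation applies. Once all three conditions hold, \cref{thm:div} (with $l=0$) certifies the tsIV and gives $\lambda_{xy}=\det\Sigma_{S,T\cup\{y\}}/\det\Sigma_{S,T\cup\{x\}}$; as a consistency check, the Schur-complement identity $\det\Sigma_{A\cup W,B\cup W}=\det\Sigma_{W,W}\,\det[\sigma_{ab\cdot W}]_{a\in A,b\in B}$ together with Cramer's rule shows this ratio is precisely the solution the scIS produces from the partial-covariance system $\sigma_{z_i y\cdot W}=\sum_{x_j\in X}\lambda_{x_j y}\,\sigma_{z_i x_j\cdot W}$.
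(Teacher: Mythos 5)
Your choice of $S=Z\cup W$ and $T=(X\setminus\{x\})\cup W$ is exactly the paper's, and your treatment of conditions (i) and (ii) matches the paper's proof in spirit. Two things go wrong, one minor and one essential. The minor one: in condition (ii) you assert that the $\pi_i$ of $\Pi$ are treks, so that no vertex of $W$ lies on them. The scIS path set is only required to be unblocked \emph{given} $W$, so a $\pi_i$ may contain a collider at some $w_i\in W$; such a path does not lift to a single directed path in $G_{flow}$, and the paper handles this by splitting it at the collider into $z\to\cdots\to w_i'$ and $w_i\to\cdots\to x'$, which is consistent with $W$ appearing in both $S$ and $T$.

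The essential gap is in condition (iii). Your contradiction argument shows that the path of the hypothetical size-$k$ system ending at $y'$ lives in $\tilde G_{flow}$, and then invokes the d-separation of $Z$ from $y$ to force that path through $W\cup W'$. But the separation only constrains paths \emph{starting in $Z$}; the source matched to $y'$ may perfectly well be some $w_{i_1}\in W$ (nothing in \cref{def:scIS} prevents $W$ from having treks to $y$ --- indeed in \cref{fig:basediv_clause} the conditioning nodes have bidirected edges to $y$), and in that case your argument says nothing. The paper closes this case with a chain argument: if $w_{i_1}$ is matched to $y'$, some other source must be matched to $w_{i_1}'$; that source cannot lie in $Z$, because concatenating its trek into $w_{i_1}'$ with $w_{i_1}$'s trek to $y'$ yields a path to $y$ that is open at the collider $w_{i_1}\in W$, contradicting d-separation; so it is another element of $W$, and since $W$ is finite the chain must eventually be forced to draw on an element of $Z$, giving the contradiction. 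Without this induction the proof of condition (iii) --- which you yourself identify as the crux --- is incomplete.
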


\begin{proof}
Suppose there is a simple instrumental set described by $Z,X,W$, where $W$ is the conditioning, $X$ is a set of $y$'s parents, and $Z$ is the set of instruments. We claim that the corresponding tsIV has $S=Z\cup W$ and $T=W\cup X\setminus\{x\}$ for any $x\in X$.

To witness, observe that condition 1 of \cref{thm:div} is satisfied, since $X$ and $W$ are non-descendants of $y$, and condition 2 is also satisfied, since we can construct a full flow
between $S$ and $T\cup\{x\}$ by adding a path from each $w_i$ to $w'_i$, and using the paths $\Pi$ from \cref{def:scIS} from $Z$ to $X$. If a path from $z$ to $x$ crosses a collider $w_i$, we 
construct the path $z$ to $w_i'$ and $w_i$ continuing on the path to $x'$.

Finally, we can focus on condition 3 of \cref{thm:div}. We will prove it holds by contradiction. 
Suppose that there exists a full flow from $S$ to $T\cup\{y\}$ in the flow graph with edge $x'\rightarrow y'$ removed.

We know $W$ d-separates $Z$ from $y$ when the edges $\lambda_{x_iy}, x_i \in X$ are removed. Since $T$ contains all $x_i\in X$ except $x$ itself, none of the edges $x_i'\rightarrow y'$ can be taken,
since their corresponding vertex $x_i'$ is already part of a path. Likewise, since $x'\rightarrow y'$ is removed, none of the paths can take that edge either.

Nevertheless, the $S$ must have a valid matching to $T\cup\{y\}$ for a full flow to exist. We now show that this is impossible by induction.

Let $s_0\in S$ be the element matched to $y'$ in the full flow. 
Either $s_0\in Z$ or $s_0\in W$.

We know that $s_1 \notin Z$, since by d-separation,
    all treks from $z_i$ to $y'$ that don't pass the removed edges are intersected by elements of $W$, which corresponds to blocking both the top and bottom nodes of the flow graph.
This means that $s_1\in W$.

Suppose that for $s_1,...,s_i$, each $s_k \in W$. 
Let $s_{i+1} \in S$ be matched to $s'_i$ in the full flow. 
Suppose $s_{i+1} \in Z$. Then it means that there is a path $s_{i+1}, s_i,...,s_1,y$ across v-structures to $y$, making the element of $z$ not d-separated from $y$,
a contradiction. Therefore $s_{i+1}$ must be in $W$. But there is a finite number of $W$, meaning that the only possible elements to match to $s_n'$ after all $W$ are already matched to something will be elements of $Z$,
a contradiction.

This proof showed that any full matching of $S$ to $T\cup \{y\}$  has a confounding path across v-structures, which means that $z_i$ was not d-separated from $y$, a contradiction.
\end{proof}

\begin{lemma}
\label{lem:nobidirected}
Given sets $S$ and $T$, and a full flow $\mathcal{F}_x$ of size $k$ between $S$ and $T\cup\{x\}$,
if $\mathcal{F}_x$ has an $s_i\in S$ matched to $x$, 
then if the bidirected edge $s_i\leftrightarrow y$ exists, there exists a flow of size $k$ between $S$ and $T\cup \{y\}$, 
and there does not exist a tsIV for $\lambda_{xy}$.
\end{lemma}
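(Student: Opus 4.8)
The plan is to exploit the fact that a bidirected edge $s_i \leftrightarrow y$ becomes, in the flow graph, a single directed edge $s_i \to y'$ (of weight $\epsilon_{s_iy}$, by the definition of $G_{flow}$), so that whenever $s_i$ already reaches the sink $x'$ we can cheaply re-route its path to the sink $y'$ instead. First I would record that $\mathcal{F}_x$ is a collection of $k$ vertex-disjoint source-to-sink paths in $G_{flow}$; let $P_i$ be the one that starts at $s_i$ and ends at $x'$, and let the remaining $k-1$ paths end at distinct sinks of $T'$.

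For the first claim, I would delete $P_i$ and replace it by the one-edge path $s_i \to y'$ supplied by the bidirected edge. This frees all internal vertices of $P_i$ and releases the sink $x'$, while re-using the now-free source $s_i$. The $k-1$ untouched paths together with $s_i \to y'$ constitute $k$ vertex-disjoint paths from $S$ to $T'\cup\{y'\}$ — hence a flow of size $k$ — provided $y'$ is not an internal vertex of any of the $k-1$ paths. This non-intersection at $y'$ is the crux and the step I expect to be the main obstacle. I would settle it by observing that a path ending at a sink $t'\in T'$ that passed through $y'$ would have to continue $y'\to\cdots\to t'$ along sink-to-sink edges, which correspond exactly to a directed path $y\to\cdots\to t$ in $G$, forcing $t\in De(y)$. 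This contradicts the tsIV requirement $De(y)\cap T=\emptyset$ (condition 1 of \cref{thm:div}); if that requirement already fails then $(S,T)$ is not a tsIV and the second claim is immediate, so we may assume it holds. Consequently $y'$ is untouched and the rerouting yields the desired size-$k$ flow to $T\cup\{y\}$.

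For the second claim, I would verify that this very flow already lives in $G^*_{flow}$, the graph underlying condition 3. The edges removed to form $G^*_{flow}$ all point into $y'$, namely the sink-to-sink edges $x'\to y'$ and $w_j'\to y'$ for the known parents. The new edge $s_i\to y'$ is a confounding (source-to-sink) edge, not one of these, so it survives; and, as just argued, the $k-1$ paths to $T'$ never enter $y'$, so they use no removed edge either. Therefore the max-flow from $S$ to $T'\cup\{y'\}$ in $G^*_{flow}$ is at least $k$, directly violating the strict inequality $<k$ demanded by condition 3 of \cref{thm:div}. Hence $(S,T)$ cannot serve as a tsIV for $\lambda_{xy}$, which completes the proof.
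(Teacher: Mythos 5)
Your proposal is correct and follows the same core idea as the paper's proof, which simply replaces the path from $s_i$ to $x$ with the bidirected edge $s_i\leftrightarrow y$ to obtain a size-$k$ flow to $T\cup\{y\}$. You additionally fill in the details the paper leaves implicit (that $y'$ cannot lie on the other $k-1$ paths because $De(y)\cap T=\emptyset$, and that the rerouted flow survives in $G^*_{flow}$ since the removed edges all point into $y'$ from sink nodes), which is a faithful elaboration rather than a different route.
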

\begin{proof}
    We can construct a flow $\mathcal{F}_y$ of size $k$ by replacing the path $s_i$ to $x$
    with $s_i \leftrightarrow y$, which is a flow of size $k$ from $S$ to $T\cup\{y\}$, and means that no tsIV exists.
    
\end{proof}

\divsat*
\begin{proof}
First, we convert the formula into a version without repeated literals in any clause by removing variables of repeated literals from all clauses where they appear (a repeated literal must be false, since otherwise the clause would have 2 trues, 
and would also force the remaining literal to be true). Similarly, we simplify out formulae with a literal and its negation in a single clause.
We can simplify the formula such that each clause does not have any repeated statements. We operate upon this converted formula.

    Consider $F$ being made up of $k$ clauses $c_1,...,c_k$, where clause $i$ has literals $l_{i1},l_{i2},l_{i3}$.

    \textbf{$\Rightarrow$: We prove that there exists a tsIV in $G$ if $F$ is satisfiable}, by constructing a simple conditional instrumental set (\cref{def:scIS}) for the $X$ variables,
    and using \cref{thm:scIS2div} to show that a corresponding tsIV exists.

    Let $l_{is} \in c_i$ be the true literal of a satisfying assignment. 
    We construct $W = \bigcup_{c_{i}\in F} \{w_{is}\}$ and $Z = \bigcup_{c_{i} \in F} \{z_{is}\}$. We now show that
    these sets describe a simple conditional instrumental set. 
    
    Let $G'$ be the graph where all edges $X\rightarrow y$ are removed. Each $z_i\in Z$ has the correspoding $w_i$
    conditioned. None of the bidirected edges between $z$ and $w$ are between elements of $Z,W$, since that would mean
    that the assignments are incompatible (either two literals true in a single clause, or a literal in a different clause being incompatible 
    with the assignment implied by $l_{is}$). This means that each $z_i\in Z$ is d-separated from $y$ in $G'$, and therefore $Z,W,X$ is a simple instrumental set (\cref{fig:divpg,fig:divph}).

\textbf{$\Leftarrow$: We show that if there does not exist a valid assignment to $F$, then there do not exist sets $S$ and $T$ that can be used as a tsIV for $\lambda_{x_1y}$.}
We will exploit conditions 2 and 3 of \cref{thm:div} to show that whenever there is a flow of size $k$ between  $S$ and $T\cup\{x_1\}$, then there is also a flow of size $k$
between $S$ and $T\cup\{y\}$ in the graph with $x_1'\rightarrow y'$ removed.

This is easiest to prove through contradiction.  Suppose that there exists a valid tsIV despite there being no satisfying
1-in-3SAT assignment to $F$. This means that there exists a flow $\mathcal{F}_x$ of size $k$ between $S$ and $T\cup \{x_1\}$ but no flow of size $k$
between $S$ and $T\cup\{y\}$.

Since flows correspond to nonintersecting path sets, then  $\mathcal{F}_x$ represents a matching of nonintersecting paths, one of which is from some $s_i\in S$ to $x_1$. 
We know that this $s_i$ cannot be any of the $X$ or $W$ nodes,
since they have bidirected edges to $y$ in $G$ (\cref{lem:nobidirected}).

This means that the only possibility for generating a valid tsIV is for $s_i\in Z$ to match with $x_1$ in $\mathcal{F}_x$. 

We now know that any valid tsIV has an element $s_i = z_i \in Z$ matched with $x_1$. 
Notice that $z_i$ has paths to $y$ through all of the $x_2',...,x_n'$.
We could construct a flow $\mathcal{F}_y$ e.g. with the path $z_i\rightarrow z_i' \rightarrow x_1'$ replaced with a path $z_i \rightarrow z_i' \rightarrow x_j' \rightarrow y$,
which once again corresponds to a full flow, meaning that no such tsIV exists (see \cref{fig:divp2a,fig:divp2b}).
Since $z_i$ is matched with $x_1$, 
in must have an unblocked path to $x_1$ through one of the $z_j'$, so all $x_i'$ must be in $T$ to disallow constructing such a $\mathcal{F}_y$.

Next, we require that each of the $x_i\in X\setminus\{x_1\}$ added to $T$ has a corresponding matched variable in $S$. None of the $X$ can be used for this function, since they have bidirected edges to $y$,
which could be used to construct a full flow to $y$ as shown in \cref{fig:divp2c,fig:divp2d}. 

Likewise, none of the $W$ can match to these $X$, because, once again, if there was an $x_j'$ matched to a $w_l$,
we could construct a new flow of size $k$, $\mathcal{F}_y$, which has $z_i$ matched with $x_j'$, 
and uses the $w_l\in W$'s bidirected edge to $y$ to create a full flow.

This means that we must have at least $k$ elements of $Z$ in $S$. Each of these $z_j \in S\cap Z$ which is matched to elements of $X$ has an open path to $x_1$, meaning that 
all of them need to have $w_j\in S$ to block the back-path from $z_j$ that could match with $y$ through $w$'s bidirected edge.

At this point, we have shown that any tsIV in a graph constructed as given in \cref{thm:divsat} must have in $S$ a set of $k$ $Z$ variables, called $Z_k$ which all have unblocked paths to the $X$,
and a set $W_k=W\cap Pa(Z_k)$, meaning $Z_k\cup W_k \subseteq S$ and $X\setminus\{x\} \subset T$.

Next, we need to add nodes that match to $W_k$ to $T$, 
since the flow must be of full size from $S$ to $T\cup \{x\}$. 

We first claim that the corresponding nodes cannot be elements of $Z_k'$.
Suppose not, that is, suppose that $\exists w_i\in W_k$ matched to $z_i'$.
The only possible way for this to be true is for $z_i$'s path to $x_i$ to be as shown in \cref{fig:divpe,fig:divpf}.
However, any such matching can be flipped so that it is $w_i$ matching to $x_j'$ and $z_i$ to $z_i'$, which would mean the existence of a full flow, and therefore no tsIV (\cref{lem:nobidirected}).

Furthermore, building upon this result, we will claim that any $z_i$ matched with $x_j$ must have a valid matching through $z_i \rightarrow z_i' \rightarrow x_j'$.
That is, we claim that $z_i'$ cannot be blocked. This can be seen by contradiction.
Suppose $z_i'\in T$. Then there must be a $w_j$, $z_j$, or even $x_j$ which matches with $z_i'$ ($i\ne j$).
However, we can then take the flow $\mathcal{F}_x$, and match $z_i$ with $z_i'$, and have the node originally
matched with $z_i'$ take the bidirected edge $w_j\leftrightarrow y$ instead.
This means we can construct a full flow $\mathcal{F}_y$, meaning that a tsIV cannot exist.
We can therefore assume that all $k$ matchings from the $z_i$ to $x_j$ go through the nodes $z_i'$ instead of taking bidirected edges 
(if $\mathcal{F}_x$ has a flow from $z_i$ through a bidirected edge to $x_j$, can replace it with the flow through $z_i'$ to create another full flow $\mathcal{F}_x'$).

\textbf{Recap:} We know that the $k$ nodes that match with the $k$ $x$ nodes must all be $z$ nodes, and each of those nodes $z_i$ must have its associated $w_i\in S$ to block the possible path to $y$ (\cref{fig:divp3a}). Finally, the $w_i$ cannot be matched to $z_i'$,
and $z_i$ must have its matching path be $z_i\rightarrow z_i' \rightarrow x_j'$.

\textbf{Claim:} We now claim that if a tsIV exists, with sets $S$ and $T$, 
then it must have a full flow where all $k$ $z$ nodes matching to the $x$ have 
their corresponding $w_i$ matched to $w_i'$, 
meaning that the $w_i$ does not match through a bidirected edge, 
but rather matches to itself. That is, we claim that there exists a flow that has $k$ matching substructures of the form
$z_i\rightarrow z_i'\rightarrow x_i'$ and $w_i\rightarrow w_i'$ (\cref{fig:divpc,fig:divpd}). We will call these ``active literals''.

We will prove that there must be $k$ active literals in this flow by contradiction. Suppose not. That is, suppose that the maximal number of active literals is $m<k$. Choose the flow $\mathcal{F}$ with all $m$ active literals. 
Next, choose one of the remaining $z_j$ that is matched to $x_j$,
which is not part of an active literal. $z_j$ must have $w_j\in S$, because the path in \cref{fig:divp3b} must be blocked. The $w_j$ must be matched to an element other than $w_j'$, since it would be an active literal, a contradiction.
The matched edge must be a descendant of $w_i$ in $G_{flow}$, 
meaning that the only candidate is $z_j$ with $j\ne i$ ($x_j$ was already matched to the $k$ $z_i$ values (\cref{fig:divp3c}), and $z_i$ was already shown to be impossible).

 We know $z_j$ is not an active literal, so
we have the paths shown in \cref{fig:divp3d}, which means that $w_j'$ must be blocked 
by adding $w_j'$ as a sink (otherwise we could create a full flow $\mathcal{F}_y$). We now observe the source matched to $w_j'$ in $\mathcal{F}_x$.
$w_j$ cannot be the source (\cref{fig:divp3e}), and same is true of $z_j$, 
due to the the same reason: $w_j\leftrightarrow y$ gives a way to match to $y$ (\cref{fig:divp3f}). 
Finally, once again, no value of $x$ can be part of the matching, due to its bidirected edge to $y$ (\cref{fig:divp3g})

This means that the only possible matching is with another $z_m$ connected to $w_j$ with bidirected edge, which would then need to have $w_m$ blocked (\cref{fig:divp3h}), and can't be an active literal, since those are already matched to $x$.
Now, we have returned to the situation in \cref{fig:divp3a}, where the options are using $w_m$ to match, or having a match through a bidirected edge.

The matches in $\mathcal{F}$ can continue through further bidirected edges, but due to the finite amount of literals in the full formula, 
at some point the chain will end with a $w_n$ matched to $w_n'$. We can then reorder this matching to create a new active literal (\cref{fig:divp3i}). 
This means that we have constructed a matching that will have the same total flow as $\mathcal{F}$, but has 1 more active literal, which is a contradiction, since $\mathcal{F}$ already has the maximal number of active literals.

We therefore conclude that there must be $k$ active literals in the tsIV.

\textbf{Recap:} We know that any tsIV has a full flow $\mathcal{F}$ which includes $k$ active literals, meaning that there are $k$ substructures where $z_i\in Z_k$
has $z_i\rightarrow z_i'\rightarrow x_j$ and $w_i\rightarrow w_i'$ are in $\mathcal{F}$

We exploit the knowledge that there is no satisfying 1-in-3SAT assignment to show that there must 
exist at least two of the $z_i,z_j\in Z_k$ and corresponding $w_i,w_j \in W_k$ such that there are bidirected edges $w_i\leftrightarrow z_j$ and $w_j \leftrightarrow z_i$, allowing us to construct a full flow including $y$ as shown in \cref{fig:divp2e,fig:divp2f}.
With this, we have ensured that there is no possible tsIV if there is no satisfying truth assignment where exactly one literal is true in each clause.

\end{proof}

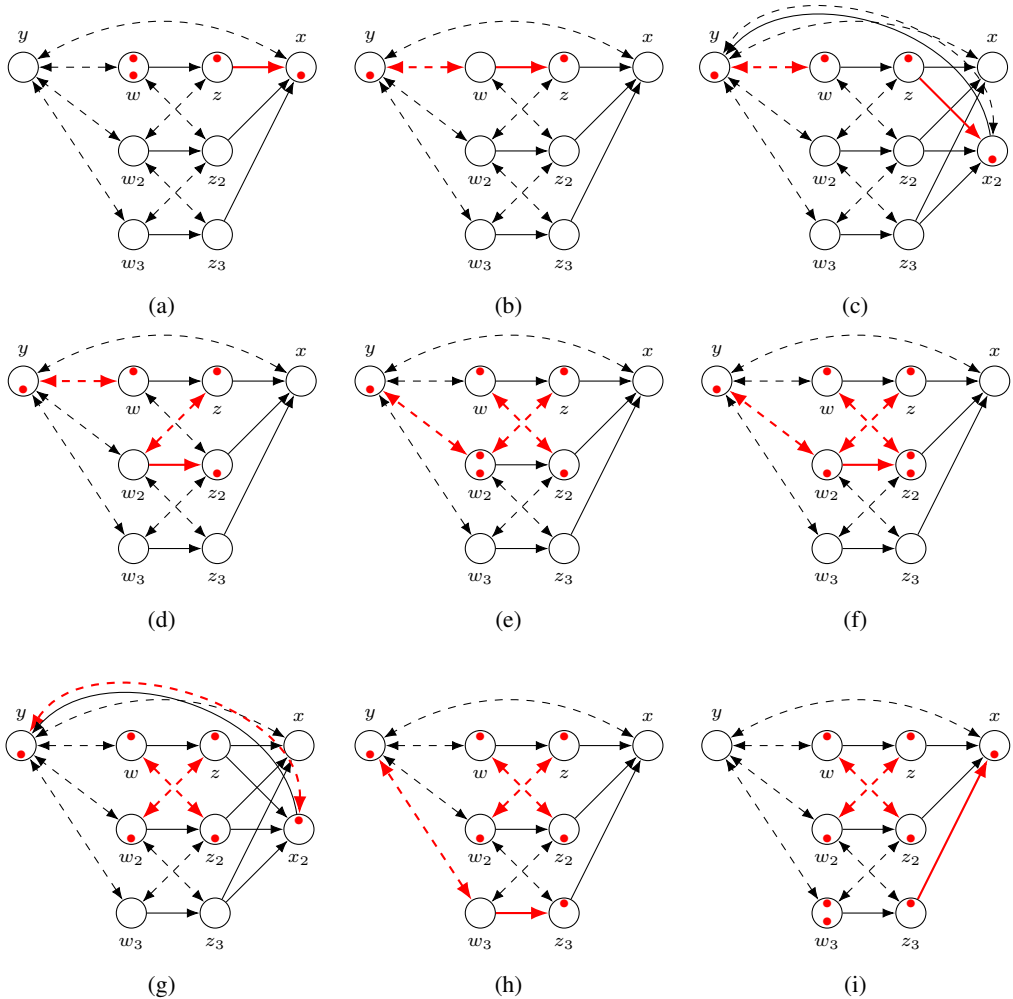
\begin{figure}
    \centering
    \begin{subfigure}[t]{0.33\linewidth}
      \center
      \begin{tikzpicture}[node distance =.7 cm and .7 cm]
  
          \dotnode[rtrb]{w}{}{below:$w$};
          \dotnode[rt]{z}{right = of w}{below:$z$};
          \dotnode[rb]{x}{right = of z}{above:$x$};
          \dotnode[]{w2}{below=of w}{below:$w_2$};
          \dotnode[]{z2}{right = of w2}{below:$z_2$};
          \dotnode[]{w3}{below=of w2}{below:$w_3$};
          \dotnode[]{z3}{right = of w3}{below:$z_3$};
          \dotnode[]{y}{left = of w,xshift=-1em}{above:$y$};
          \path[red,thick] (z) edge (x);
          
          \path (w) edge (z);
  
          \path[bidirected] (x) edge[bend right=30] (y);
          \path[bidirected] (w) edge (y);

          \path (z2) edge (x);
          \path (w2) edge (z2);
  
          \path[bidirected] (w2) edge (y);
  
          \path[bidirected] (w) edge (z2);
          \path[bidirected] (w2) edge (z);

          \path (z3) edge (x);
          \path (w3) edge (z3);
  
          \path[bidirected] (w3) edge (y);
  
          \path[bidirected] (w2) edge (z3);
          \path[bidirected] (w3) edge (z2);
  
      \end{tikzpicture}
  
      \caption{\label{fig:divp3a}}
  \end{subfigure}%
  \begin{subfigure}[t]{0.33\linewidth}
    \center
    \begin{tikzpicture}[node distance =.7 cm and .7 cm]
  
        \dotnode[]{w}{}{below:$w$};
        \dotnode[rt]{z}{right = of w}{below:$z$};
        \dotnode[]{x}{right = of z}{above:$x$};
        \dotnode[]{w2}{below=of w}{below:$w_2$};
        \dotnode[]{z2}{right = of w2}{below:$z_2$};
        \dotnode[]{w3}{below=of w2}{below:$w_3$};
        \dotnode[]{z3}{right = of w3}{below:$z_3$};
        \dotnode[rb]{y}{left = of w,xshift=-1em}{above:$y$};
        \path (z) edge (x);
        
        \path[red,thick] (w) edge (z);
  
        \path[bidirected] (x) edge[bend right=30] (y);
        \path[bidirected,red,thick] (w) edge (y);

        \path (z2) edge (x);
        \path (w2) edge (z2);
  
        \path[bidirected] (w2) edge (y);
  
        \path[bidirected] (w) edge (z2);
        \path[bidirected] (w2) edge (z);

        \path (z3) edge (x);
        \path (w3) edge (z3);
  
        \path[bidirected] (w3) edge (y);
  
        \path[bidirected] (w2) edge (z3);
        \path[bidirected] (w3) edge (z2);
  
    \end{tikzpicture}
  
    \caption{\label{fig:divp3b}}
  \end{subfigure}%
    \begin{subfigure}[t]{0.33\linewidth}
      \center
      \begin{tikzpicture}[node distance =.7 cm and .7 cm]
  
          \dotnode[rt]{w}{}{below:$w$};
          \dotnode[rt]{z}{right = of w}{below:$z$};
          \dotnode[]{x}{right = of z}{above:$x$};
          \dotnode[rb]{x2}{below = of x}{below:$x_2$};
          \dotnode[]{w2}{below=of w}{below:$w_2$};
          \dotnode[]{z2}{right = of w2}{below:$z_2$};
          \dotnode[]{w3}{below=of w2}{below:$w_3$};
          \dotnode[]{z3}{right = of w3}{below:$z_3$};
          \dotnode[rb]{y}{left = of w,xshift=-1em}{above:$y$};
          \path (z) edge (x);
          
          \path (w) edge (z);
  
          \path[bidirected] (x) edge[bend right=30] (y);
          \path[bidirected,red,thick] (w) edge (y);

          \path (z2) edge (x);
          \path (w2) edge (z2);
  
          \path[bidirected] (w2) edge (y);
  
          \path[bidirected] (w) edge (z2);
          \path[bidirected] (w2) edge (z);

          \path (z3) edge (x);
          \path (w3) edge (z3);
  
          \path[bidirected] (w3) edge (y);
  
          \path[bidirected] (w2) edge (z3);
          \path[bidirected] (w3) edge (z2);
  
          \path[red,thick] (z) edge (x2);
          \path (z2) edge (x2);
          \path (z3) edge (x2);
          \path[bidirected] (x2) edge[bend right=75] (y);
          \path (x2) edge[bend right=65] (y);
  
      \end{tikzpicture}
  
      \caption{\label{fig:divp3c}}
  \end{subfigure}%
  
    \begin{subfigure}[t]{0.33\linewidth}
      \center
      \begin{tikzpicture}[node distance =.7 cm and .7 cm]
  
        \dotnode[rt]{w}{}{below:$w$};
        \dotnode[rt]{z}{right = of w}{below:$z$};
        \dotnode[]{x}{right = of z}{above:$x$};
        \dotnode[]{w2}{below=of w}{below:$w_2$};
        \dotnode[rb]{z2}{right = of w2}{below:$z_2$};
        \dotnode[]{w3}{below=of w2}{below:$w_3$};
        \dotnode[]{z3}{right = of w3}{below:$z_3$};
        \dotnode[rb]{y}{left = of w,xshift=-1em}{above:$y$};
          \path (z) edge (x);
          
          \path (w) edge (z);
  
          \path[bidirected] (x) edge[bend right=30] (y);
          \path[bidirected,red,thick] (w) edge (y);

          \path (z2) edge (x);
          \path[red,thick] (w2) edge (z2);
  
          \path[bidirected] (w2) edge (y);
  
          \path[bidirected] (w) edge (z2);
          \path[bidirected,red,thick] (w2) edge (z);

          \path (z3) edge (x);
          \path (w3) edge (z3);
  
          \path[bidirected] (w3) edge (y);
  
          \path[bidirected] (w2) edge (z3);
          \path[bidirected] (w3) edge (z2);
  
      \end{tikzpicture}
  
      \caption{\label{fig:divp3d}}
  \end{subfigure}%
  \begin{subfigure}[t]{0.33\linewidth}
    \center
    \begin{tikzpicture}[node distance =.7 cm and .7 cm]
  
      \dotnode[rt]{w}{}{below:$w$};
      \dotnode[rt]{z}{right = of w}{below:$z$};
      \dotnode[]{x}{right = of z}{above:$x$};
      \dotnode[rtrb]{w2}{below=of w}{below:$w_2$};
      \dotnode[rb]{z2}{right = of w2}{below:$z_2$};
      \dotnode[]{w3}{below=of w2}{below:$w_3$};
      \dotnode[]{z3}{right = of w3}{below:$z_3$};
      \dotnode[rb]{y}{left = of w,xshift=-1em}{above:$y$};
        \path (z) edge (x);
        
        \path (w) edge (z);
  
        \path[bidirected] (x) edge[bend right=30] (y);
        \path[bidirected] (w) edge (y);

        \path (z2) edge (x);
        \path (w2) edge (z2);
  
        \path[bidirected,red,thick] (w2) edge (y);
  
        \path[bidirected,red,thick] (w) edge (z2);
        \path[bidirected,red,thick] (w2) edge (z);

        \path (z3) edge (x);
        \path (w3) edge (z3);
  
        \path[bidirected] (w3) edge (y);
  
        \path[bidirected] (w2) edge (z3);
        \path[bidirected] (w3) edge (z2);
  
    \end{tikzpicture}
  
    \caption{\label{fig:divp3e}}
  \end{subfigure}%
  \begin{subfigure}[t]{0.33\linewidth}
    \center
    \begin{tikzpicture}[node distance =.7 cm and .7 cm]
  
      \dotnode[rt]{w}{}{below:$w$};
      \dotnode[rt]{z}{right = of w}{below:$z$};
      \dotnode[]{x}{right = of z}{above:$x$};
      \dotnode[rb]{w2}{below=of w}{below:$w_2$};
      \dotnode[rtrb]{z2}{right = of w2}{below:$z_2$};
      \dotnode[]{w3}{below=of w2}{below:$w_3$};
      \dotnode[]{z3}{right = of w3}{below:$z_3$};
      \dotnode[rb]{y}{left = of w,xshift=-1em}{above:$y$};
        \path (z) edge (x);
        
        \path (w) edge (z);
  
        \path[bidirected] (x) edge[bend right=30] (y);
        \path[bidirected] (w) edge (y);

        \path (z2) edge (x);
        \path[red,thick] (w2) edge (z2);
  
        \path[bidirected,red,thick] (w2) edge (y);
  
        \path[bidirected,red,thick] (w) edge (z2);
        \path[bidirected,red,thick] (w2) edge (z);

        \path (z3) edge (x);
        \path (w3) edge (z3);
  
        \path[bidirected] (w3) edge (y);
  
        \path[bidirected] (w2) edge (z3);
        \path[bidirected] (w3) edge (z2);
  
    \end{tikzpicture}
  
    \caption{\label{fig:divp3f}}
  \end{subfigure}%
  
  \begin{subfigure}[t]{0.33\linewidth}
    \center
    \begin{tikzpicture}[node distance =.7 cm and .7 cm]
  
        \dotnode[rt]{w}{}{below:$w$};
        \dotnode[rt]{z}{right = of w}{below:$z$};
        \dotnode[]{x}{right = of z}{above:$x$};
        \dotnode[rt]{x2}{below = of x}{below:$x_2$};
        \dotnode[rb]{w2}{below=of w}{below:$w_2$};
        \dotnode[rb]{z2}{right = of w2}{below:$z_2$};
        \dotnode[]{w3}{below=of w2}{below:$w_3$};
        \dotnode[]{z3}{right = of w3}{below:$z_3$};
        \dotnode[rb]{y}{left = of w,xshift=-1em}{above:$y$};
        \path (z) edge (x);
        
        \path (w) edge (z);
  
        \path[bidirected] (x) edge[bend right=30] (y);
        \path[bidirected] (w) edge (y);

        \path (z2) edge (x);
        \path (w2) edge (z2);
  
        \path[bidirected] (w2) edge (y);
  
        \path[bidirected,red,thick] (w) edge (z2);
        \path[bidirected,red,thick] (w2) edge (z);

        \path (z3) edge (x);
        \path (w3) edge (z3);
  
        \path[bidirected] (w3) edge (y);
  
        \path[bidirected] (w2) edge (z3);
        \path[bidirected] (w3) edge (z2);
  
        \path (z) edge (x2);
        \path (z2) edge (x2);
        \path (z3) edge (x2);
        \path[bidirected,red,thick] (x2) edge[bend right=75] (y);
        \path (x2) edge[bend right=65] (y);
  
    \end{tikzpicture}
  
    \caption{\label{fig:divp3g}}
  \end{subfigure}%
  \begin{subfigure}[t]{0.33\linewidth}
    \center
    \begin{tikzpicture}[node distance =.7 cm and .7 cm]
  
        \dotnode[rt]{w}{}{below:$w$};
        \dotnode[rt]{z}{right = of w}{below:$z$};
        \dotnode[]{x}{right = of z}{above:$x$};
        \dotnode[rb]{w2}{below=of w}{below:$w_2$};
        \dotnode[rb]{z2}{right = of w2}{below:$z_2$};
        \dotnode[]{w3}{below=of w2}{below:$w_3$};
        \dotnode[rt]{z3}{right = of w3}{below:$z_3$};
        \dotnode[rb]{y}{left = of w,xshift=-1em}{above:$y$};
        \path (z) edge (x);
        
        \path (w) edge (z);
  
        \path[bidirected] (x) edge[bend right=30] (y);
        \path[bidirected] (w) edge (y);

        \path (z2) edge (x);
        \path (w2) edge (z2);
  
        \path[bidirected] (w2) edge (y);
  
        \path[bidirected,red,thick] (w) edge (z2);
        \path[bidirected,red,thick] (w2) edge (z);

        \path (z3) edge (x);
        \path[red,thick] (w3) edge (z3);
  
        \path[bidirected,red,thick] (w3) edge (y);
  
        \path[bidirected] (w2) edge (z3);
        \path[bidirected] (w3) edge (z2);
  
    \end{tikzpicture}
  
    \caption{\label{fig:divp3h}}
  \end{subfigure}%
  \begin{subfigure}[t]{0.33\linewidth}
    \center
    \begin{tikzpicture}[node distance =.7 cm and .7 cm]
  
        \dotnode[rt]{w}{}{below:$w$};
        \dotnode[rt]{z}{right = of w}{below:$z$};
        \dotnode[rb]{x}{right = of z}{above:$x$};
        \dotnode[rb]{w2}{below=of w}{below:$w_2$};
        \dotnode[rb]{z2}{right = of w2}{below:$z_2$};
        \dotnode[rtrb]{w3}{below=of w2}{below:$w_3$};
        \dotnode[rt]{z3}{right = of w3}{below:$z_3$};
        \dotnode[]{y}{left = of w,xshift=-1em}{above:$y$};
        \path (z) edge (x);
        
        \path (w) edge (z);
  
        \path[bidirected] (x) edge[bend right=30] (y);
        \path[bidirected] (w) edge (y);

        \path (z2) edge (x);
        \path (w2) edge (z2);
  
        \path[bidirected] (w2) edge (y);
  
        \path[bidirected,red,thick] (w) edge (z2);
        \path[bidirected,red,thick] (w2) edge (z);

        \path[red,thick] (z3) edge (x);
        \path (w3) edge (z3);
  
        \path[bidirected] (w3) edge (y);
  
        \path[bidirected] (w2) edge (z3);
        \path[bidirected] (w3) edge (z2);
  
    \end{tikzpicture}
  
    \caption{\label{fig:divp3i}}
  \end{subfigure}%
    \caption{Graphs used to demonstrate elements of the proof of \cref{thm:divsat}}
    \label{fig:divp3}
  \end{figure}

\divnp*
\begin{proof}
1-in-3SAT was shown to be NP-complete by \citet{schaeferComplexitySatisfiabilityProblems1978}. By \cref{thm:divsat}, we can solve 1-in-3SAT with an algorithm for tsIV. The corresponding graph is computable from
the boolean formula in polynomial time. Likewise, \cref{thm:div} describes a polynomial-time procedure for determining whether a given set can be used as a tsIV. Therefore, the problem is NP-complete.
\end{proof}

\begin{restatable}{corollary}{scivnp}
    \label{cor:scivnp}
    Given an acyclic DAG $G$ and edge $\lambda_{xy}$, finding a simple conditional instrumental set which can be used to solve for $\lambda_{xy}$ in $G$ is an NP-Complete problem.
    \end{restatable}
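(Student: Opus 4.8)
The plan is to establish NP-completeness in the usual two parts: membership in NP, and NP-hardness via a reduction from 1-in-3SAT. The key observation is that the hardness does not require a new construction — it can reuse the very graph $G$ built in \cref{thm:divsat}, with the bridge \cref{thm:scIS2div} transferring the lower bound from tsIVs to simple conditional instrumental sets.

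For membership, I would argue that verifying a proposed scIS is polynomial. Take as certificate the sets $Z, W$, the target parent set $X \ni x$ (here all parents of $y$ that are to be identified), and the path system $\Pi$ itself. Checking condition (1) of \cref{def:scIS} — that $\Pi$ is a set of $|Z|$ paths from $Z$ to $X$ without sided intersection — amounts to verifying nonintersection in $G_{flow}$, which is a single max-flow check by \cref{lemma:fullrank}. Checking condition (2) — that $W$ $d$-separates every $z \in Z$ from $y$ in the graph with the edges $X \to y$ removed, while not blocking any path of $\Pi$ — is a finite family of $d$-separation tests, each decidable in polynomial time. Since the certificate has size polynomial in $|G|$ and is checkable in polynomial time, the problem is in NP.

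For hardness I would reduce from 1-in-3SAT using exactly the polynomial-time graph construction of \cref{thm:divsat}, and claim that an scIS usable to solve $\lambda_{x_1 y}$ exists in $G$ if and only if $F$ has an assignment with exactly one true literal per clause. The forward direction is already in hand: the ``$\Rightarrow$'' part of the proof of \cref{thm:divsat} does not merely exhibit a tsIV, it explicitly builds $Z = \bigcup_i \{z_{is}\}$ and $W = \bigcup_i \{w_{is}\}$ from the true literals $l_{is}$ and verifies directly that $(Z, W, X)$ is a simple conditional instrumental set, so a satisfying assignment yields an scIS with no extra work. The converse is where \cref{thm:scIS2div} carries the argument: if an scIS for $\lambda_{x_1 y}$ exists, then by \cref{thm:scIS2div} a tsIV for $\lambda_{x_1 y}$ also exists; but the ``$\Leftarrow$'' direction of \cref{thm:divsat} shows by contraposition that $G$ admits no tsIV for $\lambda_{x_1 y}$ whenever $F$ is unsatisfiable. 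Chaining these — scIS $\Rightarrow$ tsIV $\Rightarrow$ satisfiability, together with satisfiability $\Rightarrow$ scIS — closes the equivalence, giving a polynomial-time reduction from 1-in-3SAT and hence NP-hardness.

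The only delicate point I anticipate is making the membership argument airtight, since condition (2) of \cref{def:scIS} couples $W$ simultaneously to a $d$-separation requirement and to the non-blocking of a \emph{specific} path system $\Pi$. A naive formulation might appear to demand a search over all nonintersecting path systems at verification time; I avoid this by folding $\Pi$ into the certificate, so the verifier only checks fixed objects. The hardness half introduces essentially no new combinatorics, as all of it was already discharged in the proof of \cref{thm:divsat}; the corollary's entire content is the observation that the same construction plus the bridge lemma \cref{thm:scIS2div} transports the tsIV lower bound of \cref{cor:divnp} to scISs, yielding NP-completeness.
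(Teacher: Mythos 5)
Your proposal is correct and follows essentially the same route as the paper's own proof: NP membership via polynomial-time verification of a candidate scIS, and hardness by reusing the construction of \cref{thm:divsat} (whose forward direction already exhibits an scIS from a satisfying assignment) together with the bridge \cref{thm:scIS2div} to transport the tsIV lower bound in the converse direction. The only cosmetic difference is that you spell out the verification certificate explicitly where the paper simply cites van der Zander et al.\ for the polynomial-time check.
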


\begin{proof}
    1-in-3SAT was shown to be NP-complete by \citet{schaeferComplexitySatisfiabilityProblems1978}. The proof of \cref{thm:divsat}, constructed a simple conditional instrumental set for $X$ whenever there was a satisfying assignment. The corresponding graph is computable from
    the boolean formula in polynomial time. Likewise, \citep{vanderzanderSearchingGeneralizedInstrumental2016} describes a polynomial-time procedure for determining whether a given set can be used as a simple instrumental set. 
    Finally, since a tsIV does not exist whenever there is no satisfying assignment in \cref{thm:divsat}, and whenever a simple conditional instrumental set exists, a corresponding tsIV also exists (\cref{thm:scIS2div}), no simple instrumental set exists if there is no satisfying assignment.
\end{proof}

\begin{figure}
    \begin{subfigure}[t]{0.4\linewidth}
        \center
        \begin{tikzpicture}[node distance =.7 cm and .7 cm]

            \dotnode[rtrb]{w}{}{below:$w$};
            \dotnode[rt]{z}{right = of w}{below:$z$};
            \dotnode[rb]{x}{right = of z}{below:$x$};
            \dotnode[]{y}{right = of x}{below:$y$};
            \path[red] (x) edge (y);
            \path (z) edge (x);
            \path (w) edge (z);

            \path[bidirected] (x) edge[bend left=30] (y);
            \path[bidirected] (w) edge[bend left=40] (y); 
        \end{tikzpicture}

        \caption{\label{fig:divpa}}
    \end{subfigure}%
    \begin{subfigure}[t]{0.55\linewidth}
        \center
        \begin{tikzpicture}

            \node[red,very thick]  (w) [right=of a,label=above:$w$,point];
            \node[red,very thick] (wp)  [below=of w,label=below:$w'$,point];

            \node[red,very thick]  (z) [right=of w,label=above:$z$,point];
            \node (zp)  [below=of z,label=below:$z'$,point];

            \node (x) [right=of z,label=above:$x$,point];
            \node[red,very thick]  (xp)  [below=of x,label=below:$x'$,point];

            \node (yp)  [right=of xp,label=below:$y'$,point];

            \path[dashed] (w) edge (wp);
            \path[dashed] (z) edge (zp);
            \path[dashed] (x) edge (xp);

            \path (wp) edge (zp);
            \path (z) edge (w);
            \path (zp) edge (xp);
            \path (x) edge (z);
            \path (xp) edge (yp);

            \path[dashed] (x) edge (yp);

            \path[dashed] (w) edge[bend left=70] (yp);
            
        \end{tikzpicture}
        \caption{\label{fig:divpb}}
    \end{subfigure}

    \begin{subfigure}[t]{0.4\linewidth}
        \center
        \begin{tikzpicture}[node distance =.7 cm and .7 cm]

            \dotnode[rtrb]{w}{}{below:$w$};
            \dotnode[rt]{z}{right = of w}{below:$z$};
            \dotnode[]{x}{right = of z}{below:$x$};
            \dotnode[rb]{y}{right = of x}{below:$y$};
            \path (z) edge (x);
            \path (w) edge (z);

            \path[bidirected] (x) edge[bend left=30] (y);
            \path[bidirected] (w) edge[bend left=40] (y); 
        \end{tikzpicture}

        \caption{\label{fig:divpc}}
    \end{subfigure}%
    \begin{subfigure}[t]{0.55\linewidth}
        \center
        \begin{tikzpicture}

            \node[red,very thick]  (w) [right=of a,label=above:$w$,point];
            \node[red,very thick] (wp)  [below=of w,label=below:$w'$,point];

            \node[red,very thick]  (z) [right=of w,label=above:$z$,point];
            \node (zp)  [below=of z,label=below:$z'$,point];

            \node (x) [right=of z,label=above:$x$,point];
            \node  (xp)  [below=of x,label=below:$x'$,point];

            \node[red,very thick] (yp)  [right=of xp,label=below:$y'$,point];

            \path[dashed] (w) edge (wp);
            \path[dashed] (z) edge (zp);
            \path[dashed] (x) edge (xp);

            \path (wp) edge (zp);
            \path (z) edge (w);
            \path (zp) edge (xp);
            \path (x) edge (z);

            \path[dashed] (x) edge (yp);

            \path[dashed] (w) edge[bend left=70] (yp);
            
        \end{tikzpicture}
        \caption{\label{fig:divpd}}
    \end{subfigure}

    \begin{subfigure}[t]{0.4\linewidth}
        \center
        \begin{tikzpicture}[node distance =.7 cm and .7 cm]

            \dotnode[rt]{w}{}{below:$w$};
            \dotnode[rtrb]{z}{right = of w}{below:$z$};
            \dotnode[rb]{x}{right = of z}{below:$x$};
            \dotnode[]{y}{right = of x}{below:$y$};
            \path[red] (x) edge (y);
            \path (z) edge (x);
            \path (w) edge (z);

            \path[bidirected] (x) edge[bend left=30] (y);
            \path[bidirected] (w) edge[bend left=40] (y);

            \dotnode[]{w2}{below=of w}{below:$w_2$};
            \dotnode[]{z2}{right = of w2}{below:$z_2$};
            \path[red] (x) edge (y);
            \path (z2) edge (x);
            \path (w2) edge (z2);

            \path[bidirected] (w2) edge[bend right=60] (y);

            \path[bidirected] (w) edge (z2);
            \path[bidirected] (w2) edge (z);
 
        \end{tikzpicture}

        \caption{\label{fig:divpe}}
    \end{subfigure}%
    \begin{subfigure}[t]{0.55\linewidth}
        \center
        \begin{tikzpicture}

            \node[red,very thick]  (w) [label=above:$w$,point];
            \node (wp)  [below=of w,label=below:$w'$,point];

            \node[red,very thick]  (z) [right=of w,label=above:$z$,point];
            \node[red,very thick] (zp)  [below=of z,label=below:$z'$,point];

            \node (x) [right=of z,label=above:$x$,point];
            \node[red,very thick]  (xp)  [below=of x,label=below:$x'$,point];

            \node (yp)  [right=of xp,label=below:$y'$,point];

            \path[dashed,thick,red] (w) edge (wp);
            \path[dashed] (z) edge (zp);
            \path[dashed] (x) edge (xp);

            \path[thick,red] (wp) edge (zp);
            \path (z) edge (w);
            \path (zp) edge (xp);
            \path (x) edge (z);
            \path (xp) edge (yp);

            \path[dashed] (x) edge (yp);

            \path[dashed] (w) edge[bend left=70] (yp);

            \node  (w2) [below=of wp,label=above:$w_2$,point];
            \node (wp2)  [below=of w2,label=below:$w_2'$,point];

            \node  (z2) [right=of w2,label=above:$z_2$,point];
            \node (zp2)  [below=of z2,label=below:$z_2'$,point];

            \path[dashed] (w2) edge (wp2);
            \path[dashed] (z2) edge (zp2);

            \path[thick,red] (wp2) edge (zp2);
            \path (z2) edge (w2);
            \path[thick,red] (zp2) edge (xp);
            \path (x) edge (z2);

            \path[dashed] (w2) edge[bend right=40] (yp);

            \path[dashed] (z2) edge (wp);
            \path[dashed,thick,red] (z) edge (wp2);

            \path[dashed] (w) edge (zp2);
            \path[dashed] (w2) edge (zp);
            
        \end{tikzpicture}
        \caption{\label{fig:divpf}}
    \end{subfigure}

    \begin{subfigure}[t]{0.4\linewidth}
        \center
        \begin{tikzpicture}[node distance =.7 cm and .7 cm]

            \dotnode[rtrb]{w}{}{below:$w$};
            \dotnode[rt]{z}{right = of w}{below:$z$};
            \dotnode[]{x}{right = of z}{below:$x$};
            \dotnode[rb]{y}{right = of x}{below:$y$};
            \path[red] (x) edge (y);
            \path (z) edge (x);
            
            \path (w) edge (z);

            \path[bidirected] (x) edge[bend left=30] (y);
            \path[bidirected] (w) edge[bend left=40] (y);

            \dotnode[rtrb]{w2}{below=of w}{below:$w_2$};
            \dotnode[rt]{z2}{right = of w2}{below:$z_2$};
            \dotnode[rb]{x2}{right = of z2}{below:$x_2$};
            \path (z) edge (x2);
            \path (x2) edge (y);
            \path (z2) edge (x);
            \path (z2) edge (x2);
            \path[bidirected] (x2) edge[bend left=30] (y);
            \path (w2) edge (z2);

            \path[bidirected] (w2) edge[bend right=80] (y);

        \end{tikzpicture}

        \caption{\label{fig:divpg}}
    \end{subfigure}%
    \begin{subfigure}[t]{0.55\linewidth}
        \center
        \begin{tikzpicture}

            \node[red,very thick]  (w) [right=of a,label=above:$w$,point];
            \node[red,very thick] (wp)  [below=of w,label=below:$w'$,point];

            \node[red,very thick]  (z) [right=of w,label=above:$z$,point];
            \node (zp)  [below=of z,label=below:$z'$,point];

            \node (x) [right=of z,label=above:$x$,point];
            \node  (xp)  [below=of x,label=below:$x'$,point];

            \node[red,very thick] (yp)  [right=of xp,label=right:$y'$,point];

            \path[dashed] (w) edge (wp);
            \path[dashed] (z) edge (zp);
            \path[dashed] (x) edge (xp);

            \path (wp) edge (zp);
            \path (z) edge (w);
            \path (zp) edge (xp);
            \path (x) edge (z);
            \path (xp) edge (yp);

            \path[dashed] (x) edge (yp);

            \path[dashed] (w) edge[bend left=70] (yp);

            \node[red,very thick]  (w2) [below=of wp,label=above:$w_2$,point];
            \node[red,very thick] (wp2)  [below=of w2,label=below:$w_2'$,point];

            \node[red,very thick]  (z2) [right=of w2,label=above:$z_2$,point];
            \node (zp2)  [below=of z2,label=below:$z_2'$,point];

            \node (x2) [right=of z2,label=above:$x_2$,point];
            \node[red,very thick]  (xp2)  [below=of x2,label=below:$x'_2$,point];

            \path[dashed] (w2) edge (wp2);
            \path[dashed] (z2) edge (zp2);

            \path (wp2) edge (zp2);
            \path (z2) edge (w2);
            \path (zp2) edge (xp);
            \path (x) edge (z2);

            \path[dashed] (w2) edge[bend right=80] (yp);

            \path[dashed] (x2) edge (yp);
            \path[dashed] (x2) edge (xp2);
            \path (xp2) edge (yp);

            \path (zp) edge (xp2);
            \path (x2) edge (z);

            \path (zp2) edge (xp2);
            \path (x2) edge (z2);

        \end{tikzpicture}
        \caption{\label{fig:divph}}
    \end{subfigure}
    
    \caption{Graphs used to illustrate elements of the proof of \cref{thm:divsat}}
    \label{fig:divp}
\end{figure}
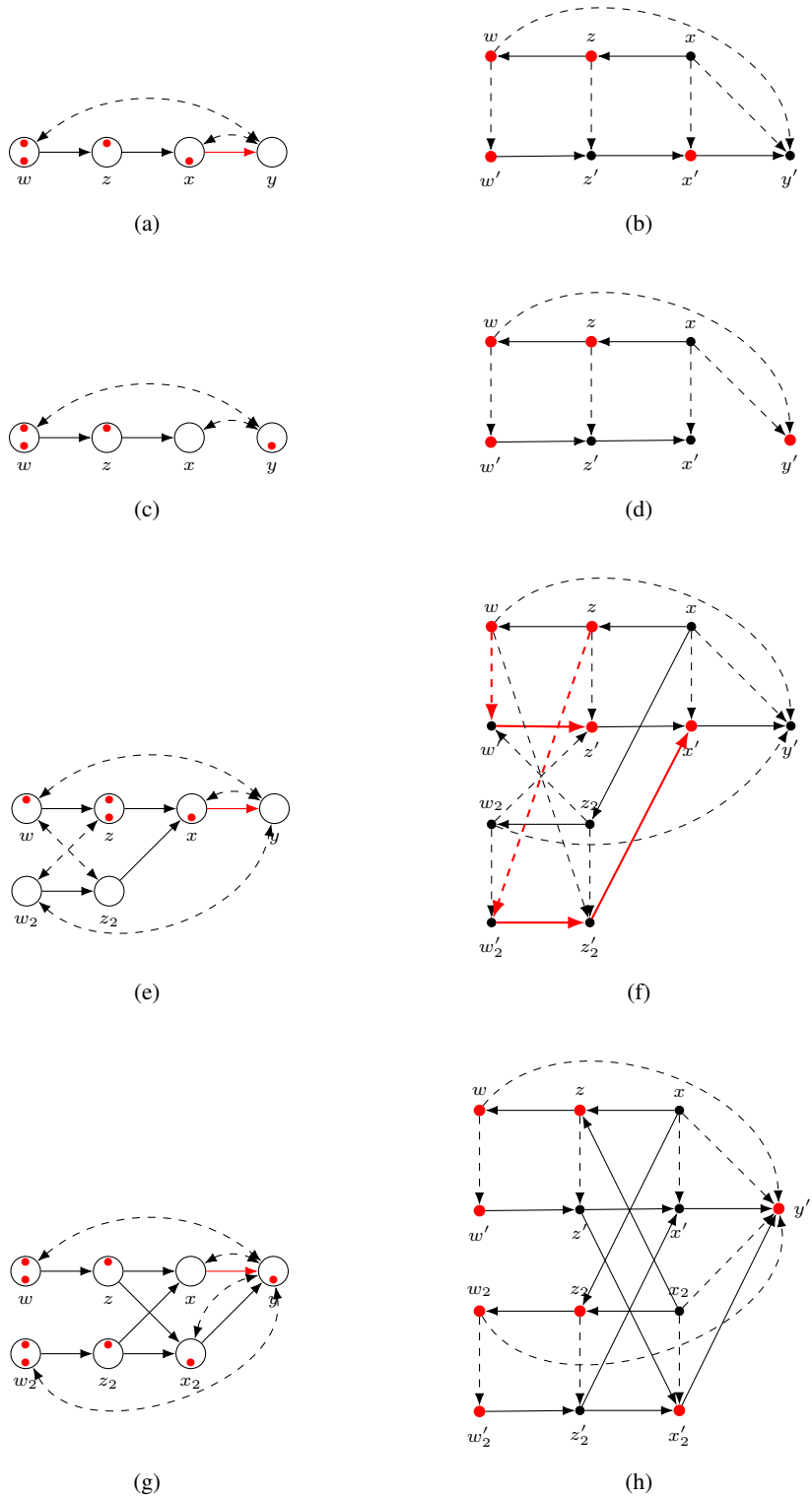

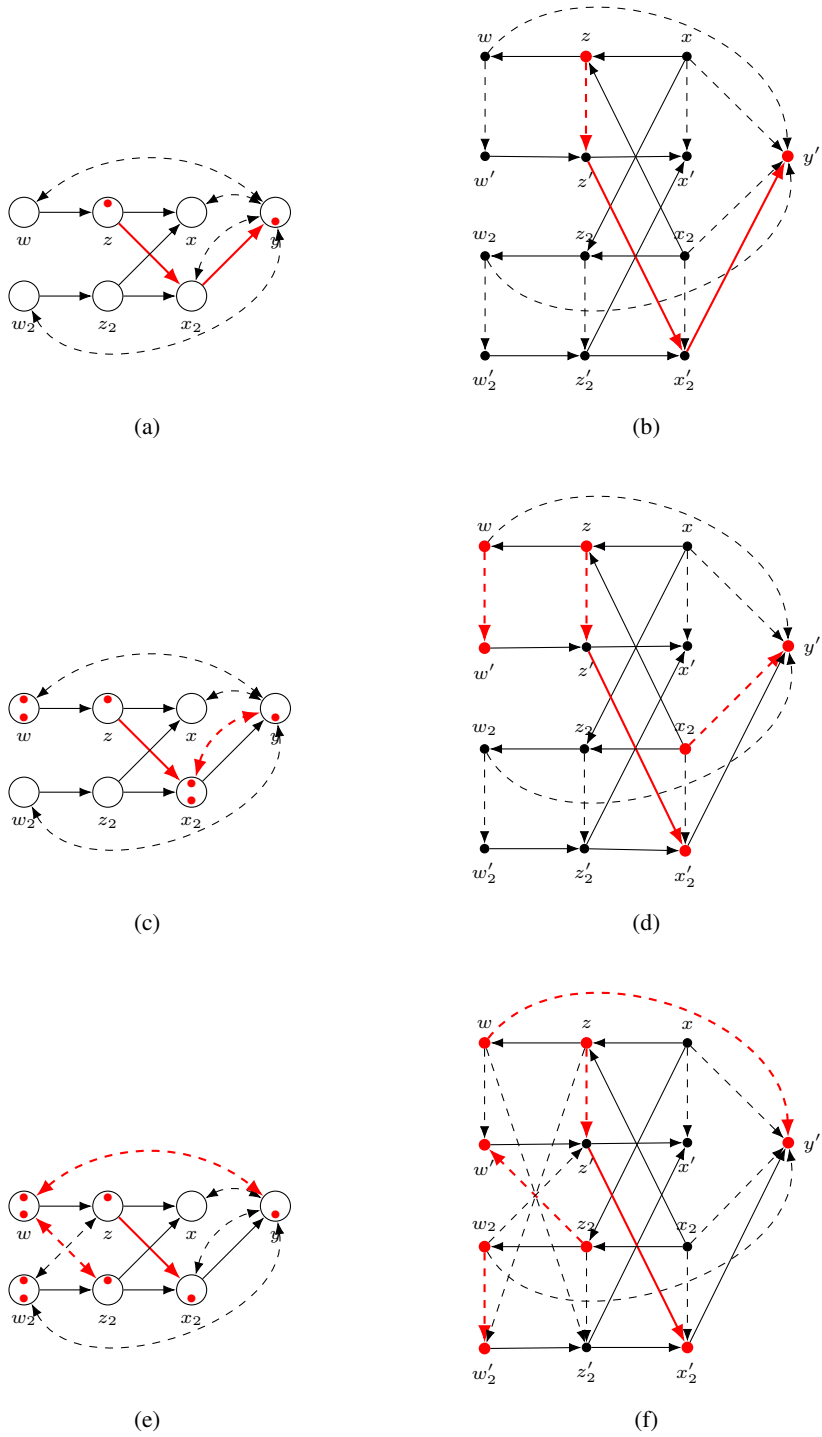
\begin{figure}

    \begin{subfigure}[t]{0.4\linewidth}
        \center
        \begin{tikzpicture}[node distance =.7 cm and .7 cm]

            \dotnode[]{w}{}{below:$w$};
            \dotnode[rt]{z}{right = of w}{below:$z$};
            \dotnode[]{x}{right = of z}{below:$x$};
            \dotnode[rb]{y}{right = of x}{below:$y$};
            \path (z) edge (x);
            
            \path (w) edge (z);

            \path[bidirected] (x) edge[bend left=30] (y);
            \path[bidirected] (w) edge[bend left=40] (y);

            \dotnode[]{w2}{below=of w}{below:$w_2$};
            \dotnode[]{z2}{right = of w2}{below:$z_2$};
            \dotnode[]{x2}{right = of z2}{below:$x_2$};
            \path[red,thick] (z) edge (x2);
            \path[red,thick] (x2) edge (y);
            \path (z2) edge (x);
            \path (z2) edge (x2);
            \path[bidirected] (x2) edge[bend left=30] (y);
            \path (w2) edge (z2);

            \path[bidirected] (w2) edge[bend right=80] (y);

        \end{tikzpicture}

        \caption{\label{fig:divp2a}}
    \end{subfigure}%
    \begin{subfigure}[t]{0.55\linewidth}
        \center
        \begin{tikzpicture}

            \node  (w) [right=of a,label=above:$w$,point];
            \node (wp)  [below=of w,label=below:$w'$,point];

            \node[red,very thick]  (z) [right=of w,label=above:$z$,point];
            \node (zp)  [below=of z,label=below:$z'$,point];

            \node (x) [right=of z,label=above:$x$,point];
            \node  (xp)  [below=of x,label=below:$x'$,point];

            \node[red,very thick] (yp)  [right=of xp,label=right:$y'$,point];

            \path[dashed] (w) edge (wp);
            \path[dashed,red,thick] (z) edge (zp);
            \path[dashed] (x) edge (xp);

            \path (wp) edge (zp);
            \path (z) edge (w);
            \path (zp) edge (xp);
            \path (x) edge (z);

            \path[dashed] (x) edge (yp);

            \path[dashed] (w) edge[bend left=70] (yp);

            \node  (w2) [below=of wp,label=above:$w_2$,point];
            \node (wp2)  [below=of w2,label=below:$w_2'$,point];

            \node (z2) [right=of w2,label=above:$z_2$,point];
            \node (zp2)  [below=of z2,label=below:$z_2'$,point];

            \node (x2) [right=of z2,label=above:$x_2$,point];
            \node (xp2)  [below=of x2,label=below:$x'_2$,point];

            \path[dashed] (w2) edge (wp2);
            \path[dashed] (z2) edge (zp2);

            \path (wp2) edge (zp2);
            \path (z2) edge (w2);
            \path (zp2) edge (xp);
            \path (x) edge (z2);

            \path[dashed] (w2) edge[bend right=80] (yp);

            \path[dashed] (x2) edge (yp);
            \path[dashed] (x2) edge (xp2);
            \path[red,thick] (xp2) edge (yp);

            \path[red,thick] (zp) edge (xp2);
            \path (x2) edge (z);

            \path (zp2) edge (xp2);
            \path (x2) edge (z2);

        \end{tikzpicture}
        \caption{\label{fig:divp2b}}
    \end{subfigure}

    \begin{subfigure}[t]{0.4\linewidth}
        \center
        \begin{tikzpicture}[node distance =.7 cm and .7 cm]

            \dotnode[rtrb]{w}{}{below:$w$};
            \dotnode[rt]{z}{right = of w}{below:$z$};
            \dotnode[]{x}{right = of z}{below:$x$};
            \dotnode[rb]{y}{right = of x}{below:$y$};
            \path (z) edge (x);
            
            \path (w) edge (z);

            \path[bidirected] (x) edge[bend left=30] (y);
            \path[bidirected] (w) edge[bend left=40] (y);

            \dotnode[]{w2}{below=of w}{below:$w_2$};
            \dotnode[]{z2}{right = of w2}{below:$z_2$};
            \dotnode[rtrb]{x2}{right = of z2}{below:$x_2$};
            \path[red,thick] (z) edge (x2);
            \path (x2) edge (y);
            \path (z2) edge (x);
            \path (z2) edge (x2);
            \path[bidirected,red,thick] (x2) edge[bend left=30] (y);
            \path (w2) edge (z2);

            \path[bidirected] (w2) edge[bend right=80] (y);

        \end{tikzpicture}

        \caption{\label{fig:divp2c}}
    \end{subfigure}%
    \begin{subfigure}[t]{0.55\linewidth}
        \center
        \begin{tikzpicture}

            \node[red,very thick]  (w) [right=of a,label=above:$w$,point];
            \node[red,very thick] (wp)  [below=of w,label=below:$w'$,point];

            \node[red,very thick]  (z) [right=of w,label=above:$z$,point];
            \node (zp)  [below=of z,label=below:$z'$,point];

            \node (x) [right=of z,label=above:$x$,point];
            \node  (xp)  [below=of x,label=below:$x'$,point];

            \node[red,very thick] (yp)  [right=of xp,label=right:$y'$,point];

            \path[dashed,red,thick] (w) edge (wp);
            \path[dashed,red,thick] (z) edge (zp);
            \path[dashed] (x) edge (xp);

            \path (wp) edge (zp);
            \path (z) edge (w);
            \path (zp) edge (xp);
            \path (x) edge (z);

            \path[dashed] (x) edge (yp);

            \path[dashed] (w) edge[bend left=70] (yp);

            \node  (w2) [below=of wp,label=above:$w_2$,point];
            \node (wp2)  [below=of w2,label=below:$w_2'$,point];

            \node (z2) [right=of w2,label=above:$z_2$,point];
            \node (zp2)  [below=of z2,label=below:$z_2'$,point];

            \node[red,very thick] (x2) [right=of z2,label=above:$x_2$,point];
            \node[red,very thick] (xp2)  [below=of x2,label=below:$x'_2$,point];

            \path[dashed] (w2) edge (wp2);
            \path[dashed] (z2) edge (zp2);

            \path (wp2) edge (zp2);
            \path (z2) edge (w2);
            \path (zp2) edge (xp);
            \path (x) edge (z2);

            \path[dashed] (w2) edge[bend right=80] (yp);

            \path[dashed,red,thick] (x2) edge (yp);
            \path[dashed] (x2) edge (xp2);
            \path (xp2) edge (yp);

            \path[red,thick] (zp) edge (xp2);
            \path (x2) edge (z);

            \path (zp2) edge (xp2);
            \path (x2) edge (z2);

        \end{tikzpicture}
        \caption{\label{fig:divp2d}}
    \end{subfigure}

    \begin{subfigure}[t]{0.4\linewidth}
        \center
        \begin{tikzpicture}[node distance =.7 cm and .7 cm]

            \dotnode[rtrb]{w}{}{below:$w$};
            \dotnode[rt]{z}{right = of w}{below:$z$};
            \dotnode[]{x}{right = of z}{below:$x$};
            \dotnode[rb]{y}{right = of x}{below:$y$};
            \path (z) edge (x);
            
            \path (w) edge (z);

            \path[bidirected] (x) edge[bend left=30] (y);
            \path[bidirected,red,thick] (w) edge[bend left=40] (y);

            \dotnode[rtrb]{w2}{below=of w}{below:$w_2$};
            \dotnode[rt]{z2}{right = of w2}{below:$z_2$};
            \dotnode[rb]{x2}{right = of z2}{below:$x_2$};
            \path[red,thick] (z) edge (x2);
            \path (x2) edge (y);
            \path (z2) edge (x);
            \path (z2) edge (x2);
            \path[bidirected] (x2) edge[bend left=30] (y);
            \path (w2) edge (z2);

            \path[bidirected] (w2) edge[bend right=80] (y);

            \path[bidirected,red,thick] (w) edge (z2);
            \path[bidirected] (w2) edge (z);

        \end{tikzpicture}

        \caption{\label{fig:divp2e}}
    \end{subfigure}%
    \begin{subfigure}[t]{0.55\linewidth}
        \center
        \begin{tikzpicture}

            \node[red,very thick]  (w) [right=of a,label=above:$w$,point];
            \node[red,very thick] (wp)  [below=of w,label=below:$w'$,point];

            \node[red,very thick]  (z) [right=of w,label=above:$z$,point];
            \node (zp)  [below=of z,label=below:$z'$,point];

            \node (x) [right=of z,label=above:$x$,point];
            \node  (xp)  [below=of x,label=below:$x'$,point];

            \node[red,very thick] (yp)  [right=of xp,label=right:$y'$,point];

            \path[dashed] (w) edge (wp);
            \path[dashed,red,thick] (z) edge (zp);
            \path[dashed] (x) edge (xp);

            \path (wp) edge (zp);
            \path (z) edge (w);
            \path (zp) edge (xp);
            \path (x) edge (z);

            \path[dashed] (x) edge (yp);

            \path[dashed,red,thick] (w) edge[bend left=70] (yp);

            \node[red,very thick]  (w2) [below=of wp,label=above:$w_2$,point];
            \node[red,very thick] (wp2)  [below=of w2,label=below:$w_2'$,point];

            \node[red,very thick]  (z2) [right=of w2,label=above:$z_2$,point];
            \node (zp2)  [below=of z2,label=below:$z_2'$,point];

            \node (x2) [right=of z2,label=above:$x_2$,point];
            \node[red,very thick]  (xp2)  [below=of x2,label=below:$x'_2$,point];

            \path[dashed,red,thick] (w2) edge (wp2);
            \path[dashed] (z2) edge (zp2);

            \path (wp2) edge (zp2);
            \path (z2) edge (w2);
            \path (zp2) edge (xp);
            \path (x) edge (z2);

            \path[dashed] (w2) edge[bend right=80] (yp);

            \path[dashed] (x2) edge (yp);
            \path[dashed] (x2) edge (xp2);
            \path (xp2) edge (yp);

            \path[red,thick] (zp) edge (xp2);
            \path (x2) edge (z);

            \path (zp2) edge (xp2);
            \path (x2) edge (z2);

            \path[dashed] (w) edge (zp2);
            \path[dashed,red,thick] (z2) edge (wp);
            \path[dashed] (z) edge (wp2);
            \path[dashed] (w2) edge (zp);

        \end{tikzpicture}
        \caption{\label{fig:divp2f}}
    \end{subfigure}

    \caption{Graphs used to illustrate elements of the proof of \cref{thm:divsat}}
    \label{fig:divp2}
\end{figure}

\newpage
\subsection{Instrumental Cutsets}

\unconditionedcutset*

\begin{proof}
Our proof mirrors the work of \cite{weihsDeterminantalGeneralizationsInstrumental2017}, except with the auxiliary flow graph instead of $G_{flow}$.
We construct the auxiliary flow graph $G_{aux}$. Condition 1 guarantees that $G_{aux}$ has a set of nonitersecting paths from $S^*$ to $T'\cup\{x'\}$, which means that $\Sigma_{s^*,T\cup\{x\}}$ is full rank (\cref{lemma:auxgvl}).
Then, condition 2 and 3 guarantees that there is no set of nonintersecting paths from $S^*$ to $T'\cup\{y'\}$ that does not go through the edge $x'$. Each set of paths from $S^*$ to $T'\cup\{x'\}$ can be extended
to a path from $S^*$ to $T'\cup\{y'\}$ by adding $x'\rightarrow y'$ to the path that ends at $x'$. Using \cref{lemma:auxgvl} we have 
$$
\det \Sigma_{S^*, T\cup\{y\}} = \lambda_{xy}\det \Sigma_{S^*, T\cup\{x\}}
    $$
which gives an equation for $\lambda_{xy}$ after division.

If edge from $w_i$ incoming to $y$ is known, then all treks to $y$ through $\lambda_{w_iy}$ can be observed with $\lambda_{w_iy}\Sigma_{S^*, T\cup\{x\}}$ (once again, due directly to the interpretation of determinants of minors in the covariance matrix as sets of nonintersecting paths due to \cref{lemma:auxgvl}), giving all paths not passing any of the known edges with:

$$\det \Sigma_{S^*, T\cup\{y\}}- \sum_{\lambda_{w_iy}\in \Lambda^*} \det\Sigma_{S^*,T\cup \{w_i\}}$$
This leads to the full statement of the formula, mirroring the original equation of \cref{thm:div}.

\end{proof}

\subsumesAVS*

\begin{proof}
First, neither gHTC nor AVS can identify $\lambda_{x_1y}$ in \cref{fig:cutsetiv}, but it is identifiable with IC.

Let $S$ be a set of auxiliary variables that satisfies the requirements of an AVS to a set of edges $X$ where $X\subseteq Pa(y)$.

Then there is a full flow between $S^*$ and $X$ (there are paths with no sided intersection), and since all $Pa(y)$ reachable from $S^*$ are in $X$, and $|X|=|S|$,  they correspond to a match-block, meaning that the minimal cutset will also have a matchblock, so by \cref{thm:cutsetsubset}, it satisfies the rules of \cref{thm:unconditionedcutset}.

Finally, AVS subsumes gHTC, so ICs subsume both.
\end{proof}

\begin{restatable}{theorem}{auxiliarydiv}
\label{thm:auxiliarydiv}
    \textbf{(Auxiliary tsIV)} 
    Let $G=(V,D,B)$ be a mixed graph, $w_0\rightarrow v \in G$, and suppose that the edges 
    $w_1\rightarrow v,...,w_l\rightarrow v \in G$ are known to be generically (rationally) identifiable.
Let $G_{aux}'$ be $G_{aux}$ with the edges $w_0'\rightarrow v',...,w_l'\rightarrow v'$ removed. 
Suppose there are sets $S\subset V$ and $T\subset V \setminus \{v,w_0\}$ such that $|S|=|T|+1=k$ and
\begin{enumerate}
    \item $De(v)\cap (T\cup \{v\}) = \emptyset$,
    \item the max-flow from $S$ to $T'\cup \{w'_0\}$ in $G'_{aux}$ equals $k$, and
    \item the max-flow from $S$ to $T'\cup \{v'\}$ in $G'_{aux}$ is $<k$,
\end{enumerate}
then $w_0\rightarrow v$ is rationally identifiable by the equation
$$
\lambda_{w_0v} = \frac{\left|\Sigma_{S,T\cup \{v\}}\right| - \sum_{i=1}^l \left|\Sigma_{S,T\cup \{w_i\}}\right|}{\left|\Sigma_{S,T\cup \{w_0\}}\right|}
$$
\end{restatable}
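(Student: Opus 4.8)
The plan is to transcribe the proof of \cref{thm:div} almost verbatim, replacing the flow graph $G_{flow}$ by the auxiliary flow graph $G_{aux}$ of \cref{def:auxflowgraph} and the Gessel--Viennot--Lindstr\"om correspondence of \cref{lemma:gvl} by its auxiliary counterpart \cref{lemma:auxgvl}. The proof of \cref{thm:unconditionedcutset} already carried out exactly this substitution, but under the stronger hypotheses $T\subseteq Pa(v)$ together with a half-trek restriction on $S$; here both are dropped and replaced by the descendant condition~1, so the only genuinely new checking is that each step survives an arbitrary non-descendant $T$. Throughout I use that \cref{lemma:auxgvl} writes $\det\Sigma_{A,B}$ (for $A\subseteq V\cup V^*$, $B\subseteq V$) as a signed sum over systems of $|A|$ vertex-disjoint directed paths from $A$ to $B$ in $G_{aux}$, and that the resulting rank-equals-maxflow statement (the auxiliary analogue of \cref{lemma:fullrank}, immediate from \cref{lemma:auxgvl}) lets me read genericity of (non)vanishing of such determinants straight off the flow values.

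For the denominator, condition~2 provides a flow of size $k$ from $S$ to $T'\cup\{w_0'\}$ in $G_{aux}$; since the edges removed to form $G'_{aux}$ feed only the $v$-sink, they are irrelevant to paths terminating in $T'\cup\{w_0'\}$, so $\Sigma_{S,T\cup\{w_0\}}$ has generic rank $k$ and $\det\Sigma_{S,T\cup\{w_0\}}\neq 0$ generically, legitimizing the division. For the numerator I would expand $\det\Sigma_{S,T\cup\{v\}}$ by \cref{lemma:auxgvl} and split the contributing vertex-disjoint path systems according to whether the path reaching the $v$-sink uses one of the removed edges $w_0'\rightarrow v',\dots,w_l'\rightarrow v'$. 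The systems that avoid all of them are precisely the size-$k$ vertex-disjoint systems of $G'_{aux}$, which by condition~3 cannot exist, so their signed sum is empty and hence $0$. For a system in the complementary class, deleting its final edge $w_i'\rightarrow v'$ is a sign-preserving, weight-$\lambda_{w_iv}$ bijection onto the size-$k$ systems ending at the $w_i$-sink (valid because $w_0\notin T$, and when some $w_i\in T$ both sides are empty), so that class contributes $\lambda_{w_iv}\det\Sigma_{S,T\cup\{w_i\}}$. Summing $i=0,\dots,l$ gives $\det\Sigma_{S,T\cup\{v\}}=\sum_{i=0}^{l}\lambda_{w_iv}\det\Sigma_{S,T\cup\{w_i\}}$; isolating the single unknown $\lambda_{w_0v}$ and substituting the known values $\lambda_{w_iv}\in\Lambda^*$ for $i\ge 1$ produces the displayed identity.

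The main obstacle is bookkeeping rather than a new idea, because \cref{lemma:auxgvl} has already absorbed the hard part: it certifies that path systems passing through the auxiliary vertices $V^*\cup V'^*$ correctly encode the minors of the auxiliary-augmented covariance, which is exactly where the subtraction of each $s\in S$'s known incoming parameters is accounted for. The one place that needs care is the edge-deletion bijection under the general $T$: one must confirm via condition~1 ($De(v)\cap(T\cup\{v\})=\emptyset$) that no path terminating at a vertex of $T$ can be rerouted through the $v$-sink or a descendant of $v$, so that the correspondence between the $v$-minor and the $w_i$-minors stays sign- and weight-consistent. This is the same role condition~1 plays in \cref{thm:div}, now verified without the crutch $T\subseteq Pa(v)$.
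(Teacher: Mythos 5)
Your proposal matches the paper's approach: the paper's proof of this theorem is literally the single line ``Identical to \cref{thm:unconditionedcutset}'', and that proof in turn mirrors the tsIV argument of \citet{weihsDeterminantalGeneralizationsInstrumental2017} with $G_{aux}$ substituted for $G_{flow}$ and \cref{lemma:auxgvl} substituted for \cref{lemma:gvl} --- exactly the substitution you carry out, only with more of the bookkeeping (the split of path systems by terminal edge, the sign-preserving bijection, the role of condition~1 for general $T$) made explicit. Your derivation correctly retains the $\lambda_{w_iv}$ coefficients in the sum $\sum_{i=1}^{l}\lambda_{w_iv}\det\Sigma_{S,T\cup\{w_i\}}$, which the displayed formula in the statement (inherited from the paper's transcription of \cref{thm:div}) omits; that is a defect of the statement, not of your proof.
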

\begin{proof}
    Identical to \cref{thm:unconditionedcutset}.
\end{proof}

\subsubsection{A Discussion of Conditonal IC Complexity}

\label{sec:cacnphard}

Many models previously only identifiable with conditional cAVs are identifiable with IC, 
such as the one in \cref{fig:cavtosubset}. The key here is that in many situations, the back paths can
be removed through edges identifiable in previous iterations of the algorithm, and there is no need to block
paths to parents of $y$, since the IC can simply use all parents of $y$ in the set.

This shows why the proof in \cref{thm:divsat} cannot be directly applied to IC. The structure shown in \cref{fig:basediv_clause} has its edge $\lambda_{az_1}$ and $\lambda_{bz_2}$ identifiable directly, and therefore
there is no need for conditioning at all - the edge is identifiable with standard IC.

However, we can use the structure in \cref{fig:cAVonly}, which cannot have its back paths removed to replace the original structure of a literal in the proof. The construction would be identical to the one given in \cref{thm:divsat}, but with 2 $x$ variables per clause, and each literal consisting of the structure in \cref{fig:cAVonly}. That way, we can show that both the auxiliary tsIV and a conditional version of IC would also be NP-hard to find.

\subsubsection{Algorithm for Instrumental Cutsets}

\begin{figure}
\center
    \begin{subfigure}[t]{0.27\textwidth}
        \center
    \begin{tikzpicture}[node distance=0.6cm]
        \dotnode[]{y}{}{below:y};
        \dotnode[b]{x2}{above =of y}{right:$x_2$};
        \dotnode[]{x3}{right=of x2}{right:$x_3$};
        \dotnode[b]{x1}{left=of x2}{right:$x_1$};   
        
        \dotnode[]{w1}{above=of x1,yshift=2em}{left:$w_1$};
        \dotnode[t]{w2}{above=of x3,yshift=-1em}{above:$w_2$};
        \dotnode[t]{z}{above=of x2}{above:$z$};
        \path (x1) edge (y);
        \path (x2) edge (y);
        \path (x3) edge (y);
    
        \path[bidirected] (x1) edge[bend right=30] (y);
        \path[bidirected] (x2) edge[bend right=10] (y);
        \path[bidirected] (x3) edge[bend left=30] (y);

        \path[bidirected] (w1) edge[bend right=60] (y);

        \path[red] (w1) edge (z);
        \path[red] (z) edge (w2);
        \path (w2) edge (x3);
        \path (w2) edge (x2);
        \path (z) edge (x1);
    
    \end{tikzpicture}
    \caption{\label{fig:cavtosubset}}
\end{subfigure}
\begin{subfigure}[t]{0.25\textwidth}
    \center
    \begin{tikzpicture}[node distance=0.6cm]
        \dotnode[]{y}{}{below:y};
        \dotnode[b]{x1}{above left=of y}{left:$x_1$};
        \dotnode[b]{x2}{above right=of y}{left:$x_2$};

        \dotnode[]{z1}{above=of x1,yshift=3.5em}{left:$z_1$};
        \dotnode[t]{z2}{above=of x2}{right:$z_2$};
        \dotnode[]{z3}{above=of z2}{above:$z_3$};
        \dotnode[t]{z4}{left=of z2}{right:$z_4$};

        \path[red] (x2) edge (y);
        \path (x1) edge (y);
        \path (z1) edge (x1);
        \path (z2) edge (x2);
        \path (z3) edge (z2);
        \path (z3) edge (z4);
        \path (z1) edge (z3);

        \path[bidirected] (x1) edge[bend right=30] (y);
        \path[bidirected] (x2) edge[bend left=30] (y);

        \path[bidirected] (z3) edge[bend left=90] (y);

    \end{tikzpicture}
    \caption{\label{fig:htrhard}}
\end{subfigure}
\begin{subfigure}[t]{0.2\textwidth}
    \center
    \begin{tikzpicture}[node distance=0.6cm]
        \dotnode[]{y}{}{below:y};
        \dotnode[]{x1}{above left=of y}{left:$x_1$};
        \dotnode[]{x2}{above right=of y}{above:$x_2$};

        \dotnode[]{z1}{above=of x1}{left:$z_1$};

        \path[red] (x2) edge (y);
        \path (x1) edge (y);
        \path (z1) edge (x1);
        \path (x1) edge (x2);

        \path[bidirected] (x1) edge[bend right=30] (y);
        \path[bidirected] (x2) edge[bend left=30] (y);

    \end{tikzpicture}
    \caption{\label{fig:cutmatchblock}}
\end{subfigure}
\begin{subfigure}[t]{0.23\linewidth}
    \center
    \begin{tikzpicture}[node distance =0.5 cm and 0.5 cm]
        \dotnode[]{y}{}{below:y};
        \dotnode[b]{x2}{above=of y}{left:$x_2$};
        
        \dotnode[]{x3}{right=of x2}{left:$x_3$};
        \dotnode[]{w}{above=of x2}{right:$w$};
        \dotnode[b]{x1}{left=of w}{left:$x_1$};

        \dotnode[t]{z1}{above left=of w}{left:$z_1$};
        \dotnode[t]{z2}{above right=of w}{left:$z_2$};

        \path (x1) edge (w);
        \path (z1) edge (x1);
        \path (z2) edge (w);
        \path (w) edge (x2);
        \path (w) edge (x3);
        \path[red] (x1) edge[bend right=20] (y);
        \path (x2) edge (y);
        \path (x3) edge (y);
        \path[bidirected] (x1) edge[bend right=40] (y);
        \path[bidirected] (x2) edge[bend left=20] (y);
        \path[bidirected] (w) edge[bend left=25] (y);

        \path[bidirected] (x3) edge[bend left=20] (y);
    \end{tikzpicture}
    \caption{\label{fig:cutsetparents}}
\end{subfigure}%
    
    \caption{In (a), $z$ a conditional IV for $\lambda_{x_1y}$ given $w_1$ and $w_2$. 
    However, we can use $w_1$ to solve for $\lambda_{w_1z}$ and $\lambda_{zw_2}$, at which point we can identify $\lambda_{x_1y}$ using \cref{thm:unconditionedcutset}.
    In (b), we show the example of an IC for $\lambda_{x_2y}$ where $S=\{z_2,z_4\}$ and $T=\{x_1\}$. Critically, the back-path through the bidirected edge to $y$ from $z_2$ is blocked
    by the path from $z_4$ to $x_1$. The na\"ive construction of a half-trek IC would have $S'=\{z_1,z_2\}$, which would consequently allow a path from $z_2 \leftarrow z_3 \leftrightarrow y$. 
    In (c) is demonstrated the reason a match-block is required even when the closest cutset is limited to parents of $y$. $z_1$ has closest cutset at $x'_1$, but does not have a match-block,
    since $x'_1$ also has a path to $x'_2$, which does not have a corresponding match. 
    Finally, (d) demonstrates why incoming edges to the cut $C$ are removed, with the example where an IC exists for $\lambda_{x_1y}$ but $x_2$ and $x_3$ both being descendants of $x_1$ if edges incident to $w$ are not removed.
    }
\end{figure}
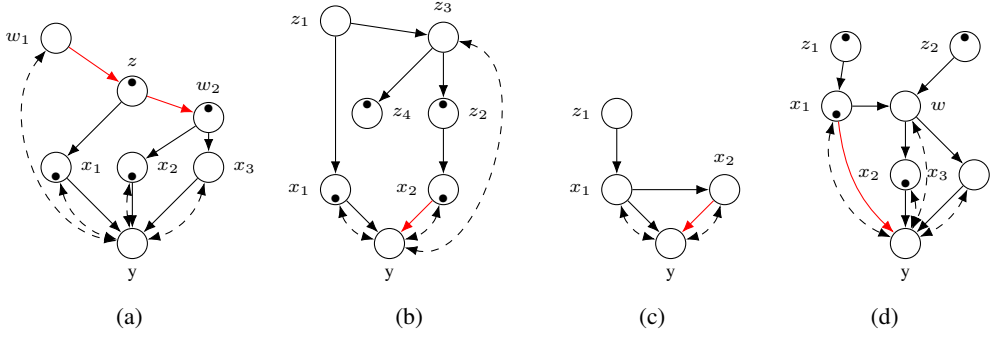
    
Before developing an algorithm for IC, we first show some helper lemmas which allow us to gain an intuition about the problem.

\begin{lemma}
    \label{lemma:cutmatchblock}
Given a DAG $G=(V,D)$, a set of sources $S$, and sinks $T$, and a vertex min-cut $C$ between $S$ and $T$ closest to $T$, 
then if there exists a match-block between $C_m\subseteq C$ and $T_m\subseteq T$, then $C_m = T_m$.
\end{lemma}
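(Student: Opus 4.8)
The plan is to prove the stronger statement $C_m\subseteq T$, from which $C_m=T_m$ is immediate: if $c_i\in C_m\subseteq T$ then $c_i$ is an element of $T$ reachable from $c_i$ (by the trivial path), so $c_i\in T_m$ by \cref{def:matchblock}; hence $C_m\subseteq T_m$, and equality follows from $|C_m|=|T_m|$. Throughout I would work in the graph $G'$ in which the match-block is actually computed (\cref{alg:ic}), namely $G$ with all edges entering $C$ deleted, so that each $c_i\in C_m$ acts as a source and the match-block hypotheses read $De_{G'}(c_i)\cap T\subseteq T_m$ for every $c_i\in C_m$, together with a system of $k=|C_m|=|T_m|$ vertex-disjoint $C_m$--$T_m$ paths in $G'$.

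First I would fix the characterization of the closest min-cut that I intend to exploit: writing $\mathrm{Src}(K)$ for the set of vertices reachable from $S$ along paths that avoid a cut $K$, the cut closest to $T$ is precisely the minimum cut maximizing $\mathrm{Src}(\cdot)$, i.e. the ``most downstream'' extreme of the minimum-cut lattice \citep{picardStructureAllMinimum1982}. I would also use the defining property of a cut: since $C$ separates $S$ from $T$, no vertex of $T$ is reachable from $S$ by a path avoiding $C$, and the fact that every vertex of a minimum cut lies on some flow path.

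The core of the argument is a competing-cut construction. Suppose for contradiction that some $c\in C_m$ has $c\notin T$, and set $C':=(C\setminus C_m)\cup T_m$. I would verify three claims. (i) $C'$ is an $S$--$T$ cut: given any $S$--$T$ path $p$, let $c_p$ be its last vertex in $C$; if $c_p\notin C_m$ then $c_p\in C'$, and if $c_p\in C_m$ the suffix of $p$ from $c_p$ to its endpoint $t\in T$ touches $C$ only at $c_p$, so it survives in $G'$ (only edges \emph{into} $C$ were deleted) and $t\in De_{G'}(c_p)\cap T\subseteq T_m\subseteq C'$. (ii) $C'$ is minimum: $|C'|\le|C\setminus C_m|+|T_m|=(|C|-k)+k=|C|$, and minimality of $C$ forces $|C'|=|C|$. (iii) $\mathrm{Src}(C)\subsetneq\mathrm{Src}(C')$: any $S$-path avoiding $C$ also avoids $C'$, since it avoids $C\setminus C_m$ and cannot meet the sinks $T_m\subseteq T$ (no $T$-vertex is $S$-reachable while avoiding $C$); moreover $c\in\mathrm{Src}(C')\setminus\mathrm{Src}(C)$, witnessed by the $S$-to-$c$ prefix of a flow path through $c$, which avoids $C\setminus\{c\}\supseteq C\setminus C_m$ and meets $T$ only at its (later) endpoint, hence avoids $T_m$. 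Then $\mathrm{Src}(C)\subsetneq\mathrm{Src}(C')$ contradicts the maximality of $\mathrm{Src}(C)$, so no such $c$ exists and $C_m\subseteq T$.

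The main obstacle is step (iii): I must make ``$C'$ is strictly closer to $T$ than $C$'' rigorous, which amounts to showing that the non-sink vertex $c$ genuinely joins the source side after the swap while nothing is lost from it. The clean way to discharge this is the source-side comparison above, relying on the two elementary facts that a cut blocks all $S$--$T$ reachability through $T$ and that a minimum-cut vertex always lies on a flow path; the lattice structure of minimum cuts \citep{picardStructureAllMinimum1982} then certifies that a strictly larger source side is incompatible with $C$ being the minimum cut closest to $T$.
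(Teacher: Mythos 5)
Your proposal is correct and follows essentially the same route as the paper: assume some $c\in C_m\setminus T$, swap $C_m$ for $T_m$ to obtain the competing cut $(C\setminus C_m)\cup T_m$ of the same size that is strictly closer to $T$ (contradiction), and then use the trivial self-reachability of each $c_i\in C_m\cap T$ to force $C_m\subseteq T_m$ and hence equality by cardinality. The only difference is one of rigor — the paper states the swap argument in two sentences, whereas you explicitly verify that the swapped set is a cut, is minimum, and has a strictly larger source side via the Picard--Queyranne characterization.
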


\begin{proof}
Suppose that $\exists c_i \in C_m\setminus T$. We can then replace $C_m$ in $C$ with $T_m$, to create a closer min-cut of the same size as $C$ (a contradiction), 
because $|C_m|=|T_m|$ and $T_m$ blocks all paths outgoing from $C_m$ to $T$. Furthermore, if $\exists c_i \in C_m \setminus T_m$ where $c_i\in T$, 
then it is not part of a match-block, since it has an unblocked path to $c_i \in T$ - another contradiction. 
\end{proof}

This means that a match-block between $C$ and $T$ can be described with a single set $C_m \subseteq T$, which describes both start and endpoints of the matchblock.
Note, however, that we still need to find a full match-block, as shown in the example in \cref{fig:cutmatchblock}, where even when $T=Pa(y)$, no match-block exists, because $x_1$ has a path to $x_2$.

\begin{lemma}
    \label{lemma:flowdecomposition}
Given a DAG $G$, a set of sources $S$, sinks $T$, and a vertex min-cut $C$ between $S$ and $T$, there exist max flow $\mathcal{F}_C$ from $S$ to $T$ which can be decomposed 
into max-flows $\mathcal{F}_S$ from $S$ to $C$ 
and $\mathcal{F}_T$ from $C$ to $T$, with $|\mathcal{F}_S|=|\mathcal{F}_T|=|\mathcal{F}_C|=|C|$. 
\end{lemma}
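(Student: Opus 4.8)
The plan is to invoke the vertex-capacity form of the max-flow/min-cut theorem (Menger's theorem) to realize a max flow as a family of vertex-disjoint paths, and then to cut each path at the point where it meets $C$.

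First I would observe that, with unit vertex capacities, the value of a max flow $\mathcal{F}_C$ from $S$ to $T$ equals the size of a minimum vertex cut, which by hypothesis is $|C|$. Because $G$ is a DAG and the capacities are integral and equal to $1$, any integral max flow decomposes into exactly $|C|$ vertex-disjoint directed paths $P_1,\dots,P_{|C|}$ from $S$ to $T$, with no cycles to absorb flow. I would take this family as $\mathcal{F}_C$. I would also make sure at this point that the flow formalism matches the paper's convention, under which a unit-vertex-capacity flow of value $k$ is literally a set of $k$ vertex-disjoint paths, so that the word ``decomposition'' needs no separate justification.

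Next I would argue that each $P_i$ meets $C$ in exactly one vertex. Since $C$ is a cut, every $S$-$T$ path, and in particular each $P_i$, contains at least one vertex of $C$. The paths are vertex-disjoint, so the vertices of $C$ lying on distinct paths are themselves distinct; hence the paths collectively use at least $|C|$ vertices of $C$, at least one per path. As $C$ has only $|C|$ vertices, the pigeonhole principle forces each $P_i$ to contain exactly one vertex $c_i\in C$, and the assignment $i\mapsto c_i$ is a bijection onto $C$.

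Finally I would split. Writing each $P_i$ as the concatenation of its prefix $P_i^S$ from its source in $S$ to $c_i$ and its suffix $P_i^T$ from $c_i$ to its sink in $T$, I set $\mathcal{F}_S=\{P_1^S,\dots,P_{|C|}^S\}$ and $\mathcal{F}_T=\{P_1^T,\dots,P_{|C|}^T\}$. Vertex-disjointness of the $P_i$ is inherited by the prefixes and, separately, by the suffixes, so $\mathcal{F}_S$ is a flow from $S$ to $C$ and $\mathcal{F}_T$ a flow from $C$ to $T$, each of value $|C|$; each is maximal because a flow into (respectively, out of) the $|C|$ unit-capacity vertices of $C$ cannot exceed $|C|$. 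The only delicate step is the counting argument guaranteeing a single crossing per path, which would fail without both vertex-disjointness and the exact equality $|\mathcal{F}_C|=|C|$; everything else is routine bookkeeping.
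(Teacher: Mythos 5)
Your proof is correct. The paper itself dispenses with this lemma in a single line (``This is by definition of a vertex min-cut''), so your argument --- decomposing the unit-vertex-capacity max flow into $|C|$ vertex-disjoint paths, using the pigeonhole principle to show each path meets $C$ in exactly one vertex, and splitting each path there --- is simply the standard justification that the authors take for granted, carried out in full; the one step genuinely worth writing down is exactly the one you flag, namely that vertex-disjointness together with $|\mathcal{F}_C|=|C|$ forces a single crossing per path.
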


\begin{proof}
This is by defintion of a vertex min-cut.
\end{proof}

\begin{lemma}
\label{lemma:nonblockflow}
Given a DAG $G$, a set of sources $S$, sinks $T$, if there is a vertex max-flow $\mathcal{F}$ of size $k$ between $S$ and $T$, with nonzero-flow start and endpoints $S_f\subseteq S$, $T_f\subseteq T$ respectively, and $S_m\subseteq S_f,T_m\subseteq T$ consitute a match-block,
then $\mathcal{F}$ can be decomposed into $\mathcal{F}^-$ between $S^-=S_f\setminus  S_m$ and $T^- = T_f\setminus T_m$ of size $k-|S_m|$, and a full flow $\mathcal{F}_m$ between $S_m$ and $T_m$.
\end{lemma}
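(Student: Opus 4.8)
The plan is to prove the decomposition directly by partitioning the given max-flow $\mathcal{F}$ according to where each of its flow-paths begins. Since $G$ is a DAG and every vertex has capacity $1$, the integral flow $\mathcal{F}$ decomposes into a collection of vertex-disjoint directed paths, one originating at each $s_i \in S_f$ and terminating at a distinct $t_j \in T_f$ (distinctness of endpoints holds because each sink's edge into the super-sink has capacity $1$). First I would split these paths into two groups: let $\mathcal{F}_m$ be the sub-collection of paths whose start vertex lies in $S_m$, and let $\mathcal{F}^-$ be all remaining paths. By construction $\mathcal{F}^-$ begins exactly on $S_f\setminus S_m = S^-$, and $|\mathcal{F}_m| = |S_m|$, since every node of $S_m\subseteq S_f$ carries exactly one unit of flow.

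The crucial step is to show that $\mathcal{F}_m$ is in fact a full flow between $S_m$ and $T_m$. Take any path $p\in\mathcal{F}_m$ starting at $s_i\in S_m$ and terminating at some $t\in T_f$. Because $p$ is a directed path, $t$ is reachable from $s_i$ in $G$; the match-block property of \cref{def:matchblock} then forces $t\in T_m$. Hence every one of the $|S_m|$ paths of $\mathcal{F}_m$ ends in $T_m$, and since these endpoints are pairwise distinct (vertex capacity $1$ at the sinks) while $|T_m|=|S_m|$, the set of endpoints is exactly $T_m$. Thus $\mathcal{F}_m$ realizes $|S_m|=|T_m|$ vertex-disjoint paths carrying $S_m$ onto all of $T_m$, which is precisely a full flow.

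Finally, since each $t_j\in T_m$ is consumed (as a capacity-$1$ terminal vertex) by a path of $\mathcal{F}_m$, no path of $\mathcal{F}^-$ can route through any vertex of $T_m$, and symmetrically no $\mathcal{F}^-$ path can route through a vertex of $S_m$; hence $\mathcal{F}^-$ and $\mathcal{F}_m$ are genuinely disjoint, and $\mathcal{F}^-$ is a flow from $S^-$ to $T_f\setminus T_m=T^-$ of size $k-|S_m|$. I expect the only delicate point to be the counting argument that the $S_m$-paths cover \emph{all} of $T_m$ rather than a proper subset: this is exactly where both the cardinality equality $|S_m|=|T_m|$ from the match-block definition and the capacity-$1$ distinctness of endpoints are needed, while the reachability clause of the match-block is what prevents any $S_m$-path from leaking into $T\setminus T_m$.
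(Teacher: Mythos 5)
Your proof is correct and follows essentially the same route as the paper's: partition the flow-paths of $\mathcal{F}$ by whether they originate in $S_m$, use the reachability clause of the match-block to force every $S_m$-path to terminate in $T_m$, and then use $|S_m|=|T_m|$ together with the capacity-one distinctness of endpoints to conclude that these paths saturate $T_m$ exactly, leaving $\mathcal{F}^-$ as a flow from $S^-$ to $T^-$. The only cosmetic difference is that you argue the covering of $T_m$ by direct counting where the paper phrases it as a contradiction (an $S^-$-path landing in $T_m$ would leave some $s_i\in S_m$ unmatched); your version is, if anything, slightly cleaner.
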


\begin{proof}
We know that $\mathcal{F}_m$ is a flow of size $|S_m|=|T_m|$. By the definition of match-block, we know that all descendants of $S_m$ that are in $T$ are also in $T_m$. This means that any path starting from $S_f$ that crosses
the descendants of $S_m$ must end on $T_m$. Suppose that $\exists s_i \in S^-$ that is matched with $t_i \in T_m$. Consequently, one of the $S_m$ cannot be matched, a contradiction, since $S_m$ \textit{all} have nonzero flow in $\mathcal{F}$.
We can therefore conclude that $\mathcal{F}$ can be decomposed into a full flow from $S_m$ to $T_m$, contained entirely in the descendants of $S_m$ ($\mathcal{F}_m$), 
and a flow $\mathcal{F}^-$ which does not cross the descendants of $S_m$, from $S^-$ to $T^-$. Since the flows can be decomposed into a flow through the match-block, and a flow avoiding the match-block, we know that $|\mathcal{F}| = |\mathcal{F}_m| + |\mathcal{F}^-|$, which completes the proof (remember $|\mathcal{F}_m|=|S_m|$ and $|\mathcal{F}|=k$).

\end{proof}

\begin{lemma}
\label{lemma:blockcut}
Given a DAG $G$, a set of source nodes $S$, sinks $T$, and a node $x\in An(T), x\notin De(S)$, if there does not exist a max-flow $\mathcal{F}$ from $S$ to $T$, 
and a path $p$ from $x$ to any $t_i\in T$ such that $p$ doesn't cross any flow from $\mathcal{F}$,
then there exists a closest min-cut $C$ between $S$ and $T$ where at least one element $c_i\in C$, such that $c_i\notin S$, or all paths from $x$ cross $S$.
\end{lemma}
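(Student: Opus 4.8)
The plan is to argue the contrapositive: assuming the conclusion fails, I will exhibit a max-flow $\mathcal{F}$ together with a path $p$ from $x$ to $T$ that crosses no flow of $\mathcal{F}$, which is exactly the pair the hypothesis forbids. Negating the (disjunctive) conclusion means assuming both that (i) the closest min-cut $C$ to $T$ --- unique and computable by Picard--Queyranne \citep{picardStructureAllMinimum1982} --- satisfies $C\subseteq S$, and (ii) some path $p_x$ from $x$ to a sink $t\in T$ avoids $S$. The case $x\in S$ is immediate, since then every path out of $x$ meets $S$ at $x$ itself, so the second disjunct of the conclusion holds and there is nothing to prove; hence I may assume $x\notin S$, and as $C\subseteq S$ also $x\notin C$.

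Next I reduce the goal to a single connectivity statement. Since $C$ is an $S$--$T$ cut of size $k$ and the $S$--$T$ max-flow equals $k$, the max-flow from $C$ to $T$ is also $k$, and by \cref{lemma:flowdecomposition} a max-flow routes $k$ vertex-disjoint paths through all of $C$. Because $C\subseteq S$, each $c_i\in C$ is already a source, so any family of $k$ vertex-disjoint $C$--$T$ paths is itself a max-flow from $S$ to $T$. Therefore it is enough to find $k$ vertex-disjoint $C$--$T$ paths that are jointly disjoint from one $x$--$T$ path; equivalently, a system of $k+1$ vertex-disjoint paths from the enlarged source set $C\cup\{x\}$ to $T$, one rooted at each $c_i$ and one rooted at $x$.

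The heart of the argument --- and the step I expect to be the main obstacle --- is to show that the minimum vertex cut separating $C\cup\{x\}$ from $T$ equals $k+1$, after which Menger's theorem supplies the required paths. Adding the single source $x$ cannot raise the cut by more than one, so it suffices to rule out that it remains $k$. Suppose toward a contradiction that some $C^\ast$ with $|C^\ast|=k$ separates $C\cup\{x\}$ from $T$. Then $C^\ast$ separates $C$ from $T$, and since every $S$--$T$ path meets $C$ and its continuation to $T$ must meet $C^\ast$, the set $C^\ast$ is itself an $S$--$T$ cut of size $k$, hence a minimum cut. It differs from $C$, because $p_x$ witnesses that $C$ does not separate $x$ from $T$ whereas $C^\ast$ does. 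Finally, $C^\ast$ separating $C$ from $T$ forces all of $C$ to lie weakly on the source side of $C^\ast$, so the source side of $C^\ast$ strictly contains that of $C$ (it additionally contains $x$, which reaches $T$ avoiding $C$). Thus $C^\ast$ is a minimum cut strictly closer to $T$ than $C$, contradicting the choice of $C$ as the closest min-cut.

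Consequently the cut separating $C\cup\{x\}$ from $T$ has size $k+1$, and Menger yields $k+1$ vertex-disjoint $C\cup\{x\}$--$T$ paths; since there are exactly $k+1$ sources, one path starts at $x$ and one at each $c_i\in C$. The $k$ paths rooted in $C$ form the desired max-flow $\mathcal{F}$ from $S$ to $T$ (here $x\notin De(S)$ is what lets me treat $x$ as an independent source, not a vertex already carried by the $S$--$T$ flow), and the remaining path from $x$ is vertex-disjoint from $\mathcal{F}$, hence crosses no flow. This $(\mathcal{F},p)$ pair contradicts the hypothesis, establishing the contrapositive. The only delicate point is the final comparison in the previous paragraph, i.e.\ formalizing ``$C^\ast$ separates $C$ from $T$ $\Rightarrow$ $C^\ast$ lies strictly on the $T$-side of $C$'' in terms of source-side sets of the min-cut lattice.
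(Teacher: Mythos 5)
Your argument is correct, but it reaches the conclusion by a different route than the paper. The paper proves the lemma directly: it augments the source set to $S\cup\{x\}$, notes that a max-flow of size $k+1$ from $S\cup\{x\}$ to $T$ would decompose into a size-$k$ flow from $S$ plus a vertex-disjoint path from $x$ (exactly the pair the hypothesis forbids), concludes the augmented max-flow is still $k$, and then reads off a size-$k$ cut that must contain a vertex of $x$'s $S$-avoiding path and hence a vertex outside $S$. You instead argue the contrapositive and augment the \emph{closest min-cut} $C\subseteq S$ rather than $S$ itself, ruling out a size-$k$ separator $C^{*}$ of $C\cup\{x\}$ via the extremal position of $C$ in the min-cut lattice, and then invoking Menger to produce the forbidden pair. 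The ``delicate point'' you flag does go through: a size-$k$ cut meets each of the $k$ vertex-disjoint flow paths exactly once, so ``$C^{*}$ separates $C$ from $T$'' places $C^{*}$'s vertex at or after $C$'s on every flow path, while the closest-to-$T$ min-cut is the lattice maximum and so has its vertex at or after every other min-cut's; the two inequalities force $C^{*}=C$, contradicting the fact that $p_x$ avoids $C$ but is separated by $C^{*}$ (and $x\notin C^{*}$, since otherwise $C^{*}\setminus\{x\}$ would be a smaller $S$--$T$ cut because $x\notin De(S)$). What your version buys is that it genuinely establishes the assertion about the \emph{closest} min-cut, which is what the lemma states and what the application inside \cref{thm:cutsetsubset} needs, and it makes explicit the lattice fact the paper's terse direct argument leaves implicit; what the paper's version buys is brevity and a concrete exhibition of the out-of-$S$ cut vertex sitting on $x$'s path.
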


\begin{proof}
Suppose that the max-flow $\mathcal{F}$ is of size $k$. We now take a new max-flow $\mathcal{F}'$ from $S\cup\{x\}$ to $T$. If this flow were of size $k+1$, we could decompose it into a flow of size $k$ between $S$ and $T$, and a flow of size 1
between $x$ and $T$ (since we already know the max-flow between $S$ and $T$ is $k$). We can construct a path $p$ that contradicts the theorem's statement using the flow from $x$ to $T$. This means that $\mathcal{F}'$ must be of size $k$.

We therefore have $|S\cup \{x\}| > k$ and $|T|>k$, meaning that there is a bottleneck of size $k$ between the two sets (ie, a set of nodes forming a min-cut). All paths from $x$ to $T$ must cross elements of a closest min-cut $C$.

If there exists a path from $x$ to $T$ not crossing $S$, the closest-min-cut of the path must be in $De(S)$, since otherwise the full min-cut would be $k+1$, giving a flow of $k+1$, a contradiction.

We therefore know that there exists a cut $C$ of size $k$ contained in $De(S)$, which cuts $S\cup\{x\}$ from $T$. The same cut therefore must cut $S$ from $T$ - and if $x$ has a path to $T$ not crossing $S$, an element of $C$ must be along the path, meaning
that $c_i\in C$, but $c_i\notin S$.
\end{proof}

\cutsetsubset*

\begin{proof}

$\Rightarrow$: Suppose that a match-block $C_m=T_m$ exists between $C$ and $Pa(y)$ (see \cref{lemma:cutmatchblock}) when incident edges to $C$ are removed. We show that we can construct sets $S_f,T_f$ satisfying conditions 1 and 2 for all $x\in T_m$.
Let $\mathcal{F}$ be a (vertex) max-flow from $S$ to $Pa(y)$ in the graph with all edges present. Next, let $S_f\subseteq S,T_f\subseteq Pa(y)\setminus\{x\}$ be the nodes of $S$ and $T$ respectively with nonzero flow in $\mathcal{F}$. We define $k=|S_f|$, automatically satisfying condition 1.
We can decompose $\mathcal{F}$ into a max-flow of size $k=|C|$ from $S_f$ to $C$ ($\mathcal{F}_S$) and from $C$ to $T_f\cup \{x\}$ ($\mathcal{F}_T$) using \cref{lemma:flowdecomposition}. 
Finally, we define $C^-_m = C\setminus C_m$ and $T^-_m=T_f\setminus C_m$. 

With $x\rightarrow y$ removed and $x\in T_m$ removed from target nodes, we have one of the $c_i\in C_m$ not matched with any element of $T_m\cup\{y\}$, by definition of match-block over $T_m$, making the maximum flow between $C_m$ and $T_m\cup\{y\}\setminus\{x\}$ be $|C_m|-1$ (\cref{lemma:nonblockflow}).
There remains a full flow between $C^-_m$ and $T^-_m$, so all of the other $C$ elements have flow through them, making a path from $c_i$ not able to pass over $c_j\ne c_i$. 
This makes the full max-flow from $C$ to $T_f$ be $k-1$ by combining the match-blocked paths and non-match-blocked paths. 
Finally, all paths from $S_f$ to $y$ must cross $C$, so we have satisfied condition 2.

$\Leftarrow$: We now show that if a match-block between $C$ and $T_f\cup\{x\}$ does not exist containing $x$, then either condition 1 or 2 is violated. 
We can find sets $S_f,T_f$ satisfying condition 1 (otherwise $S_f$ has no path to $x$, and no match-block exists to $x$). The question, then, is whether any such sets also satisfy condition $2$.

Given any candidate set $S_f,T_f$ satisfying condition 1, we have the closest min-cut $C$ between $S_f$ and $T_f\cup\{x\}$. Suppose for the sake of contradiction that condition 2 is also satisfied.  
That is, all flows through $C$ to $T_f\cup\{x\}$ must pass through $x\rightarrow y$. This means that $x$ is part of the closest min-cut, ie, $x\in C$. By the theorem's conditions, all edges incoming to $c_i\in C$ were removed,
so there are no edges incoming to $x$. If $x$ has no path to any $t_i\in T_f$, then $x$ is match-blocked with itself (ie, can use $S_f=\{x\},T_f=\emptyset$), a contradiction. 
We know that the flow between $C\setminus \{x\}$ and $T_f$ must be $k-1$ (if it were $k$, we would not have condition 2 satisfied).
If there exists a flow between $C\setminus \{x\}$ and $T_f$ that does not block a path from $x$ to some $t_i \in T_f$, then condition 2 is satisfied by appending this path to the flow, constructing a new flow of size $k$, a contradiction.
Finally, in the case when all paths from $x$ to $T_f$ are blocked by all flows between $C\setminus \{x\}$ and $T_f$ we invoke \cref{lemma:blockcut} to claim that there exists a closer min-cut than $C$, a contradiction.

\end{proof}

\begin{algorithm}
    \caption{\textsc{ICID} recursively applies IC to all nodes}
    \label{alg:icid}
    \begin{algorithmic}
        \Function{ICID}{G}
        \State $\Lambda^* \leftarrow \emptyset$
        \Do
        \ForAll{$y\in G$}
        \State $(\_,\_,T_m) \leftarrow \textsc{IC}(G,y,\Lambda^*)$
        \State $\Lambda^* \leftarrow \Lambda^*\cup \{\lambda_{ty} | t\in T_m\}$
        \EndFor
        \DoWhile {at least one parameter was identified in this iteration}
        \State\Return $\Lambda^*$
        \EndFunction
    \end{algorithmic}
\end{algorithm}

\end{document}